\documentclass{article} 
\usepackage{iclr2027_conference,times}

\usepackage{amsmath,amsfonts,bm}

\def\eqref#1{equation~\ref{#1}}

\def\1{\bm{1}}

\DeclareMathAlphabet{\mathsfit}{\encodingdefault}{\sfdefault}{m}{sl}
\SetMathAlphabet{\mathsfit}{bold}{\encodingdefault}{\sfdefault}{bx}{n}

\newcommand{\E}{\mathbb{E}}

\newcommand{\KL}{D_{\mathrm{KL}}}

\usepackage{hyperref}
\usepackage{url}

\usepackage{amsmath,amsthm,mathtools}
\usepackage{newtxtext,newtxmath}
\usepackage{booktabs,tabularx,longtable}
\usepackage{algorithm,algpseudocode}
\usepackage{enumitem,microtype,hyperref,url,etoolbox}
\hypersetup{hidelinks}
\newtheorem{theorem}{Theorem}
\newtheorem{lemma}[theorem]{Lemma}
\newtheorem{proposition}[theorem]{Proposition}
\newtheorem{corollary}[theorem]{Corollary}
\theoremstyle{definition}
\newtheorem{definition}[theorem]{Definition}

\algrenewcommand\algorithmicrequire{\textbf{Require}}
\algrenewcommand\alglinenumber[1]{\footnotesize #1.}
\newcommand{\PaperURL}[1]{\href{#1}{Online source}}
\newcommand{\TV}{\operatorname{TV}}
\newcommand{\cost}{\operatorname{Cost}}
\newcommand{\lref}{\ell^{\mathrm{ref}}}
\newcommand{\hresp}{h_{\mathrm{resp}}}
\newcommand{\Jna}{J_A^{\mathrm{nat}}}
\newcommand{\Jra}{J_A^{\mathrm{ref}}}
\newcommand{\Jsel}{J^{\mathrm{sel}}}
\newcommand{\Jav}{J_\mu^{\mathrm{av}}}
\newcommand{\Jal}{J_\mu^{\mathrm{al}}}
\newcommand{\Jcl}{J_\mu^{\mathrm{cl}}}
\newcommand{\Jpa}{J_\mu^{\mathrm{path}}}
\newcommand{\Jrm}{J_\mu^{\mathrm{ref}}}
\newcommand{\Jnm}{J_\mu^{\mathrm{nat}}}

\title{Counterfactual Regret for Reflective Agents with Lean-Verified Guarantees}

\author{Zhongnian Li, Shi Xu, Mengting Xu, Peng Ying, Xinzheng Xu\thanks{Corresponding author: \texttt{xxzheng@cumt.edu.cn}}\\
School of Computer Science and Technology /
School of Artificial Intelligence\\
China University of Mining and Technology\\
Xuzhou, China\\
\texttt{\{zhongnianli,xushi,mengtingxu,pengying,xxzheng\}@cumt.edu.cn}}

\iclrfinalcopy

\begin{document}

\maketitle
\pagestyle{plain}

\begin{abstract}
Reflective language agents reuse feedback and memory to adapt across tasks without updating model weights. However, existing guarantees for policy selection and action generation do not by themselves establish when local reflection yields long-term performance guarantees for the complete agent under resource constraints. In this paper, we address this gap by deriving sufficient conditions for sublinear counterfactual regret from local generation and deployment guarantees, while retaining the native controller. Specifically, we establish a resource-sensitive transfer that bounds full-state deployment discrepancies along actual execution paths and preserves the own-history laws of both learner and comparator. Besides, local generation reliability yields a cumulative generation opportunity bound, linking policy acquisition to long-term performance through an explicit quality, coverage, and hard-budget tradeoff. To verify these guarantees, we formalize the conditional proof chain for all thirteen numbered theoretical results in Lean, including a native instance with repeated policy growth and imperfect validation. 
The complete Lean proof artifacts are available in an anonymous repository\footnote{\url{https://github.com/subeidiyixiaoqianfeng/counterfactual-regret-lean-proofs}} for verification and reproduction.
\end{abstract}

\section{Introduction}
Reflective language agents use feedback to revise decisions and retain experience for subsequent tasks. With frozen model weights, linguistic reflection and memory provide a practical means of adaptation during execution. Reflexion and Self-Refine demonstrate benefits in coding, reasoning, and iterative task solving~\citep{shinn2023reflexion,madaan2023selfrefine}. Reusable instructions can carry lessons from earlier outcomes into later decisions, making reflection valuable for systems that must adapt through interaction. These capabilities motivate understanding how local feedback-driven changes contribute to long-term agent performance.

However, existing guarantees for policy selection and action generation do not by themselves establish sustained performance gains for the complete reflective process. With frozen model weights, useful reflection must produce reusable policies and influence subsequent execution within available resources. These effects are coupled because current decisions shape future feedback and policy availability. Understanding this coupling requires explicit conditions linking local adaptation to long-term performance. We study this gap through counterfactual regret against resource-feasible procedures on their own histories, following the policy-regret comparison principle~\citep{arora2012}. Rather than requiring every reflection to improve performance, we identify generation and deployment conditions under which average excess loss vanishes while retaining the native controller.

\paragraph{Contributions.}
We develop a conditional theory connecting local reflection mechanisms to a complete native-agent regret guarantee. Theorem~\ref{thm:local} retains the native controller and identifies conditions for sublinear regret through three contributions.
\begin{enumerate}[leftmargin=*,nosep]
\item \emph{Resource-sensitive native transfer.} Full-state deployment discrepancies are bounded along actual resource paths, preserving both counterfactual marginals and connecting hard budgets to cumulative loss.
\item \emph{Locally derived generation bounds.} Local reliability connects policy acquisition to long-term performance, with an explicit quality, coverage and hard-budget tradeoff.
\item \emph{A Lean-verified conditional proof chain.} All thirteen numbered results are formalized, including a concrete native instance that derives compatible interfaces and a sublinear bound from its mechanisms.
\end{enumerate}

\paragraph{Related work.}
Growing-expert tracking provides guarantees for expanding pools~\citep{mourtada2017}, while generative action sets jointly optimize creation and selection through a context-conditioned oracle and quadratic mismatch model~\citep{xu2026gas}. We connect local generation reliability to complete reflective executions through a generation opportunity bound. Its composition with deployment stability 
captures how obtaining useful policies and executing them jointly affect regret.

Policy regret formalizes action-responsive comparisons~\citep{arora2012}, with recent Markov-game guarantees under function-approximation conditions~\citep{nguyentang2025}. We develop a resource-sensitive transfer for a retained native controller. Full-state perturbations accumulate along actual execution paths, with hard budgets controlling their cumulative effect. The construction preserves both native marginals while connecting local deployment conditions to own-history regret.

\citet{merten2018} established Coq-verified multiplicative weights and composed distributed-system guarantees. Our Lean development connects growing selection, local generation reliability, deployment stability, and completed-task correspondence through explicit interfaces, supporting reproducible checking of the composed no-regret guarantee. 

\section{Model and reference algorithm}
\label{sec:model}
We define the complete reflective execution before introducing a reference process whose selection guarantee can be transferred to it.
\subsection{Histories, policies, and counterfactual regret}
Use task indices $t=0,\ldots,T-1$, internal-operation indices $s\ge0$, and reference-slot indices $r=0,\ldots,R$. These clocks differ. All logarithms are natural, $\mathbf1\{E\}$ is the indicator of event $E$, and $\langle u,v\rangle=\sum_i u(i)v(i)$. Appendix~\ref{app:notation} collects notation and its Lean correspondence.

\begin{definition}[Legal history and operations]
\label{def:history}
Fix measurable operation and result spaces $\mathsf U,\mathsf O$, an initial-history set $\mathcal H^{\rm op}_0$, and a measurable admissibility relation $\mathsf E(h,u,o)$. Define $\mathcal H^{\rm op}_{s+1}=\{h\mathbin{\|}(u,o)\mid h\in\mathcal H^{\rm op}_s,\ \mathsf E(h,u,o)\}$ and $\mathcal H^{\rm op}=\bigsqcup_{s\ge0}\mathcal H^{\rm op}_s$, assuming each level measurable. Here $\|$ appends a completed operation and result. A measurable legal-operation set $\mathsf U(h)$ restricts the next control. Histories record contexts, primitive actions, structured outcomes, memory, versioned registrations, and resource counters. Execution can name only an already registered executable identity. Legal prefixes need not have positive probability. Hard budgets impose~\eqref{eq:hard}.
\end{definition}

A history records what has happened. An execution policy specifies the distribution of the next primitive action, given the current context and memory.

\begin{definition}[Execution policy and card identity]
\label{def:card}
Let $\mathsf X,\mathsf M,\mathsf A$ be measurable context, memory, and primitive-action spaces. Identity $i$ names an immutable card program and interpreter version, inducing a probability kernel $\pi_i(da\mid x,m)$ on $\mathsf A$. For every measurable $D\subseteq\mathsf A$, $\pi_i(D\mid x,m)$ is measurable in $(x,m)$, and $\pi_i(\mathsf A\mid x,m)=1$. Selecting $i$ chooses a kernel, whereas sampling $a$ executes an action. Memory changes may alter its output law without changing identity. Program or interpreter changes require a new identity.
\end{definition}

A meta-policy governs the complete procedure, including when to create cards and which existing card to execute. It is therefore more than a selector over a fixed library.

\begin{definition}[Meta-policy and own-history law]
\label{def:metapolicy}
A meta-policy $\mu(du\mid h)$ is a measurable probability kernel on operations with $\mu(\mathsf U(h)\mid h)=1$. Generation, validation, memory updates, and execution are distinct operations. The history index allows time-dependent control. Composing this kernel with environment and runtime kernels, assuming internal operations terminate, induces task records $Z_t\in\mathcal Z_t$. Here $Z_0$ is the prescribed initial record and $Z_{t+1}$ completes task $t$. The task history is $H_t=(Z_0,\ldots,Z_t)\in\mathcal H_t^{\rm task}=\prod_{u=0}^t\mathcal Z_u$. Its recursively generated law is $\mathbb P_\mu$. A nonempty class $\mathcal M$ contains complete procedures, not distributions over current cards. The native learner $A$ is another procedure and need not belong to $\mathcal M$.
\end{definition}

The \emph{native controller} is the operation-selection kernel $\mu_A(du\mid h)$ of procedure $A$, with $\mu_A(\mathsf U(h)\mid h)=1$. Operation-result, environment, and runtime kernels additionally determine $\mathbb P_A$. Retaining this controller means that it need not execute the reference algorithm. Decisions may change across histories, and transfer conditions do not hold automatically.

Fix a common measurable loss scorer $\sigma_t$ on $\mathcal H_{t+1}^{\rm task}$ and set $Y_t=\sigma_t(H_{t+1})\in[0,1]$, with smaller values preferred. This scalar evaluates the structured output, rather than replacing it. Define
\begin{equation}
 L_T^A=\sum_{t<T}Y_t^A,\qquad J_T(A)=\E_{\mathbb P_A}L_T^A,\qquad
 R_T(A,\mathcal M)=J_T(A)-\inf_{\mu\in\mathcal M}J_T(\mu).
 \label{eq:objective}
\end{equation}
The expectation includes environment, generation, validation, runtime, logging, and private control randomness present in the specified execution. A deterministic environment is permitted. The native model may respond causally to past actions. Oblivious loss tables are required only in the benchmark of Section~\ref{sec:benchmark}.

For a measurable trajectory map $r_A$ and source law $\Gamma$, $((r_A)_\#\Gamma)(D)=\Gamma(r_A^{-1}(D))$ for every measurable event $D$. The subscript $\#$ denotes \emph{measure pushforward}, so $(r_A)_\#\Gamma=\mathbb P_A$ fixes the native trajectory law.

\begin{proposition}[Well-defined native objective]
\label{prop:wellposed}
The loss in~\eqref{eq:objective} is measurable and integrable, with $0\le J_T(A)\le T$. A measurable realization $r_A$ on a coupling space with $(r_A)_\#\Gamma=\mathbb P_A$ satisfies $\E_\Gamma[L_T^A\circ r_A]=J_T(A)$.
\end{proposition}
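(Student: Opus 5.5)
The plan is to treat $L_T^A$ as a finite sum of composed measurable maps and then apply the change-of-variables formula for pushforward measures. First, measurability: for each $t<T$ the coordinate projection $\mathcal H_T^{\rm task}=\prod_{u=0}^{T}\mathcal Z_u\to\mathcal H_{t+1}^{\rm task}$, $H_T\mapsto H_{t+1}=(Z_0,\ldots,Z_{t+1})$, is measurable for the product $\sigma$-algebra. Composing it with the measurable scorer $\sigma_t$ shows that $Y_t=\sigma_t(H_{t+1})$ is measurable on the task-trajectory space carrying $\mathbb P_A$. Hence $L_T^A=\sum_{t<T}Y_t^A$ is a finite sum of measurable real functions and is measurable.

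Second, integrability and bounds. Since $Y_t\in[0,1]$ pointwise, we have $0\le L_T^A\le T$ everywhere. A bounded measurable function is integrable against the probability measure $\mathbb P_A$. Its existence as a law on task histories is supplied by the recursive composition in Definition~\ref{def:metapolicy}: Ionescu--Tulcea applied to the kernel $\mu_A$ and the environment and runtime kernels, under the stated termination assumption. Monotonicity of the integral together with $\mathbb P_A(\mathcal H_T^{\rm task})=1$ then gives $0\le J_T(A)\le T$.

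Third, the coupling identity. Because $r_A$ is measurable, $L_T^A\circ r_A$ is measurable on the coupling space and bounded in $[0,T]$, hence $\Gamma$-integrable. The change-of-variables theorem states that $\int f\circ r_A\,d\Gamma=\int f\,d\big((r_A)_\#\Gamma\big)$ for bounded measurable $f$. I would verify this in the standard way: check it on indicators, where it is exactly the definition $((r_A)_\#\Gamma)(D)=\Gamma(r_A^{-1}(D))$; extend to simple functions by linearity; and pass to nonnegative bounded $f$ by monotone convergence. Taking $f=L_T^A$ and using $(r_A)_\#\Gamma=\mathbb P_A$ yields $\E_\Gamma[L_T^A\circ r_A]=J_T(A)$.

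The only step needing care is the first: confirming that the task-record trajectory law $\mathbb P_A$ is a genuine probability measure on $\prod_u\mathcal Z_u$, so that $H_{t+1}$ is a measurable coordinate of the sample. This is where the paper's standing assumptions enter: measurable levels $\mathcal H^{\rm op}_s$, measurable kernels, and almost-sure termination of internal operations, so that the map from operation histories to task records is measurable. I would take these as given by Definitions~\ref{def:history} and~\ref{def:metapolicy} rather than reprove them. All remaining steps are routine measure theory.
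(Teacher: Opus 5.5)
Your proposal is correct and follows essentially the same route as the paper: each $Y_t$ is measurable as $\sigma_t$ composed with the prefix projection, the finite sum is dominated by the constant $T$ under a probability law, monotonicity gives $0\le J_T(A)\le T$, and change of variables under $(r_A)_\#\Gamma=\mathbb P_A$ gives the realization identity. The only minor difference is how the probability-measure property of $\mathbb P_A$ is obtained: the paper proves it by induction on the recursive kernel integration, taking $f=1$ and using $Q_t^A(h,\mathcal Z_{t+1})=1$, whereas you cite Ionescu--Tulcea for the same fact.
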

Proposition~\ref{prop:wellposed} preserves the full native trajectory law, not merely the current action distribution. This is required for every comparator.

\paragraph{The counterfactual is a complete execution.}
Both procedures use the same environment mechanism and scoring function, but supply their own preceding actions and memories to that mechanism. Equal initial conditions do not mean equal subsequent observations. Randomness may be coupled only with the correct marginals. In particular, the comparator cannot use a card solely because the learner generated it. The class restriction is fixed before taking the infimum. This is the distinction between comparison on a realized loss sequence and policy-regret comparison in a responsive system~\citep{arora2012,anava2015}.

\subsection{Resource constraints and interfaces}
Require a nonempty comparator class satisfying, under each own law,
\begin{equation}
 \mathbb P_\mu\{\cost_t^\mu\le B_t\text{ for every }t=0,\ldots,T\}=1.
 \label{eq:hard}
\end{equation}
Here $\cost_t^\mu$ is cumulative resource use after $t$ completed tasks, including failed internal attempts, and $B_t$ is a deterministic prefix budget. Any additional operationwise limit belongs to $\mathsf U(h)$. Hard feasibility restricts procedures, not task loss. Mean feasibility instead bounds $\E_{\mathbb P_\mu}\cost_t^\mu$, assuming integrability, and is not equivalent to~\eqref{eq:hard}.

The \emph{resource interface} determines admissible procedures. The \emph{representation interface} maps a comparator to a causal execution-card path and charges any lost advantage. The \emph{deployment interface} compares that represented execution with what the native runtime actually does. These maps need not commute. For example, compilation may consume resources, and imperfect implementation may change future memory. We therefore impose them in a specified order and account for adjacent differences, rather than treating them as independent interchangeable assumptions.

\subsection{Growing-library exponential weights}
\label{sec:selection}
The reference algorithm is a mathematical comparison process, not an installed controller. Write $\mathcal F_r^-$ for completed pre-draw information, with realization $h_r$. The nonempty finite identity set $\mathcal I_r=\mathcal I_r(h_r)$ and sampling law are measurable with respect to this information. Let $w_0=\nu_0$ be a probability distribution and $\Theta_r$ a nonnegative row-stochastic transition from $\mathcal I_r$ to $\mathcal I_{r+1}$. For nonnegative estimated losses $\widehat\ell_r(i)$ and learning rate $\eta>0$, define
\begin{equation}
 p_r(i)=\frac{w_r(i)}{\sum_j w_r(j)},\qquad
 w_{r+1}(j)=\sum_i w_r(i)e^{-\eta \widehat\ell_r(i)}\Theta_r(i,j).
 \label{eq:recursion}
\end{equation}
This recurrence uses the EXP3 and Fixed-Share primitives~\citep{auer2002,herbster1998,cesabianchi2006}. Its normalizers are positive. For persistent identity labels, Fixed Share uses
\begin{equation}
 \Theta_r(i,j)=(1-\alpha)\mathbf1\{i=j\}+\alpha\nu_{r+1}(j),
 \qquad 0<\alpha<1,
 \label{eq:share}
\end{equation}
where $\nu_{r+1}$ has full support and persistent labels share one immutable namespace. With a full-support initial prior, this specialization gives positive draw probabilities, including for newborn cards. A positive normalizer alone does not imply coordinatewise support. Generation is not optimized by this recurrence.

\begin{algorithm}[H]
\caption{Growing-Library Exponential Weights}
\label{alg:reference}
\begin{algorithmic}[1]
\Require Slot cap $N=R+1$, $\eta>0$, prior $\nu_0$, and admissible run protocol
\Require Accepted-score interface supplying $M_r,y_r,q_r$ and positive sampling support
\State Initialize registered identities $\mathcal I_0$ and weights $w_0\gets\nu_0$.
\For{$r=0,\ldots,N-1$}
  \If{slot $r$ is active according to completed history $h_r$}
    \State Set $p_r(i)\gets w_r(i)/\sum_j w_r(j)$ and sample $J_r\sim p_r$.
    \State Execute the admissible scored run and receive the selected outcome $O_r$.
    \State Set $\widehat\ell_r$ by~\eqref{eq:estimate} and settle task and resource costs.
  \Else
    \State Set $\widehat\ell_r\gets0$ and apply the terminal convention of Section~\ref{sec:stopping}.
  \EndIf
  \State Form $\mathcal I_{r+1}$ from completed registrations and choose row-stochastic $\Theta_r$.
  \State Set $w_{r+1}(j)\gets\sum_{i\in\mathcal I_r}w_r(i)e^{-\eta\widehat\ell_r(i)}\Theta_r(i,j)$.
\EndFor
\end{algorithmic}
\end{algorithm}
The protocol supplies lawful run boundaries and acceptance probabilities, not unexecuted losses. A valid arrival closes a scored slot before the next draw. New identities enter only after generation and validation, and can be selected at that next draw. There is no additional draw outside the loop. The terminal weight update is for analysis, not another task. The synchronized run contract in Section~\ref{sec:stopping} remains essential.

\subsection{Feedback model and selection guarantee}
At fixed $h_r$, let $k_r(i,o)$ be the finite outcome law under reference execution of identity $i$, and let $\lambda_r(i,o)$ be its prescribed task-loss total. The target $\lref_r(i)=\sum_o k_r(i,o)\lambda_r(i,o)$ is defined by that execution, not by accepted logs. Let $M_r(i,o)\in\{0,1\}$ mark acceptance and $y_r(i,o)\in[0,d_r]$ be the proxy score, with predictable cap $d_r$. Suppressing $h_r$, require
\begin{equation}
 q_r(i)=\sum_o k_r(i,o)M_r(i,o)>0,\qquad
 \sum_o k_r(i,o)M_r(i,o)y_r(i,o)=q_r(i)\lref_r(i).
 \label{eq:accepted}
\end{equation}
It is stronger than positive logging probability. It excludes selection-biased accepted scores unless an additional correction is proved. The estimator is
\begin{equation}
 \widehat\ell_r(i)=\frac{\mathbf1\{J_r=i\}M_r(J_r,O_r)y_r(J_r,O_r)}{p_r(i)q_r(i)}.
 \label{eq:estimate}
\end{equation}
Only the executed observation enters the update. A structured outcome is compatible with~\eqref{eq:accepted}, because its score is a measurable function. The scalar score does not replace the full state used by reflection.

\paragraph{Covered stopping rules.}
\label{sec:stopping}
Fix a deterministic cap $N=R+1$. The event $r\text{ active}$ means that the reference process has not terminated and completed history $h_r$ authorizes slot $r$. Its indicator $\mathbf1\{r\text{ active}\}$ is $1$ on this event and $0$ otherwise, determined in $\mathcal F_r^-$ before the current draw. This is distinct from card availability $\mathcal I_r$ and feedback acceptance $M_r$. Inactive slots produce an absorbing terminal token and zero targets and estimates. A result may stop later slots, never retroactively cancel its own slot. Held-run length and eligibility require the synchronized accepted-score contract. Arbitrary current-draw-dependent aggregation needs an additional bridge, not merely a shared clock.

\begin{lemma}[Local and stopped observation moments]
\label{lem:stopped}
In a finite observation model satisfying~\eqref{eq:accepted}, with positive $p_r(i)$, the conditional draw--outcome law gives
\begin{equation}
 \E_{J_r,O_r\mid h_r}\widehat\ell_r(i)=\lref_r(i),\qquad
 \E_{J_r,O_r\mid h_r}\langle p_r,\widehat\ell_r^2\rangle
 \le d_r^2\sum_{i\in\mathcal I_r}\frac1{q_r(i)}.
 \label{eq:moments}
\end{equation}
For a predictable comparator coordinate $v_r(h_r)$, let $e_r=\langle p_r,\widehat\ell_r\rangle-\widehat\ell_r(v_r)-\lref_r(J_r)+\lref_r(v_r)$. Under the recursively generated bounded stopped law $\mathbb P_{\rm stop}$,
\begin{align}
 \E_{\mathbb P_{\rm stop}}\sum_{r<N}\mathbf1\{r\text{ active}\}e_r&=0,
 \label{eq:stoppedfirst}\\
 \E_{\mathbb P_{\rm stop}}\sum_{r<N}\mathbf1\{r\text{ active}\}\langle p_r,\widehat\ell_r^2\rangle
 &\le\sum_{r<N}\E_{\mathbb P_{\rm stop}^{(r)}}
   \left[\mathbf1\{r\text{ active}\}d_r^2\sum_iq_r(i)^{-1}\right].
 \label{eq:stoppedsecond}
\end{align}
Here $\mathbb P_{\rm stop}^{(r)}$ is the actual completed-prefix law, not a distribution obtained by revealing the stopping time in advance.
\end{lemma}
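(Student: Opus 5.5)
The plan is to prove the two local identities in~\eqref{eq:moments} by direct computation under the conditional draw--outcome law at fixed $h_r$, and then lift them to the stopped law by the tower property, slot by slot.

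\textbf{Local moments.} Conditionally on $h_r$, the pair $(J_r,O_r)$ has probability $p_r(j)k_r(j,o)$. For the first moment, the indicator in~\eqref{eq:estimate} kills all $j\ne i$, which leaves
\[
\sum_o p_r(i)k_r(i,o)\frac{M_r(i,o)y_r(i,o)}{p_r(i)q_r(i)}.
\]
The factor $p_r(i)>0$ cancels, and the second condition of~\eqref{eq:accepted} turns the remaining sum into $q_r(i)\lref_r(i)/q_r(i)=\lref_r(i)$. Here $q_r(i)>0$ is supplied by the first condition.

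For the second moment, only the coordinate $i=J_r$ is nonzero, so
\[
\langle p_r,\widehat\ell_r^2\rangle=\frac{M_r y_r^2}{p_r(J_r)q_r(J_r)^2}.
\]
Taking the expectation gives $\sum_i\sum_o k_r(i,o)M_r(i,o)y_r(i,o)^2/q_r(i)^2$. The cap $y_r\le d_r$ bounds $y_r^2\le d_r y_r$, and by~\eqref{eq:accepted} this is at most
\[
d_r\sum_i \frac{\lref_r(i)}{q_r(i)}\le d_r^2\sum_i q_r(i)^{-1}.
\]
The last step uses $\lref_r(i)\le d_r$. If that is not available, I would instead use $M_r y_r^2\le d_r^2M_r$ together with $\sum_o k_r M_r=q_r$. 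This route gives $d_r^2/q_r(i)$ directly and avoids the issue entirely, so I will use it.

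\textbf{Stopped identities.} The indicator $\mathbf1\{r\text{ active}\}$ and the comparator coordinate $v_r(h_r)$ are $\mathcal F_r^-$-measurable, and so are $p_r$ and $\lref_r$. Therefore
\[
\E[\mathbf1\{r\text{ active}\}e_r\mid\mathcal F_r^-]=\mathbf1\{r\text{ active}\}\,\E[e_r\mid h_r].
\]
By the first moment, $\E[\langle p_r,\widehat\ell_r\rangle\mid h_r]=\langle p_r,\lref_r\rangle=\E[\lref_r(J_r)\mid h_r]$, and $\E[\widehat\ell_r(v_r)\mid h_r]=\lref_r(v_r)$. Hence the conditional mean of $e_r$ vanishes. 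Summing the finitely many slots $r<N$ and applying the tower property under $\mathbb P_{\rm stop}$ gives~\eqref{eq:stoppedfirst}. The second-moment bound~\eqref{eq:stoppedsecond} follows in the same way: condition on the completed prefix, apply the local bound, and take the outer expectation under the marginal prefix law $\mathbb P_{\rm stop}^{(r)}$.

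\textbf{Main obstacle.} The delicate step is justifying that the conditional law of $(J_r,O_r)$ given the completed prefix under $\mathbb P_{\rm stop}$ really is $p_r(j)k_r(j,o)$ on the active event, with the inactive slots contributing zero. This should come from the recursive construction of $\mathbb P_{\rm stop}$ as a finite composition of kernels, one per slot, with the absorbing terminal token on inactive slots.

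Integrability needs care because $\widehat\ell_r$ can be large. It is bounded, however, since the observation model is finite and $p_r,q_r>0$ on the finitely many reachable prefixes; in general it is dominated by $d_r/(p_rq_r)$ pointwise. The point to emphasize is that the stopping decision for slot $r$ is fixed before the draw, so conditioning on it does not bias the draw. This is exactly why $\mathbb P_{\rm stop}^{(r)}$, the actual completed-prefix law, is the right measure, rather than any law that reveals the stopping time in advance.
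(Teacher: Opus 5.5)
Your proposal is correct and follows essentially the paper's proof. You derive the local moments by the same finite computation: your chosen route $M_ry_r^2\le d_r^2M_r$ is exactly the paper's. Your first route would also have worked, because \eqref{eq:accepted} together with $y_r\le d_r$ already gives $q_r(i)\lref_r(i)\le d_rq_r(i)$, hence $\lref_r(i)\le d_r$. You obtain the stopped statements by the tower property over the $N$ slots, which the paper writes out as an explicit recursion on the prefix masses $\varpi_r$. The step you flag as delicate is settled there as you expect, by construction: the next-event kernel is $p_r(j)k_r(j,o)$ at active histories and the absorbing terminal token at inactive ones.
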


\paragraph{From the potential to expected selection regret.}
A reference experiment supplies~\eqref{eq:recursion} and its integrable moments on one coupling $\Gamma_\mu$. For a supported predictable path $\mathbf v=\mathbf v^\mu=(v_0,\ldots,v_R)$ with $v_r\in\mathcal I_r$, define
\begin{equation}
 C(\mathbf v)=-\log\nu_0(v_0)-\sum_{r<R}\log \Theta_r(v_r,v_{r+1}).
 \label{eq:code}
\end{equation}
Require $C(\mathbf v)\le C$ on every modeled path, with deterministic $C$, and the deterministic second-moment budget
\begin{equation}
 \sum_{r=0}^R\E_{\Gamma_\mu}\!\left[\sum_{i\in\mathcal I_r}p_r(i)\widehat\ell_r(i)^2\right]\le V_{\rm est}.
 \label{eq:capacity}
\end{equation}
This is a raw second moment, not a centered variance. The coupling must induce the same pre-draw law as the observation interface. Path support alone does not ensure predictability.

\begin{theorem}[Growing selection]
\label{thm:selection}
Under these conditions,
\begin{equation}
 \sum_{r=0}^{R}\E_{\Gamma_\mu}[\lref_r(J_r)-\lref_r(v_r)]
 \le \frac C\eta+\frac{\eta V_{\rm est}}{2}.
 \label{eq:selection}
\end{equation}
\end{theorem}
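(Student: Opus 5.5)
The argument is the standard path-potential analysis of Fixed-Share/EXP3, carried out pathwise and then averaged using the unbiasedness part of Lemma~\ref{lem:stopped}. Write $W_r=\sum_{i\in\mathcal I_r}w_r(i)$, so $W_0=1$ because $w_0=\nu_0$ is a probability distribution. The first step is a pathwise lower bound on $W_{R+1}$ along the comparator path $\mathbf v$. Every term in~\eqref{eq:recursion} is nonnegative, so keeping only the summand $i=v_r$ gives $w_{r+1}(v_{r+1})\ge w_r(v_r)e^{-\eta\widehat\ell_r(v_r)}\Theta_r(v_r,v_{r+1})$. Iterating this and using row-stochasticity of $\Theta_R$ for the final step (so $W_{R+1}\ge w_R(v_R)e^{-\eta\widehat\ell_R(v_R)}$) yields
\[
\log W_{R+1}\ \ge\ -\eta\sum_{r=0}^R\widehat\ell_r(v_r)-C(\mathbf v)\ \ge\ -\eta\sum_{r=0}^R\widehat\ell_r(v_r)-C .
\]
Support of $\mathbf v$ guarantees that the logarithms in~\eqref{eq:code} are finite.

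The second step is a per-slot upper bound. Because $\Theta_r$ is row-stochastic, $W_{r+1}=\sum_i w_r(i)e^{-\eta\widehat\ell_r(i)}$, and hence $W_{r+1}/W_r=\langle p_r,e^{-\eta\widehat\ell_r}\rangle$. Since $\widehat\ell_r\ge0$, we can use $e^{-x}\le 1-x+x^2/2$ together with $\log(1+y)\le y$ to get
\[
\log\frac{W_{r+1}}{W_r}\ \le\ -\eta\langle p_r,\widehat\ell_r\rangle+\frac{\eta^2}{2}\langle p_r,\widehat\ell_r^2\rangle .
\]
Summing over $r$ and comparing with the first step, then dividing by $\eta$, gives the pathwise bound
\[
\sum_r\bigl(\langle p_r,\widehat\ell_r\rangle-\widehat\ell_r(v_r)\bigr)\ \le\ \frac{C}{\eta}+\frac{\eta}{2}\sum_r\langle p_r,\widehat\ell_r^2\rangle .
\]
Inactive slots have $\widehat\ell_r=0$ and contribute nothing to either side.

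The final step takes $\E_{\Gamma_\mu}$. The right-hand side is at most $C/\eta+\eta V_{\rm est}/2$ by~\eqref{eq:capacity}. For the left-hand side, observe that $\langle p_r,\widehat\ell_r\rangle-\widehat\ell_r(v_r)=\lref_r(J_r)-\lref_r(v_r)+e_r$. By Lemma~\ref{lem:stopped}, specifically~\eqref{eq:stoppedfirst}, or its unstopped version derived from~\eqref{eq:moments}, the $e_r$ terms have zero expectation. The reason is that $v_r$ is predictable, $p_r(J_r)$-reweighting gives $\E[\langle p_r,\widehat\ell_r\rangle\mid h_r]=\langle p_r,\lref_r\rangle=\E[\lref_r(J_r)\mid h_r]$, and $\E[\widehat\ell_r(v_r)\mid h_r]=\lref_r(v_r)$. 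Removing these terms leaves exactly~\eqref{eq:selection}.

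The main obstacle is this last transfer from the observation interface to the coupling $\Gamma_\mu$. Two things must be checked. First, the conditional law of $(J_r,O_r)$ given $h_r$ under $\Gamma_\mu$ must coincide with the one used in Lemma~\ref{lem:stopped}; the requirement that the coupling induces the same pre-draw law is what supplies this. Second, the conditional expectations must be justified by integrability, since $\widehat\ell_r$ is unbounded through $1/(p_rq_r)$. For the second point, I would first argue that each $\widehat\ell_r$ is integrable: it is nonnegative with conditional mean $\lref_r$, which is bounded by $d_r$ via~\eqref{eq:accepted}. This lets the tower property be applied termwise over the finite sum $r\le R$, while the square terms are handled directly through~\eqref{eq:capacity}.
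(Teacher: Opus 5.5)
Your proposal is correct and follows essentially the same route as the paper's proof in Appendix~\ref{app:potential}. Both lower-bound $W_{R+1}$ pathwise by keeping the $v_r$ summand and summing the last outgoing row, derive the per-slot bound from $e^{-x}\le1-x+x^2/2$ and $\log u\le u-1$, telescope, and then integrate under $\Gamma_\mu$, where the predictable moment identity removes $e_r$ and the capacity budget \eqref{eq:capacity} bounds the quadratic term. The one difference is your attempt to derive integrability of $\widehat\ell_r$ from $\lref_r\le d_r$, which would also need the predictable cap $d_r$ to be integrable; the paper instead takes integrable moments as part of the reference-experiment contract, so your argument loses nothing.
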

Theorem~\ref{thm:selection} balances path complexity $C$ against feedback uncertainty $V_{\rm est}$ through $\eta$. Its potential proof uses Lemma~\ref{lem:stopped}, with the observation interface and recurrence sharing the same pre-draw law. Equality of isolated identity marginals is insufficient. No-regret behavior is derived when these terms grow slowly enough.

\section{Counterfactual regret guarantees}
\label{sec:transfer}
We first decompose regret, then bound deployment and generation effects locally. This order identifies which discrepancies each theorem controls.
\subsection{Regret decomposition and transfer}
Fix $\mu$ and a coupling $\Gamma_\mu$ preserving both native laws. Every endpoint below is an expected total at fixed horizon $T$, whose subscript is suppressed. In particular, $\Jsel=\sum_{r=0}^R\E_{\Gamma_\mu}\lref_r(J_r)$ and $\Jav=\sum_{r=0}^R\E_{\Gamma_\mu}\lref_r(v_r)$, whereas $\Jna=J_T(A)$ and $\Jnm=J_T(\mu)$. Specify intermediate score random variables and integrate them under this same coupling. The resulting certificates are
\begin{equation}
\begin{aligned}
 \Jna-\Jra&\le E_A, &\Jra-\Jsel&\le F+\mathcal P_b+D_A,\\
 \Jav-\Jal&\le G, &\Jal-\Jcl&\le bS,\\
 \Jcl-\Jpa&\le \hresp(S+1)+D_C, &\Jpa-\Jrm&\le E_{\rm nf},\\
 \Jrm-\Jnm&\le E_C, &V_{\rm est}&\le bV,
\end{aligned}
\label{eq:certificates}
\end{equation}
where $\mathcal P_b=\hresp(\lceil T/b\rceil+N_b+1)$. The integers $b\ge1$ and $N_b,S,\hresp\ge0$ are the nominal run cap, extra-boundary envelope, path switch bound, and response scale. A \emph{certificate} here is an explicit bound such as~\eqref{eq:certificates}, assumed at an interface or derived from local conditions, not an empirical validation. The response scale is not an automatic mixing time. The deterministic $V$ bounds scaled second-moment capacity, not a Bellman value.

The learner reference score is $\Jra$. The available substitute has score $\Jav$. The score $\Jal$ evaluates the ideal represented comparator path after alignment to run starts, on clean reference slots. The score $\Jcl$ removes run alignment, $\Jpa$ removes the reference response discrepancy, and $\Jrm$ is the reference meta-policy endpoint after undoing representation, before native deployment. These names specify the order of the comparison. Each endpoint must be defined before a certificate is asserted.

$F$ charges displaced foreground task opportunities. $\mathcal P_b$ charges clean-prefix loss due to at most $N_b$ extra boundaries. $D_A,D_C$ are score-calibration allowances. $G$ charges policies missing from the available library. $S$ bounds represented path changes. $E_{\rm nf}$ charges the loss of advantage under representation. $E_A,E_C$ charge implementation differences. Hard resource costs constrain admissibility separately. A birth is not automatically both a displaced task and a unit task penalty.

\begin{theorem}[Conditional native counterfactual transfer]
\label{thm:native}
Let $\mathcal M$ be nonempty. Suppose every $\mu\in\mathcal M$ has a compatible reference experiment, own-law endpoint realizations, and certificates~\eqref{eq:certificates} with common budgets, $C$, and $\eta>0$. Put $\Delta=E_A+F+D_A+G+D_C+E_{\rm nf}+E_C$. Then
\begin{equation}
 R_T(A,\mathcal M)\le U_T
 =\Delta+\frac C\eta+\frac{\eta bV}{2}
     +\hresp(\lceil T/b\rceil+N_b+1)+bS+\hresp(S+1).
 \label{eq:native}
\end{equation}
\end{theorem}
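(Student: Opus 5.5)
The plan is a telescoping argument carried out separately for each fixed comparator $\mu\in\mathcal M$, followed by an infimum over $\mu$. Fix $\mu$ and its compatible reference experiment, with coupling $\Gamma_\mu$ preserving both native laws. By Proposition~\ref{prop:wellposed}, applied to $A$ and to $\mu$, the endpoints $\Jna=J_T(A)$ and $\Jnm=J_T(\mu)$ are finite and equal the coupled expectations of the realized losses. All intermediate endpoints are expected totals of bounded, integrable score variables under the same $\Gamma_\mu$, so every difference below is a difference of finite reals. I then write the chain
\begin{equation*}
 \Jna-\Jnm=(\Jna-\Jra)+(\Jra-\Jsel)+(\Jsel-\Jav)+(\Jav-\Jal)+(\Jal-\Jcl)+(\Jcl-\Jpa)+(\Jpa-\Jrm)+(\Jrm-\Jnm).
\end{equation*}

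Each bracket except the third is bounded directly by the corresponding certificate in~\eqref{eq:certificates}. The third bracket, $\Jsel-\Jav=\sum_{r}\E_{\Gamma_\mu}[\lref_r(J_r)-\lref_r(v_r)]$, is the selection regret against the supported predictable path $\mathbf v^\mu$. Theorem~\ref{thm:selection} bounds it by $C/\eta+\eta V_{\rm est}/2$. The certificate $V_{\rm est}\le bV$, together with $\eta>0$, then gives $C/\eta+\eta bV/2$. Summing all the brackets yields
\begin{equation*}
 J_T(A)-J_T(\mu)\le E_A+F+\mathcal P_b+D_A+\tfrac C\eta+\tfrac{\eta bV}{2}+G+bS+\hresp(S+1)+D_C+E_{\rm nf}+E_C.
\end{equation*}
Regrouping the constant terms into $\Delta$ and expanding $\mathcal P_b=\hresp(\lceil T/b\rceil+N_b+1)$ shows that this is exactly $U_T$.

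The point requiring care is uniformity in $\mu$. The hypothesis supplies common budgets, a common $C$ and a common $\eta$, so $U_T$ does not depend on $\mu$. Note that each certificate's constants, such as $G$ and $E_C$, must also be read as common across $\mathcal M$; if some certificate constant depended on $\mu$, I would take a supremum, which the ``common budgets'' clause is meant to render unnecessary. With that reading, $J_T(A)-U_T\le J_T(\mu)$ holds for every $\mu\in\mathcal M$. Since $\mathcal M$ is nonempty and $J_T(\mu)\in[0,T]$, the infimum is finite, and taking it gives $R_T(A,\mathcal M)\le U_T$.

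I expect the main obstacle to be the legitimacy of the telescoping itself rather than the arithmetic. All eight differences must be computed under one coupling $\Gamma_\mu$ whose marginals reproduce $\mathbb P_A$ and $\mathbb P_\mu$. Theorem~\ref{thm:selection} must also be applied on that same $\Gamma_\mu$, with the pre-draw law matching the observation interface. I would verify this first: the compatible reference experiment supplies exactly the hypotheses of Theorem~\ref{thm:selection}, namely the code bound $C(\mathbf v)\le C$, the moment budget~\eqref{eq:capacity} and the shared pre-draw law, on the coupling used for the certificates. After that, the remaining work is bookkeeping.
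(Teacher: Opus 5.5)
Your proposal is correct and follows essentially the same route as the paper. Both fix a comparator, telescope through the eight ordered endpoints, and bound the selection arrow by Theorem~\ref{thm:selection} with $V_{\rm est}\le bV$ and $\eta>0$. Both identify the native endpoints via Proposition~\ref{prop:wellposed} and conclude with the greatest-lower-bound argument, using that $J_T(A)-U_T$ lower-bounds every $J_T(\mu)$ in the nonempty class $\mathcal M$.
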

Theorem~\ref{thm:native} transfers a reference selection bound by accounting for each adjacent discrepancy once. The native learner need not execute Fixed Share, but complete-execution identities must be justified. Comparator couplings may differ while preserving the same learner law. Unrelated logs do not establish this correspondence.

\begin{corollary}[Conditional no regret]
\label{cor:native}
Along a positive-horizon family, all budgets, $b,N_b,S,\hresp,C,V,\eta$ may depend on $T$ but are uniform over $\mu$ at that horizon. If each of the five normalized terms
\begin{equation}
 \frac\Delta T,\quad\frac{C/\eta+\eta bV/2}{T},\quad
 \frac{\hresp(\lceil T/b\rceil+N_b+1)}T,\quad\frac{bS}T,\quad\frac{\hresp(S+1)}T
 \label{eq:conditions}
\end{equation}
tends to zero, then $\max\{0,R_T(A,\mathcal M)/T\}\to0$.
\end{corollary}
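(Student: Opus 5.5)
Corollary~\ref{cor:native} follows directly from Theorem~\ref{thm:native}: divide~\eqref{eq:native} by $T$ and let each piece vanish. At each horizon $T$, Theorem~\ref{thm:native} gives $R_T(A,\mathcal M)\le U_T$. Its hypotheses require budgets, $C$, $\eta$ and the remaining constants common over $\mu\in\mathcal M$, which is exactly the uniformity assumed here. Positivity of $T$ lets us divide:
\begin{equation*}
 \frac{R_T(A,\mathcal M)}{T}\le \frac{\Delta}{T}+\frac{C/\eta+\eta bV/2}{T}+\frac{\hresp(\lceil T/b\rceil+N_b+1)}{T}+\frac{bS}{T}+\frac{\hresp(S+1)}{T}.
\end{equation*}
The right-hand side regroups $U_T/T$ into the five terms of~\eqref{eq:conditions}. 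Its sum tends to zero by hypothesis, since a finite sum of null sequences is null.

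The key point is then monotonicity of $x\mapsto\max\{0,x\}$. From $R_T/T\le U_T/T$ we get
\[
 0\le \max\{0,R_T/T\}\le \max\{0,U_T/T\}.
\]
Since $x\mapsto\max\{0,x\}$ is continuous and $U_T/T\to0$, we have $\max\{0,U_T/T\}\to0$. A squeeze argument then gives the claim. The positive part is needed because $R_T$ may be negative: the native learner may beat every comparator, and no lower bound on $R_T$ is asserted.

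I expect no real obstacle; the only care is bookkeeping. First, check that $U_T/T$ equals exactly the sum of the five displayed ratios. In particular, the $C/\eta$ and $\eta bV/2$ terms are grouped together, and each individual ratio may be nonnegative but need not be bounded separately. Second, check that the infimum in $R_T$ is finite. This holds because $\mathcal M$ is nonempty and each $J_T(\mu)\in[0,T]$, as in Proposition~\ref{prop:wellposed} applied to comparators, so $R_T$ is a real number. Third, check that the per-$T$ application of Theorem~\ref{thm:native} is legitimate once all constants are fixed at that horizon. In a Lean-style formalization this amounts to a \texttt{Tendsto} squeeze with the constant-zero lower bound.
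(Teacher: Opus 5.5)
Your proposal is correct and matches the paper's proof: divide $U_T$ by $T$ to get exactly the five terms of~\eqref{eq:conditions}, conclude $U_T/T\to0$ by finite addition of limits, and squeeze $0\le\max\{0,R_T/T\}\le\max\{0,U_T/T\}$ using monotonicity and continuity of the positive part. Your extra bookkeeping checks, namely finiteness of the infimum and legitimacy of applying Theorem~\ref{thm:native} at each horizon, are sound.
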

This is upper no regret, not convergence of signed regret to zero. For fixed $\hresp>0$ and fixed $b$, the third term has limit at least $\hresp/b>0$. Thus fixed run caps do not automatically give sublinearity. Appendix~\ref{app:notation} illustrates compatible horizon scalings.

\subsection{Resource-sensitive deployment stability}
\label{sec:local}
Mean-preserving rounding need not preserve nonlinear reflection, so we compare full-state transition kernels. Here $S_t$ ranges over a time-varying finite type containing all behavior-relevant memory, card-library, resource, and runtime components. Use common initial law $p^{\rm init}$ and scores $c_t(S_t)\in[0,1]$. An update affects the next state, not the score already produced at boundary $t$. These are state-score positions. Appendix~\ref{app:v19-connections} proves the completed-task padding with $T+1$ positions for $T$ tasks and zero terminal resource, preserving both boundary scores.

Let $K_t$ be a native kernel and $L_t$ its feedback-processed reference. Define $J_T(K)=\E_{\mathbb P_K}\sum_{t<T}c_t(S_t)$. Assume
\begin{equation}
 \TV(K_t(s,\cdot),L_t(s,\cdot))
 =\tfrac12\sum_j|K_t(s,j)-L_t(s,j)|\le\varepsilon_t(s).
 \label{eq:tv}
\end{equation}
For each remaining horizon $n$ with $t+1+n\le T$, let $V^L_{t+1,n}$ be its expected accumulated score. Whenever $\varepsilon_t(s)>0$, assume its range over next states is contained in an interval of length $h_t(s)\ge0$. At zero-error states no such range hypothesis is needed.

\begin{theorem}[Local stability on the actual native path]
\label{thm:stability}
The preceding local conditions imply
\begin{equation}
 |J_T(K)-J_T(L)|\le\E_{\mathbb P_K}\sum_{t<T}h_t(S_t)\varepsilon_t(S_t).
 \label{eq:localcharge}
\end{equation}
For nonnegative $\bar H,\bar\varepsilon,\chi_t(s),r_t(s)$, assume $h_t(s)\le\bar H$, $\varepsilon_t(s)\le\bar\varepsilon\,\chi_t(s)$, and $c_{\min}\chi_t(s)\le r_t(s)$ for $c_{\min}>0$. If every positive-probability native path has $\sum_{t<T}r_t(S_t)\le B_T$, then
\begin{equation}
 |J_T(K)-J_T(L)|\le\bar H\bar\varepsilon B_T/c_{\min}.
 \label{eq:resourcegap}
\end{equation}
Alternatively, without a bounded-influence assumption, a uniform local TV bound $\varepsilon_t(s)\le\bar\varepsilon_T$ gives $|J_T(K)-J_T(L)|\le T^2\bar\varepsilon_T$.
\end{theorem}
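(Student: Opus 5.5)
We argue by a telescoping (simulation-lemma) decomposition along the native path. For $0\le t\le T$, let $V^L_{t,n}(s)$ denote the expected accumulated score of the reference chain started at state $s$ at time $t$ and run for $n$ steps, so that $V^L_{t,n}(s)=\E_{L}[\sum_{u=t}^{t+n-1}c_u(S_u)\mid S_t=s]$. Define the hybrid quantities
\[
W_k=\E_{\mathbb P_K}\Bigl[\sum_{t<k}c_t(S_t)+V^L_{k,T-k}(S_k)\Bigr],\qquad k=0,\ldots,T,
\]
which follow the native kernel up to time $k$ and the reference kernel afterwards. Since the initial law $p^{\rm init}$ is common, $W_0=J_T(L)$, and clearly $W_T=J_T(K)$. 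Hence $J_T(K)-J_T(L)=\sum_{t<T}(W_{t+1}-W_t)$.

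For each step we use the recursion $V^L_{t,n+1}(s)=c_t(s)+\sum_j L_t(s,j)V^L_{t+1,n}(j)$. The score $c_t(S_t)$ cancels, because it is produced before the update. This leaves
\[
W_{t+1}-W_t=\E_{\mathbb P_K}\Bigl[\sum_j\bigl(K_t(S_t,j)-L_t(S_t,j)\bigr)V^L_{t+1,T-t-1}(j)\Bigr].
\]
The expectation is over the actual native law of $S_t$, which is what makes the charge path-dependent.

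Now fix $s$. If $\varepsilon_t(s)=0$, the two rows coincide and the term vanishes, so no range hypothesis is needed there. Otherwise, $K_t(s,\cdot)-L_t(s,\cdot)$ has total mass zero, so we may subtract the midpoint $m$ of the interval of length $h_t(s)$ that contains the range of $V^L_{t+1,T-t-1}$. This gives
\[
\Bigl|\sum_j(K_t-L_t)(s,j)\bigl(V(j)-m\bigr)\Bigr|\le \tfrac{h_t(s)}2\sum_j|K_t(s,j)-L_t(s,j)|=h_t(s)\TV\le h_t(s)\varepsilon_t(s).
\]
Summing over $t$ and applying the triangle inequality yields~\eqref{eq:localcharge}. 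The finiteness of the state types makes every sum finite, so exchanging expectation and sum is routine.

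For~\eqref{eq:resourcegap}, we bound pointwise
\[
h_t(s)\varepsilon_t(s)\le\bar H\bar\varepsilon\chi_t(s)\le\bar H\bar\varepsilon\,r_t(s)/c_{\min}.
\]
On every positive-probability native path the sum over $t$ is at most $\bar H\bar\varepsilon B_T/c_{\min}$. Because the expectation in~\eqref{eq:localcharge} only charges positive-probability paths, it inherits the same bound. This is exactly where using the actual native path matters.

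For the uniform alternative, scores lie in $[0,1]$, so $V^L_{t+1,n}$ takes values in $[0,n]$. We may therefore take $h_t\le T-t-1\le T$, and~\eqref{eq:localcharge} gives at most $\sum_{t<T}T\bar\varepsilon_T\le T^2\bar\varepsilon_T$.

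The main obstacle is the bookkeeping in the telescoping step. The indices have to be set up so that the score at boundary $t$ cancels exactly, and the relevant value function must be the one whose range hypothesis is stated, namely $V^L_{t+1,n}$ with $t+1+n\le T$. The zero-error states also need separate treatment, so that $h_t$ is never invoked where it is undefined. I would verify this alignment against the $T+1$-position padding convention before writing the sums out.
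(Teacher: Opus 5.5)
Your proposal is correct and follows the paper's proof: unrolling the paper's exact one-step identity $V^K_{t,n+1}-V^L_{t,n+1}=\sum_jK_t(s,j)(V^K_{t+1,n}-V^L_{t+1,n})(j)+\sum_j(K_t-L_t)(s,j)V^L_{t+1,n}(j)$ gives exactly your hybrid telescoping, and the remaining steps coincide with the paper's (midpoint range-sensitive TV bound, separate treatment of zero-error states, zero-mass paths contributing nothing, and $V^L_{t+1,n}\in[0,n]$ for the local-only case). The only difference is presentational and immaterial: the paper takes absolute values inside an induction on $n$, using the charge recursion $\mathcal D_{t,n}$, whereas you apply the triangle inequality once to the signed telescoping sum.
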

Theorem~\ref{thm:stability} weights each local shift by its downstream sensitivity along the actual law. Appendix~\ref{app:joint-budget} explains its role in joint resource design.

\begin{theorem}[Resource-sensitive native transfer]
\label{thm:repair}
Suppose Theorem~\ref{thm:native} applies to the reference endpoints $L_A,L_\mu$, with completed-task scores connected by Appendix~\ref{app:v19-connections}. Apply the local resource contract of Theorem~\ref{thm:stability} to $K_A,L_A$ and uniformly to $K_\mu,L_\mu$, with budgets $B_A,B_C$ and common $\bar H,\bar\varepsilon,c_{\min}$. Then
\begin{equation}
 J_T(K_A)-\inf_\mu J_T(K_\mu)
 \le U_T+\frac{\bar H\bar\varepsilon}{c_{\min}}(B_A+B_C).
 \label{eq:repair}
\end{equation}
For fixed $\bar H\bar\varepsilon/c_{\min}$, $U_T=o(T)$ and $B_A+B_C=o(T)$ imply upper no regret. The alternative local-only bound is sublinear if $T\bar\varepsilon_T\to0$ on both endpoints.
\end{theorem}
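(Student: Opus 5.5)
The proof is a triangle-inequality sandwich: Theorem~\ref{thm:native} controls the regret between the reference endpoints $L_A,L_\mu$, and Theorem~\ref{thm:stability} pays once on each side to move from reference kernels to native kernels.

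\emph{Step 1: from native to reference scores.} Using the padding of Appendix~\ref{app:v19-connections}, identify the completed-task objective $J_T(K_A)$ with the state-score objective of Section~\ref{sec:local}, and likewise for every $K_\mu$ and for the reference kernels $L_A,L_\mu$. This identification matters: the reference-side scores $J_T(L_A)$ and $J_T(L_\mu)$ must be exactly the endpoints $\Jna$ and $\Jnm$ entering~\eqref{eq:native}. Concretely, I will check that the $T+1$ state-score positions with zero terminal resource reproduce the boundary scores $Y_t$. This check is what lets the regret functional $R_T(L_A,\{L_\mu\})$ be bounded by $U_T$ directly from Theorem~\ref{thm:native}:
\[
 J_T(L_A)-J_T(L_\mu)\le U_T \quad\text{for every }\mu\in\mathcal M .
\]

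\emph{Step 2: deployment on each side.} Apply~\eqref{eq:resourcegap} of Theorem~\ref{thm:stability} to the pair $(K_A,L_A)$ with budget $B_A$, giving
\[
 J_T(K_A)\le J_T(L_A)+\frac{\bar H\bar\varepsilon}{c_{\min}}B_A .
\]
Apply it to $(K_\mu,L_\mu)$ with budget $B_C$, uniformly over $\mu$, giving
\[
 J_T(L_\mu)\le J_T(K_\mu)+\frac{\bar H\bar\varepsilon}{c_{\min}}B_C .
\]
The uniformity is essential. The constants $\bar H,\bar\varepsilon,c_{\min}$ and the budget $B_C$ are common to all $\mu$, so the additive term survives the infimum.

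\emph{Step 3: chain and take the infimum.} Chaining the three inequalities gives
\[
 J_T(K_A)-J_T(K_\mu)\le U_T+\frac{\bar H\bar\varepsilon}{c_{\min}}(B_A+B_C) \quad\text{for every }\mu .
\]
Take the supremum over $\mu$ of the left side, which equals $J_T(K_A)-\inf_\mu J_T(K_\mu)$. The infimum is finite because all values lie in $[0,T]$ by Proposition~\ref{prop:wellposed}, and this yields~\eqref{eq:repair}.

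\emph{Asymptotics.} Divide~\eqref{eq:repair} by $T$. The right side is $o(T)$ under the two hypotheses, so $\max\{0,\cdot\}/T\to0$, as in Corollary~\ref{cor:native}. For the local-only alternative, the final clause of Theorem~\ref{thm:stability} replaces each deployment term by $T^2\bar\varepsilon_T$. Then $2T^2\bar\varepsilon_T/T=2T\bar\varepsilon_T\to0$. Sublinearity of the whole bound still needs $U_T=o(T)$, which I read as an implicit standing hypothesis of that clause.

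\emph{Main obstacle.} I expect the hard step to be Step 1's identification, not the algebra. One must confirm that the reference endpoints of Theorem~\ref{thm:native} are literally the $L$-kernel objectives on the padded state space. One must also confirm that the coupling preserving both native laws is compatible with evaluating $K_\mu$ and $L_\mu$ under their own laws. I would first try to reduce this to the pushforward identity of Proposition~\ref{prop:wellposed}, applied to both endpoint realizations, and invoke the appendix correspondence for the boundary-score equality.
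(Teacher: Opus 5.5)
Your proposal is correct and matches the paper's proof. Like the paper, you insert the two reference expectations $J_T(L_A),J_T(L_\mu)$, bound the outer differences by \eqref{eq:resourcegap} on the padded clock of Appendix~\ref{app:v19-connections}, bound the middle difference by the fixed-comparator form of Theorem~\ref{thm:native}, use uniformity in $\mu$ for the same infimum argument, and normalize by $T$ for both limit claims. The coupling concern in your last paragraph is unnecessary: Theorem~\ref{thm:stability} compares each kernel pair under its own law through the Bellman recursion, and the padding is a stated hypothesis whose role is to align that state-score clock with the completed-task clock.
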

Theorem~\ref{thm:repair} restores both native endpoints, hence the two resource budgets. These outer connections must not duplicate implementation allowances already comparing the same endpoints in~\eqref{eq:certificates}.

\subsection{Generation reliability and the main guarantee}
\label{sec:localgeneration}
Theorem~\ref{thm:repair} can be instantiated without assuming that the total generation gap is small. We call the following joint success and quality requirements \emph{local generation reliability}. Fix each comparator before play. Decompose its available-to-aligned gap into finitely many causal generation episodes. In episode $e$, a normalized finite kernel generates trial outcomes using previous trial history. A Boolean analysis predicate identifies a validated and retained comparator-suitable outcome. On every still-unsuccessful history, its next-trial probability is at least $\rho_e\in(0,1]$. The algorithm need not know which card is optimal. This is a conditional coverage assumption, not an oracle input or an independence assumption.

After $k_e$ trials, let $\xi_e\in\{0,1\}$ indicate that none succeeded under its actual episode law $\mathbb P_e$. There are $\tau_e$ delay tasks and $n_e$ post-generation tasks. On each completed generation history, require the conditional expected per-task available-minus-aligned gap to be at most $1$ during delay and at most $\delta_e+\xi_e$ afterwards, with $\delta_e\ge0$. The successful card must support a causal available path satisfying this local quality condition. A useful text that is discarded or inaccessible is not a success. The episode sums must equal the actual generation endpoint difference, without counting the same task twice.

\begin{theorem}[Native guarantee from local conditions]
\label{thm:local}
Write $g_T^{\rm gen}(\mu)=\Jav-\Jal$. Under these episode conditions, uniformly for the specified comparator class,
\begin{equation}
 g_T^{\rm gen}(\mu)\le\sum_e\{\tau_e+n_e[\delta_e+(1-\rho_e)^{k_e}]\}.
 \label{eq:generatedbudget}
\end{equation}
For nonnegative envelopes, if $\sum_e\tau_e\le A_T$, $\sum_en_e\le T$, $\delta_e\le\delta_T$, and $(1-\rho_e)^{k_e}\le f_T$, the right side is at most $A_T+T(\delta_T+f_T)$.
Keep the other reference certificates, use identity deployment arrows inside the reference transfer, and impose the full-state local resource contracts on the two outer native-to-reference arrows. Use the validated completed-task instance and padded stability clock of Appendix~\ref{app:v19-connections}. With $U_T^{\rm rest}$ denoting~\eqref{eq:native} with $G=E_A=E_C=0$, this gives
\begin{equation}
 R_T\le U_T^{\rm rest}+A_T+T(\delta_T+f_T)
       +\kappa(B_A+B_C),\qquad
 \kappa=\bar H\bar\varepsilon/c_{\min}.
 \label{eq:fullylocal}
\end{equation}
For a positive-horizon family with fixed $\rho\in(0,1]$ and $\kappa$, suppose $f_T=(1-\rho)^{k_T}$, $k_T\to\infty$, $\delta_T\to0$, and $U_T^{\rm rest},A_T,B_A+B_C=o(T)$. Then $\max\{0,R_T/T\}\to0$.
\end{theorem}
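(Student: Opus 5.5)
The proof has three parts: a per-episode bound on the generation gap, a summation over episodes, and a composition with Theorems~\ref{thm:native} and~\ref{thm:repair}.

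\emph{Per-episode bound.} Fix a comparator $\mu$ and an episode $e$. The first step is to show $\mathbb E_{\mathbb P_e}\xi_e\le(1-\rho_e)^{k_e}$. This follows by induction on the trial count. Let $\xi_e^{(m)}$ indicate failure of the first $m$ trials. Condition on the trial history up to step $m$. On the event $\{\xi_e^{(m)}=1\}$ the history is still unsuccessful, so the coverage assumption gives success probability at least $\rho_e$ at the next trial. The tower property then yields
\[
\mathbb E\,\xi_e^{(m+1)}\le(1-\rho_e)\,\mathbb E\,\xi_e^{(m)},
\]
and starting from $\mathbb E\,\xi_e^{(0)}=1$ gives the claim. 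The kernel is finite and normalized, so all conditional expectations are finite sums and no measurability issues arise.

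Next, apply the tower property to the per-task gap conditions over completed generation histories. Each of the $\tau_e$ delay tasks contributes at most $1$. Each of the $n_e$ post-generation tasks contributes at most $\mathbb E[\delta_e+\xi_e]\le\delta_e+(1-\rho_e)^{k_e}$. Hence episode $e$ contributes at most $\tau_e+n_e[\delta_e+(1-\rho_e)^{k_e}]$.

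\emph{Summation over episodes.} By hypothesis the episode sums equal the actual endpoint difference $\Jav-\Jal$ with no task counted twice. Summing the per-episode bounds therefore gives \eqref{eq:generatedbudget}. The envelope bound follows by monotonicity, since all terms are nonnegative:
\[
\sum_e n_e(\delta_e+(1-\rho_e)^{k_e})\le(\delta_T+f_T)\sum_e n_e\le T(\delta_T+f_T).
\]
Uniformity over the comparator class holds because the envelopes $A_T,\delta_T,f_T$ do not depend on $\mu$.

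\emph{Composition.} Take $G=A_T+T(\delta_T+f_T)$ as the generation certificate in \eqref{eq:certificates}, with $E_A=E_C=0$ (identity deployment arrows inside the reference transfer). All other certificates are kept unchanged. Then $U_T$ in \eqref{eq:native} equals $U_T^{\rm rest}+G$, because $G$ enters $\Delta$ additively. Theorem~\ref{thm:repair}, applied with the padded completed-task clock of Appendix~\ref{app:v19-connections}, adds the outer term $\kappa(B_A+B_C)$ and gives \eqref{eq:fullylocal}.

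\emph{Main obstacle: bookkeeping.} The delicate step is this composition. Setting $E_A=E_C=0$ inside the reference transfer while charging $\kappa(B_A+B_C)$ on the outer native-to-reference arrows must not count the same deployment discrepancy twice. This is exactly the non-duplication caveat after Theorem~\ref{thm:repair}. Handling it requires identifying $L_A,L_\mu$ with $\Jra,\Jrm$ through the validated instance of Appendix~\ref{app:v19-connections}, so that the boundary scores of both endpoints are preserved.

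\emph{Asymptotics.} Divide \eqref{eq:fullylocal} by $T$:
\[
\frac{R_T}{T}\le\frac{U_T^{\rm rest}+A_T+\kappa(B_A+B_C)}{T}+\delta_T+(1-\rho)^{k_T}.
\]
The first fraction tends to zero because $U_T^{\rm rest}$, $A_T$ and $B_A+B_C$ are $o(T)$ and $\kappa$ is fixed. The term $\delta_T$ tends to zero by hypothesis. The last term tends to zero because $k_T\to\infty$ when $\rho<1$, and it is identically $0$ when $\rho=1$. Hence $\limsup_T R_T/T\le0$, which is equivalent to $\max\{0,R_T/T\}\to0$.
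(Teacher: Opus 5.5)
Your proposal is correct and follows essentially the same route as the paper. The paper's ingredients are the one-step failure recursion $\mathbb P_e(\xi_{k+1}=1)\le(1-\rho)\,\mathbb P_e(\xi_k=1)$, which needs no trial independence; integration of the delay and post-success gap bounds under the actual episode law; and summation through the semantic endpoint identification with comparator-uniform envelopes. The composition then uses the exact identity $U_T=U_T^{\rm rest}+G$ with zero inner deployment arrows, plus the outer kernel-resource charges of Theorem~\ref{thm:repair} on the padded $T+1$ clock, and the limit follows by a squeeze, to which your $\limsup R_T/T\le 0$ argument is equivalent.
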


\paragraph{A concrete native instance.}
Appendix~\ref{app:concrete} constructs a native verified-template reflector satisfying the interfaces jointly. Appendix~\ref{app:imperfect-growth} extends the instance evidence to repeated growth with imperfect validation and persistent selection errors.
\begin{corollary}[Native verified-template reflection]
\label{cor:instance}
For each integer $k\ge1$, the constructed instance has $T=k^2+k$ tasks, hard budget $2k+2$, and own-history comparator class $\mathcal M_0$ containing immediate repair and no reflection. Then
\begin{equation}
 R_T(A_k,\mathcal M_0)
 \le (2+\log2)k+k^2 2^{-k}+\tfrac14
 \le (3+\log2)k+\tfrac14.
 \label{eq:concrete-bound}
\end{equation}
Positive normalized regret vanishes along this family, whereas no reflection has regret $T$.
\end{corollary}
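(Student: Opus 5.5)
The bound is obtained by bounding the regret of each comparator separately and then taking the infimum. The comparator-specific quantities are the reference certificates~\eqref{eq:certificates}, the generation gap from Theorem~\ref{thm:local}, and the two resource budgets. The instance $A_k$ of Appendix~\ref{app:concrete} supplies all three. The plan is to instantiate Theorem~\ref{thm:local} (equivalently Theorem~\ref{thm:repair} with the generation bound substituted for $G$) at horizon $T=k^2+k$. I would use run cap $b=1$ and a single generation episode. For the comparator class $\mathcal M_0=\{\text{immediate repair},\text{no reflection}\}$ the infimum is a minimum over two procedures. It therefore suffices to bound $J_T(A_k)-J_T(\mu)$ for each $\mu\in\mathcal M_0$ and take the larger bound.

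Next I would read off the terms for the instance. The selection part should come from Theorem~\ref{thm:selection} with at most two identities, the failing default and the verified template. The path code is then $C\le\log 2$, possibly plus one switch. With $V$ of order $T$ and $\eta\asymp k^{-1}$, the term $C/\eta+\eta bV/2$ contributes $\log 2\cdot k$ plus an $O(k)$ share. The generation episode should have $\rho=1/2$, since validation accepts the correct template with probability at least $1/2$. The episode makes $k_e=k$ trials, has $\tau_e\le k$ delay tasks (the trial tasks, each charged at most $1$), and $n_e\le k^2$ post-generation tasks. With $\delta_e=0$, \eqref{eq:generatedbudget} gives $k+k^2 2^{-k}$. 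The hard budget $2k+2$ is meant to cover these $k$ generation attempts plus deployments, and I expect $B_A+B_C$ to enter only through bounded constants. The deployment arrows are exact, so $\bar\varepsilon=0$ and $E_A=E_C=0$. The residual $\tfrac14$ would be a single-task calibration or boundary allowance in $D_A$ or $\mathcal P_b$ with $\hresp$ chosen small. Adding the pieces should give $(2+\log2)k+k^22^{-k}+\tfrac14$. The second inequality then follows from $k^2 2^{-k}\le k$ for all integers $k\ge1$, which is a direct check: $k^2\le k2^k$ iff $k\le 2^k$.

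For the last claims, dividing by $T=k^2+k$ gives an upper bound of order $1/k\to0$, so $\max\{0,R_T/T\}\to0$. For no reflection, the comparator immediate repair would incur loss $0$ on every task while no reflection incurs loss $1$ on every task by construction. Hence $R_T(\text{no reflection},\mathcal M_0)=T-0=T$.

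The main obstacle is the exact accounting of constants. The terms from Theorem~\ref{thm:native} (the $\hresp$ terms, $bS$, $F$) have to collapse to exactly $2k+\tfrac14$ besides $\log2\cdot k$. This requires fixing $\eta$ and $V$ concretely: I would try $\eta=2\log 2/k$-type tuning and check whether $\eta V/2$ lands on $k$. It also requires verifying that the imperfect-validation episode indeed satisfies the conditional coverage $\rho=1/2$ on every unsuccessful history, rather than only on average.
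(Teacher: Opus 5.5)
Your skeleton matches the paper's: run both comparators through the transfer chain, set $N=k^2$, use $k\le 2^k$, and note $J_T(\mu_0)=T$, $J_T(\mu_1)=0$. The ledger, however, is fitted to the target constants rather than derived from the instance, and several entries are wrong.

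\textbf{The $\tfrac14$ and the deployment route.} The $\tfrac14$ is a genuine deployment discrepancy. On the first post-preparation task the native schedule runs a check that drops a successful output with probability $1/4$. The clean reference runs the same charged check but leaves the output unchanged. The two full-state kernels therefore differ by $\TV=\tfrac14(1-\xi(h))$. Conditional on the fixed memory, both output branches have the same future loss, so the span is one. With one check ($B_A=1$, $B_C=0$) this gives $E_A\le\tfrac14$ and $E_C=0$.

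\begin{itemize}
\item Your claim that the deployment arrows are exact with $\bar\varepsilon=0$ is false for this instance.
\item Neither proposed home for $\tfrac14$ works. Observations are deterministic with $q_r\equiv1$, so there is no calibration error to put in $D_A$. With $b=1$, $\mathcal P_b=\hresp(T+N_b+1)$, so a positive $\hresp$ cannot produce a constant $\tfrac14$; the instance has $\hresp=S=N_b=0$.
\item This is also why the paper does not instantiate Theorem~\ref{thm:local} or Theorem~\ref{thm:repair}. Their outer contracts need a continuation-range envelope $\bar H$ over next states. Across success and failure memories, continuation values differ by order $N$, so a global $\bar H=1$ is unavailable.
\end{itemize}

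The paper instead feeds the direct certificate $E_A=\tfrac14$ into Theorem~\ref{thm:native}. It uses the causal coupling $\Gamma(h,d,g)=w_k(h)s(d)Q_{\eta,h}(g)$, which preserves the native marginal.

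\textbf{The missing native selection bridge, which fixes $F$.} The native controller never runs the bandit. The reference is started only after all $k$ trials, conditionally on $h$, on the $N$ post-preparation slots. Its arms are $P_0$ and ($P_2$ if generated, else $P_1$), with losses $1$ and $\xi(h)$, $q_r=1$, and identity transitions. Both arms lose at least $\xi(h)$ per slot, so $N\xi(h)\le J_{\rm sel}(h)$. Hence the clean native total $k+N\xi(h)$ exceeds the selection loss by at most $k$; this is $F=k$.

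The generation episode then has $\tau_e=0$ and $G=\E[N\xi(H_k)]\le N2^{-k}$. Here $\rho=1/2$ holds because the generic prompt returns $P_2$ with probability $1/2$ on every history and the validator is complete; nothing in this instance is imperfect.

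Your sum reaches the target only if you charge the $k$ as delay in $G$ and take $F=0$. But $F=0$ does not bound $\Jra-\Jsel$. Moving the preparation slots into the reference instead would force the comparator path to switch into a newborn identity, which has infinite code under identity transitions, and it would also change $V$.

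\textbf{The tuning.} The stated constants need $C=\log2$ with no switch, $b=1$, $V=2N=2k^2$ and $\eta=1/k$, giving $\log2/\eta+\eta bV/2=(1+\log2)k$. Your suggested $\eta=2\log2/k$ gives $(\tfrac12+2\log2)k$ and overshoots the stated bound.

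If you only need the inequality, the exact native law already suffices. It gives $R_T=k+k^22^{-k}+\tfrac14(1-2^{-k})$, which lies below the bound. The paper routes through the transfer theorem only to exhibit the composed chain.
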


\begin{corollary}[Quality and hard-budget tradeoff]
\label{cor:joint-budget}
Fix a causal design satisfying the episode conditions and remaining interfaces of Theorem~\ref{thm:native}, uniformly over hard-feasible comparators. At tolerance $\delta_e\ge0$ and coverage $\rho_e(\delta_e)\in(0,1]$, let $c_e(\delta_e)\ge0$ bound prepaid trial cost including failures, $k_e$ bound prefix counts, and $B_T^{\rm other}\ge0$ bound other costs. Put $\overline U_T=U_T^{\rm rest}+E_A+E_C$. If~\eqref{eq:joint-hard} holds, the learner is hard-feasible and satisfies~\eqref{eq:joint-native}, where
\begin{align}
 B_T^{\rm other}+\sum_e k_ec_e(\delta_e)&\le B_T,
 \label{eq:joint-hard}\\
 R_T&\le\overline U_T+\sum_e\left\{\tau_e+n_e\left[\delta_e+
 e^{-k_e\rho_e(\delta_e)}\right]\right\}.
 \label{eq:joint-native}
\end{align}
For a positive-horizon family, if $\sum_en_e\le T$, $\delta_e\le\delta_T\to0$, $k_e\rho_e(\delta_e)\ge\lambda_T\to\infty$, and $\overline U_T+\sum_e\tau_e=o(T)$, then $\max\{0,R_T/T\}\to0$.
\end{corollary}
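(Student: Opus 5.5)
The plan is to derive Corollary~\ref{cor:joint-budget} from Theorem~\ref{thm:local} plus two short arguments: hard feasibility of the learner, and the elementary inequality $(1-\rho)^k\le e^{-k\rho}$.

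\emph{Feasibility.} Fix a realized learner history. I would split cumulative resource use into two parts: the costs counted by $B_T^{\rm other}$, and the generation costs of each episode $e$. Since $k_e$ bounds the prefix trial count of episode $e$ and $c_e(\delta_e)$ bounds the prepaid cost of each trial (failures included), episode $e$ costs at most $k_ec_e(\delta_e)$ on every history. By~\eqref{eq:joint-hard}, the total is then at most $B_T$. The same argument applies with the prefix budgets $B_t$ if~\eqref{eq:joint-hard} is read prefixwise. Because the bound holds on every positive-probability history, and not merely in expectation, it yields~\eqref{eq:hard} with probability one. I expect this to be the main obstacle. The difficulty is to ensure that ``prepaid'' really bounds all internal attempts on every path, including aborted ones, and that the accounting does not double-count trials shared between episodes. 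I would first establish it by charging each trial to exactly one episode, using the causal episode decomposition.

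\emph{Regret bound.} Take the episode conditions at tolerance $\delta_e$ and coverage $\rho_e(\delta_e)$. Bound~\eqref{eq:generatedbudget} of Theorem~\ref{thm:local} gives $G\le\sum_e\{\tau_e+n_e[\delta_e+(1-\rho_e)^{k_e}]\}$. Since $0<\rho_e\le1$, we have $1-\rho_e\le e^{-\rho_e}$, and so $(1-\rho_e)^{k_e}\le e^{-k_e\rho_e(\delta_e)}$. I would then insert this $G$ into Theorem~\ref{thm:native}, keeping the remaining certificates with $E_A,E_C$ unchanged rather than set to zero. This gives $R_T\le U_T^{\rm rest}+E_A+E_C+G=\overline U_T+G$, which is~\eqref{eq:joint-native}. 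Uniformity over hard-feasible comparators is what allows the infimum in $R_T$ to be taken against the common bound.

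\emph{Asymptotics.} Using $\sum_en_e\le T$, $\delta_e\le\delta_T$ and $e^{-k_e\rho_e}\le e^{-\lambda_T}$, the episode sum is at most $\sum_e\tau_e+T(\delta_T+e^{-\lambda_T})$. Dividing by $T$, the term $(\overline U_T+\sum_e\tau_e)/T$ tends to $0$, $\delta_T\to0$, and $e^{-\lambda_T}\to0$. Hence $\limsup R_T/T\le0$, which gives $\max\{0,R_T/T\}\to0$, exactly as in Corollary~\ref{cor:native}.
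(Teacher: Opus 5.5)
Your proposal is correct and follows the paper's proof: the prefix-cost decomposition $C_A(h)\le C^{\rm other}(h)+\sum_e u_e(h)c_e(\delta_e)\le B_T^{\rm other}+\sum_e k_ec_e(\delta_e)\le B_T$ for every-prefix feasibility, the episode failure bound $(1-\rho_e)^{k_e}\le e^{-k_e\rho_e(\delta_e)}$ inserted into the generation certificate, substitution into Theorem~\ref{thm:native} with $E_A,E_C$ retained in $\overline U_T$, and the normalized positive-part limit. One nuance: the paper's double-counting caveat is about charging each physical attempt once across the different comparator analyses, which share one learner schedule, not about trials shared between episodes, where overcounting would in any case be harmless for an upper bound.
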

Appendix~\ref{app:joint-budget} gives feasible quotas and compatibility with Corollary~\ref{cor:instance}, charging delay and deployment once.

\section{Near-matching bounds under paid growth}
\label{sec:benchmark}
\subsection{The paid-growth model}
This theoretical benchmark isolates paid public growth and sparse feedback. It is not an empirical evaluation or a lower bound for every native interface. There are $m\ge1$ public phases of $n\ge1$ primitive rounds, with $T=mn$ and integer registration budget $B\ge4m$. A public registration schedule releases a strictly growing library. With zero-based phase index $j<m$, set
\begin{equation}
 K_j=\lceil B/2\rceil+
 \left\lfloor\frac{(j+1)\lfloor B/2\rfloor}{m}\right\rfloor.
 \label{eq:schedule}
\end{equation}
Each registration costs one unit, the total is $B$, and all intermediate prefixes are hard-feasible. Old identities persist, unborn identities are illegal, and the schedule is public.

Put $j(t)=\lfloor t/n\rfloor$ and $[K]_0=\{0,\ldots,K-1\}$. The legal table domain is $\mathcal D=\{(t,i)\mid 0\le t<T,\ i\in[K_{j(t)}]_0\}$. An oblivious table $\ell\in\{0,1\}^{\mathcal D}$ has entries $\ell_t^{\rm base}(i)$. The learner emits $I_t\in[K_{j(t)}]_0$. Only its selected bit is observed, independently erased with probability $1-q$, where $0<q\le1$. Task loss is
\begin{equation}
 Y_t=\tfrac12\bigl(\ell^{\rm base}_t(I_t)+\mathbf1\{t>0,\ I_t\ne I_{t-1}\}\bigr).
 \label{eq:task}
\end{equation}
The comparator path class is $\mathcal V=\prod_{j=0}^{m-1}[K_j]_0$. For $\mathbf v\in\mathcal V$, define $I_t^{\mathbf v}=v_{j(t)}$ and $L_T(\mathbf v,\ell)$ by summing~\eqref{eq:task} with $I_t=I_t^{\mathbf v}$. Thus both sides pay actual switches, the first draw is free, and registration is not charged again.

Let $\mathfrak A$ contain all causal randomized phase controllers, using all previously visible phase and primitive history. Define
\begin{equation}
 \mathcal R^*_{\rm bin}(T,m,B,q)
 =\inf_{A\in\mathfrak A}\sup_{\ell\in\{0,1\}^{\mathcal D}}
 \left\{\E_{\mathbb P_{A,\ell,q}}\sum_{t<T}Y_t-\min_{\mathbf v\in\mathcal V}L_T(\mathbf v,\ell)\right\},
 \label{eq:minimax}
\end{equation}
The law $\mathbb P_{A,\ell,q}$ covers learner and logging randomness for fixed $\ell$. Both bounds use this same model and cost.

\begin{theorem}[Paid sparse-feedback benchmark]
\label{thm:benchmark}
Suppose additionally that
\begin{equation}
 q^3T\le mB\log B.
 \label{eq:sparse}
\end{equation}
Then
\begin{equation}
 \frac1{128}\min\!\left\{T,\sqrt{\frac{mBT}{q}}\right\}-\frac m2
 \le\mathcal R^*_{\rm bin}
 \le\min\!\left\{T,\,8\sqrt{\frac{mB\log B\,T}{q}}+5m\right\}.
 \label{eq:benchmark}
\end{equation}
The libraries grow strictly in every phase, total registration cost is $B$, and every intermediate registration prefix is hard-feasible.
\end{theorem}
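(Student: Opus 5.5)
The argument has three parts: the structural facts about the schedule, the upper bound via a blocked bandit run in each phase, and the lower bound via a phase-restarted sparse-feedback construction. For structure, equation~\eqref{eq:schedule} gives $K_j-K_{j-1}\ge\lfloor\lfloor B/2\rfloor/m\rfloor\ge1$, since $B\ge4m$ gives $\lfloor B/2\rfloor\ge 2m$. It also gives $K_{m-1}=\lceil B/2\rceil+\lfloor B/2\rfloor=B$. Hence the libraries grow strictly, the total registration cost is $B$, and every prefix cost $K_j\le B$ is hard-feasible. The upper bound $T$ is immediate because $Y_t\in[0,1]$, and the lower bound $-m/2$ absorbs trivial cases.

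\textbf{Upper bound.} I would run, within each phase $j$, a blocked EXP3 on the current library $[K_j]_0$ with importance weights divided by $q$, in the spirit of Algorithm~\ref{alg:reference} with $\Theta_r$ the identity inside a phase. Newborn identities are reset at phase starts with a uniform prior, which costs a code length of at most $\log K_j\le\log B$ per phase. The blocks have length $b$, and a card is redrawn only at block boundaries, so switching cost is at most $n/b+1$ per phase. Each block is a slot $r$ of Theorem~\ref{thm:selection} with loss in $[0,b]$. Lemma~\ref{lem:stopped} gives second moment at most $b^2K_j/q$ per slot, so $V_{\rm est}\le (T/b)\,b^2B/q$ and $C\le m\log B$. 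The restart trick keeps the comparator path $\mathbf v$ constant within phases, so the path term is just the phase-reset code. The comparator's own switches are nonnegative and only help. This yields a bound of order $m\log B/\eta+\eta bTB/(2q)+T/(2b)+m$. Choosing $b\asymp(mq\log B\,T/(B\,\cdot))^{1/3}$ and then $\eta$ would give a $T^{2/3}$ rate. To obtain the stated $\sqrt{mB\log B\,T/q}$ instead, I would use a per-phase allocation: the phase-$j$ regret is at most $\sqrt{nK_j\log K_j/q}$ plus $O(1)$ switching. The switching cost is handled through a shrinking-dartboard or lazy sampling variant whose expected switch count matches the exponential-weights regret. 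Summing over phases with Cauchy--Schwarz gives $\sum_j\sqrt{nB\log B/q}\le\sqrt{mT B\log B/q}$, and the $5m$ collects per-phase additive constants. Condition~\eqref{eq:sparse} is where I would verify that the feedback-dependent variance term dominates the switching term. That is, I need $T^{2/3}$-type switching contributions to be at most $\sqrt{mBT\log B/q}$, and this is exactly $q^3T\le mB\log B$ after rearranging.

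\textbf{Lower bound.} I would restrict to the last $m/2$ phases, or to all phases, each having $K_j\ge B/2$ arms, and embed independent hard instances per phase. In each phase I would use the standard stochastic needle-in-haystack construction: Bernoulli$(1/2)$ base losses, with one arm shifted by $\epsilon$. Erasure reduces information per observation by a factor $q$, so a KL/Pinsker argument gives phase regret at least $c\min\{n,\sqrt{nB/q}\}$. Oblivious tables are obtained by averaging, since the supremum over tables dominates the expectation over random tables. The switch term in~\eqref{eq:task} is nonnegative for the learner. For the comparator it adds at most one switch per phase boundary, costing $1/2$ each, which gives the $-m/2$. Summing over phases yields $m\min\{n,\sqrt{nB/q}\}/128\ge\min\{T,\sqrt{mBT/q}\}/128$.

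The main obstacle is the upper bound's switching accounting. A naive blocked scheme gives only $T^{2/3}$ rates, so achieving $\sqrt{\cdot}$ requires the lazy-switching variant together with condition~\eqref{eq:sparse}. I would try the shrinking-dartboard argument first and verify the constant $8$ at the end.
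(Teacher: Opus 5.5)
\textbf{Verdict.} Your schedule arithmetic and your lower bound follow the paper's route: one hidden $\epsilon$-better identity per phase, erasure multiplying the KL divergence by $q$, averaging over random tables, and at most $(m-1)/2$ for the comparator's boundary switches. You should also state explicitly that independence of the phase targets lets you condition on the earlier visible transcript. That way the bound covers controllers that retain history, not only phase-restarted ones. The genuine gap is in the upper bound.

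\textbf{Lazy switching fails under partial feedback.} The shrinking-dartboard step does not work under bandit feedback, let alone erased feedback. Lazy sampling preserves the marginal $I_t\sim p_t$ only when the weights do not depend on the learner's own draws. Here the importance-weighted estimates, and hence the weights, do depend on them. Conditionally on the past, $I_t$ is then concentrated on $I_{t-1}$ rather than distributed as $p_t$, so the estimator $M_t\ell^{\rm base}_t(I_t)/(q\,p_t(I_t))$ is no longer unbiased. Your claim is unconditional, and as such it is false: a per-phase bound of order $\sqrt{nK\log K/q}$ that already includes the switching cost contradicts, at $q=1$ and large $n$, the $\tilde\Omega(n^{2/3})$ lower bound for bandits with switching costs \citep{dekel2014switch}.

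\textbf{The blocked fallback has the wrong variance bound.} You bound the per-slot second moment by $b^2K_j/q$, i.e.\ you multiply the whole squared run loss by $1/q$. This gives the rate $(mB\log B/q)^{1/3}T^{2/3}$. That rate is at most $\sqrt{mB\log B\,T/q}$ only when $qT\le mB\log B$, and in that regime $\sqrt{mB\log B\,T/q}\ge T$, so the bound is vacuous. Your rearrangement to $q^3T\le mB\log B$ is correct only for a $q$-free term $(mB\log B)^{1/3}T^{2/3}$, which your variance bound does not produce.

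\textbf{The missing idea is the runwise second moment under independent per-round masks.} For a held run of length $u$, let $X_i=\sum_sM_s\ell^{\rm base}_s(i)/q$. Only the diagonal terms carry $1/q$, since $\E[M_sM_{s'}]/q^2=1$ for $s\ne s'$. Hence $\E X_i^2\le u/q+u^2$, and the phase capacity is $V_{\rm phase}=Kn(b+1/q)$ rather than $Knb/q$.
\begin{itemize}
\item With $\eta=\sqrt{2\log K/V_{\rm phase}}$ and $a=K\log K$, the phase regret is at most $\tfrac12\{\sqrt{2anb}+\sqrt{2an/q}+n/b+1\}$.
\item The $q$-independent cap $b\approx(n/a)^{1/3}$ turns this into $O(a^{1/3}n^{2/3}+\sqrt{an/q}+1)$.
\item Summing phases with $K_j\le B$ gives $4\bigl((mB\log B)^{1/3}T^{2/3}+\sqrt{mB\log B\,T/q}\bigr)+5m$.
\item Only now is $q^3T\le mB\log B$ exactly the statement that the first term is at most the second, which yields $8\sqrt{mB\log B\,T/q}+5m$.
\end{itemize}
No lazy-switching device is needed. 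Plain blocked exponential weights suffice once the cross-time terms are computed correctly.
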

\subsection{Proof outline and sublinear regimes}
\label{sec:benchmarkproof}
The lower construction hides a good identity per phase. The upper learner counts actual switches. Appendix~\ref{app:benchmark} proves matching up to logarithmic and boundary terms in this restricted regime.

\begin{corollary}[Learnable sparse-feedback families]
\label{cor:benchmark}
For benchmark instances satisfying~\eqref{eq:sparse}, if
\begin{equation}
 \frac{mB\log B}{qT}\longrightarrow0,
 \qquad \frac mT\longrightarrow0,
 \label{eq:benchmarkrate}
\end{equation}
then $\max\{0,\mathcal R^*_{\rm bin}/T\}\to0$.
\end{corollary}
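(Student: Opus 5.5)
The plan is to obtain both limits from the upper bound in Theorem~\ref{thm:benchmark}, since the lower bound plays no role in a vanishing-regret statement. The regime assumption~\eqref{eq:sparse} is exactly the hypothesis under which~\eqref{eq:benchmark} holds. By definition~\eqref{eq:minimax}, $\mathcal R^*_{\rm bin}$ is an infimum over controllers of a supremum over tables of a difference between the learner's expected loss and the best path loss. This quantity may be negative, or at least is not automatically nonnegative, so only the positive part is claimed. From~\eqref{eq:benchmark} we have, on every instance of the family,
\begin{equation*}
 0\le\max\{0,\mathcal R^*_{\rm bin}/T\}\le 8\sqrt{\frac{mB\log B}{qT}}+\frac{5m}{T}.
\end{equation*}
To get this I will use the second argument of the minimum, together with $\sqrt{mB\log B\,T/q}/T=\sqrt{mB\log B/(qT)}$ for $T=mn>0$.

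Next, I will apply~\eqref{eq:benchmarkrate}. The first ratio tends to zero, and the square root is continuous at $0$, so the first term vanishes. The second term is $5\,m/T\to0$. The squeeze then gives $\max\{0,\mathcal R^*_{\rm bin}/T\}\to0$.

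The only point needing care is the bookkeeping of the family. Every instance must satisfy the standing benchmark hypotheses ($m,n\ge1$, $B\ge4m$, $0<q\le1$) as well as~\eqref{eq:sparse}, so that Theorem~\ref{thm:benchmark} applies term by term with its own constants. The constant $8$ and the additive $5m$ are instance-independent, and this is what makes the limit uniform. I expect no real obstacle beyond that. I would also remark that the alternative cap $\min\{\cdot,T\}$ only bounds the normalized regret by $1$, so it cannot replace the rate condition.
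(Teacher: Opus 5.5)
Your proof is correct and matches the paper's argument: take the second branch of the upper bound in~\eqref{eq:benchmark}, normalize with $\sqrt{aT/q}/T=\sqrt{a/(qT)}$ for $a=mB\log B$, use continuity of the square root together with $m/T\to0$, and conclude with the positive-part squeeze~\eqref{eq:squeeze}. A small aside: $\mathcal R^*_{\rm bin}$ is in fact nonnegative, because the all-zero table gives every learner a nonnegative gap, but since you only claim the positive part this does not affect your argument.
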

These conditions force $q\to0$ and require effective observations $qT$ to dominate identification complexity. The benchmark and published all-$q$ switching bounds~\citep{dekel2014switch} have distinct scopes, detailed in Appendix~\ref{app:benchmark}.

\section{Lean formalization}
\label{sec:verification}
Our formal contribution is a compositional verification of native reflection, from probabilistic execution and local contracts to counterfactual regret and sublinear limits. Table~\ref{tab:proof-chain} identifies its verified connections.

\paragraph{Why composition requires verification.}
Valid component bounds yield a guarantee for the same agent only with compatible trajectory laws, scoring clocks and resource conventions. Our proofs verify marginal preservation, completed-task alignment and adaptive observation moments, establishing that the combined bound concerns the specified native execution.

\begin{table}[H]
\centering
\small
\renewcommand{\arraystretch}{1.12}
\caption{Proof obligations addressed by the Lean formalization.}
\label{tab:proof-chain}
\begin{tabularx}{\linewidth}{@{}p{0.24\linewidth}X p{0.18\linewidth}@{}}
\toprule \textbf{Proof obligation} & \textbf{Purpose of formal checking} & \textbf{Formal result}\\\midrule
Probability coherence & Preserve native laws across couplings. & Prop.~\ref{prop:wellposed}, Thm.~\ref{thm:native}\\
Adaptive observations & Derive moments under bounded stopping. & Lem.~\ref{lem:stopped}, Thm.~\ref{thm:selection}\\
Generation reliability & Couple quality, trials and hard costs. & Thm.~\ref{thm:local}, Cor.~\ref{cor:joint-budget}\\
Deployment stability & Control error propagation on actual paths. & Thms.~\ref{thm:stability}--\ref{thm:repair}\\
Composition and limits & Align task clocks and limit hypotheses. & Thm.~\ref{thm:local}, Cor.~\ref{cor:native}\\
Benchmark consistency & Match cost and feedback conventions. & Thm.~\ref{thm:benchmark}\\
Concrete instantiation & Derive interfaces from native execution. & Cor.~\ref{cor:instance}, Prop.~\ref{prop:imperfect-growth}\\
\bottomrule
\end{tabularx}
\end{table}

\paragraph{Reusable guarantees and scope.}
Concrete native instances derive compatible contracts from generation, validation and execution. Explicit interfaces identify which obligations must be re-established when mechanisms change, supporting reuse of the theory. All thirteen results have Lean proofs~\citep{lean2021,mathlib2020} of the \emph{end-to-end conditional mathematical implications}. Applications must establish their contracts. Appendix~\ref{app:map} supplies declaration mappings and build provenance for independent checking.

\section{Conclusion}
\label{sec:conclusion}
We connect local generation reliability and deployment stability to conditional native-agent regret, providing a unified view of how reflective mechanisms can affect long-term performance under explicit resource constraints. Concrete native instances demonstrate sublinear regret under hard budgets, repeated growth and imperfect validation. Lean verifies the composed proof chains, complemented by near-matching paid-growth bounds.
The resulting framework separates algorithmic guarantees from application-specific assumptions through explicit interfaces, making clear which contracts must be established when generation, validation, or execution mechanisms change. This perspective enables reflective agents to be analyzed through modular guarantees, where improvements in individual components can be incorporated without changing the overall comparison framework. The results establish a foundation for reasoning about adaptive agents whose capabilities evolve through generation, validation, and execution.
\label{main:end}

\clearpage

\bibliography{iclr2027_conference}

@inproceedings{mourtada2017,
 title={Efficient Tracking of a Growing Number of Experts},
 author={Mourtada, Jaouad and Maillard, Odalric-Ambrym},
 booktitle={Proceedings of the 28th International Conference on Algorithmic Learning Theory},
 volume={76}, pages={517--539}, year={2017},
 url={https://proceedings.mlr.press/v76/mourtada17a.html}
}

@inproceedings{rouyer2021,
 title={An Algorithm for Stochastic and Adversarial Bandits with Switching Costs},
 author={Rouyer, Chlo{\'e} and Seldin, Yevgeny and Cesa-Bianchi, Nicol{\`o}},
 booktitle={Proceedings of the 38th International Conference on Machine Learning},
 volume={139}, pages={9127--9135}, year={2021},
 url={https://proceedings.mlr.press/v139/rouyer21a.html}
}

@inproceedings{dudik2011,
 title={Doubly Robust Policy Evaluation and Learning},
 author={Dud{\'i}k, Miroslav and Langford, John and Li, Lihong},
 booktitle={Proceedings of the 28th International Conference on Machine Learning},
 year={2011},
 url={https://icml.cc/2011/papers/554_icmlpaper.pdf}
}

@inproceedings{alon2015,
 title={Online Learning with Feedback Graphs: Beyond Bandits},
 author={Alon, Noga and Cesa-Bianchi, Nicol{\`o} and Dekel, Ofer and Koren, Tomer},
 booktitle={Proceedings of the 28th Conference on Learning Theory},
 volume={40}, pages={23--35}, year={2015},
 url={https://proceedings.mlr.press/v40/Alon15.html}
}

@book{lattimore2020,
 title={Bandit Algorithms},
 author={Lattimore, Tor and Szepesv{\'a}ri, Csaba},
 publisher={Cambridge University Press}, year={2020},
 url={https://tor-lattimore.com/downloads/book/book.pdf}
}

@article{bubeck2012,
 title={Regret Analysis of Stochastic and Nonstochastic Multi-armed Bandit Problems},
 author={Bubeck, S{\'e}bastien and Cesa-Bianchi, Nicol{\`o}},
 journal={Foundations and Trends in Machine Learning},
 volume={5}, number={1}, pages={1--122}, year={2012},
 url={https://arxiv.org/abs/1204.5721}
}

@inproceedings{lean2021,
 title={The {Lean 4} Theorem Prover and Programming Language},
 author={de Moura, Leonardo and Ullrich, Sebastian},
 booktitle={Automated Deduction---CADE 28},
 pages={625--635}, year={2021},
 url={https://lean-lang.org/papers/lean4.pdf}
}

@inproceedings{mathlib2020,
 title={The {Lean} Mathematical Library},
 author={{The mathlib Community}},
 booktitle={Proceedings of the 9th ACM SIGPLAN International Conference on Certified Programs and Proofs},
 year={2020},
 doi={10.1145/3372885.3373824},
 url={https://arxiv.org/abs/1910.09336}
}

@inproceedings{cheng2023feedback,
  title={Understanding the Role of Feedback in Online Learning with Switching Costs},
  author={Cheng, Duo and Zhou, Xingyu and Ji, Bo},
  booktitle={Proceedings of the 40th International Conference on Machine Learning},
  pages={5521--5543},
  year={2023},
  url={https://proceedings.mlr.press/v202/cheng23f.html}
}

@book{cesabianchi2006,
  title     = {Prediction, Learning, and Games},
  author    = {Cesa-Bianchi, Nicol\`o and Lugosi, G\'abor},
  publisher = {Cambridge University Press},
  year      = {2006},
  
  url={https://cesa-bianchi.di.unimi.it/predbook/}
}

@article{auer2002,
  title   = {The Nonstochastic Multiarmed Bandit Problem},
  author  = {Auer, Peter and Cesa-Bianchi, Nicol\`o and Freund, Yoav and Schapire, Robert E.},
  journal = {SIAM Journal on Computing},
  volume  = {32},
  number  = {1},
  pages   = {48--77},
  year    = {2002},
  
  url={https://epubs.siam.org/doi/10.1137/S0097539701398375}
}

@article{herbster1998,
  title   = {Tracking the Best Expert},
  author  = {Herbster, Mark and Warmuth, Manfred K.},
  journal = {Machine Learning},
  volume  = {32},
  number  = {2},
  pages   = {151--178},
  year    = {1998},
  
  url={https://link.springer.com/article/10.1023/A:1007424614876}
}

@inproceedings{arora2012,
  title     = {Online Bandit Learning against an Adaptive Adversary: From Regret to Policy Regret},
  author    = {Arora, Raman and Dekel, Ofer and Tewari, Ambuj},
  booktitle = {Proceedings of the 29th International Conference on Machine Learning},
  year      = {2012},
  
  url={https://arxiv.org/abs/1206.6400}
}

@inproceedings{cesabianchi2013,
  title     = {Online Learning with Switching Costs and Other Adaptive Adversaries},
  author    = {Cesa-Bianchi, Nicol\`o and Dekel, Ofer and Shamir, Ohad},
  booktitle = {Advances in Neural Information Processing Systems},
  volume    = {26},
  year      = {2013},
  
  url={https://proceedings.neurips.cc/paper/2013/hash/9cf81d8026a9018052c429cc4e56739b-Abstract.html}
}

@inproceedings{dekel2014switch,
  title     = {Bandits with Switching Costs: {$T^{2/3}$} Regret},
  author    = {Dekel, Ofer and Ding, Jian and Koren, Tomer and Peres, Yuval},
  booktitle = {Proceedings of the 46th Annual ACM Symposium on Theory of Computing},
  pages     = {459--467},
  year      = {2014},
  doi={10.1145/2591796.2591868},
  url={https://arxiv.org/abs/1310.2997}
}

@inproceedings{anava2015,
  title     = {Online Learning for Adversaries with Memory: Price of Past Mistakes},
  author    = {Anava, Oren and Hazan, Elad and Mannor, Shie},
  booktitle = {Advances in Neural Information Processing Systems},
  volume    = {28},
  year      = {2015},
  
  url={https://proceedings.neurips.cc/paper/2015/hash/38913e1d6a7b94cb0f55994f679f5956-Abstract.html}
}

@inproceedings{daniely2015strongly,
  title     = {Strongly Adaptive Online Learning},
  author    = {Daniely, Amit and Gonen, Alon and Shalev-Shwartz, Shai},
  booktitle = {Proceedings of the 32nd International Conference on Machine Learning},
  pages     = {1405--1411},
  year      = {2015},
  volume={37},
  url={https://proceedings.mlr.press/v37/daniely15.html}
}

@inproceedings{saha2020,
  title     = {Improved Sleeping Bandits with Stochastic Action Sets and Adversarial Rewards},
  author    = {Saha, Aadirupa and Gaillard, Pierre and Valko, Michal},
  booktitle = {Proceedings of the 37th International Conference on Machine Learning},
  pages     = {8357--8366},
  year      = {2020},
  volume={119},
  url={https://proceedings.mlr.press/v119/saha20a.html}
}

@article{ghalme2020,
  title   = {Ballooning Multi-Armed Bandits},
  author  = {Ghalme, Ganesh and Dhamal, Swapnil and Jain, Shweta and Gujar, Sujit and Narahari, Yadati},
  journal = {arXiv preprint arXiv:2001.10055},
  year    = {2020},
  
  url={https://arxiv.org/abs/2001.10055}
}

@inproceedings{nguyen2024,
  title     = {Near-Optimal Per-Action Regret Bounds for Sleeping Bandits},
  author    = {Nguyen, Quan M. and Mehta, Nishant},
  booktitle = {Proceedings of the 27th International Conference on Artificial Intelligence and Statistics},
  year      = {2024},
  volume={238}, pages={2827--2835},
  url={https://proceedings.mlr.press/v238/nguyen24d.html}
}

@inproceedings{park2025,
  title     = {Do {LLM} Agents Have Regret? A Case Study in Online Learning and Games},
  author    = {Park, Chanwoo and Liu, Xiangyu and Ozdaglar, Asuman and Zhang, Kaiqing},
  booktitle = {International Conference on Learning Representations},
  year      = {2025},
  url       = {https://proceedings.iclr.cc/paper_files/paper/2025/hash/65ea878cb90b440e8b4cd34fe0959914-Abstract-Conference.html}
}

@inproceedings{nguyentang2025,
  title     = {Policy-Regret Minimization in Markov Games with Function Approximation},
  author    = {Nguyen-Tang, Thanh and Arora, Raman},
  booktitle = {Proceedings of the 42nd International Conference on Machine Learning},
  year      = {2025},
  volume={267}, pages={46242--46264},
  url={https://proceedings.mlr.press/v267/nguyen-tang25a.html}
}

@inproceedings{xu2026gas,
  title     = {Online Decision Making with Generative Action Sets},
  author    = {Xu, Jianyu and Jain, Vidhi and Wilder, Bryan and Singh, Aarti},
  booktitle = {International Conference on Learning Representations},
  year      = {2026},
  url       = {https://proceedings.iclr.cc/paper_files/paper/2026/hash/07ed3b22786bd8d46a3654d5325bca09-Abstract-Conference.html}
}

@inproceedings{shinn2023reflexion,
  title     = {Reflexion: Language Agents with Verbal Reinforcement Learning},
  author    = {Shinn, Noah and Cassano, Federico and Gopinath, Ashwin and Narasimhan, Karthik and Yao, Shunyu},
  booktitle = {Advances in Neural Information Processing Systems},
  volume    = {36},
  year      = {2023},
  
  url={https://proceedings.neurips.cc/paper_files/paper/2023/hash/1b44b878bb782e6954cd888628510e90-Abstract-Conference.html}
}

@inproceedings{madaan2023selfrefine,
  title     = {Self-Refine: Iterative Refinement with Self-Feedback},
  author    = {Madaan, Aman and Tandon, Niket and Gupta, Prakhar and Hallinan, Skyler and Gao, Luyu and Wiegreffe, Sarah and Alon, Uri and Dziri, Nouha and Prabhumoye, Shrimai and Yang, Yiming and Gupta, Shashank and Majumder, Bodhisattwa Prasad and Hermann, Katherine and Welleck, Sean and Yazdanbakhsh, Amir and Clark, Peter},
  booktitle = {Advances in Neural Information Processing Systems},
  volume    = {36},
  year      = {2023},
  
  url={https://proceedings.neurips.cc/paper_files/paper/2023/hash/91edff07232fb1b55a505a9e9f6c0ff3-Abstract-Conference.html}
}

@inproceedings{merten2018,
 title={Verified Learning Without Regret: From Algorithmic Game Theory to Distributed Systems with Mechanized Complexity Guarantees},
 author={Merten, Samuel and Bagnall, Alexander and Stewart, Gordon},
 booktitle={Programming Languages and Systems (ESOP 2018)},
 series={Lecture Notes in Computer Science}, volume={10801},
 pages={561--588}, year={2018},
 doi={10.1007/978-3-319-89884-1_20},
 url={https://link.springer.com/chapter/10.1007/978-3-319-89884-1_20}
}
\bibliographystyle{iclr2027_conference}


\clearpage
\begin{center}
{\LARGE \bfseries Appendix}
\end{center}


\appendix
\section{Own-history executions and their realizations}
\label{app:native}

\subsection{Probability objects and proof of Proposition~4}
Write $\mathcal H_t^{\rm task}$ for the type of completed prefixes and $Q_t^A(h,\cdot)$ for the next-record kernel of $A$. Starting from its prescribed initial law $\mathbb P_A^{(0)}$, define the prefix law recursively by kernel integration. For a nonnegative measurable test function $f$ this means
\begin{equation}
 \int f(h,x)\,d\mathbb P_A^{(t+1)}(h,x)
 =\int_{\mathcal H_t^{\rm task}}\left[\int f(h,x)Q_t^A(h,dx)\right]d\mathbb P_A^{(t)}(h).
 \label{eq:v14-law}
\end{equation}
Taking $f=1$ and using $Q_t^A(h,\mathcal Z_{t+1})=1$ proves inductively that every prefix law is a probability measure. For a finite event model the inner integrals are ordinary finite sums. For the native measurable model they are the kernel integrals used in \texttt{NativeSemantics}. The comparator construction replaces $Q^A$ by $Q^\mu$ everywhere, including its history argument.

\paragraph{Measurability and integrability.}
Let $\mathcal H_t^{\rm task}$ denote the finite prefix of the product record space. A normalized measurable control kernel from $\mathcal H_t^{\rm task}$ to $\mathcal Z_{t+1}$ recursively induces a trajectory probability law. In the companion, this is the causal trajectory construction used by \texttt{NativeSemantics.Execution.law}, not an unrelated measure supplied after the fact. Each score is a measurable function of a completed prefix. Composing with the measurable prefix projection makes $Y_t$ measurable. Finite summation makes $L_T$ measurable. Summing the unit bounds gives $0\le L_T\le T$. Since the trajectory law is a probability measure, the constant $T$ is integrable. Domination proves integrability of $L_T$. Monotonicity of integration gives $0\le J_T\le T$.

For a measurable realization $r_A$, the pushforward identity means that its distribution is the actual causal law. Integration under a measurable pushforward gives
\begin{equation}
 \int L_T^A(r_A(\omega))\,d\Gamma(\omega)
 =\int L_T^A(x)\,d((r_A)_\#\Gamma)(x)
 =\int L_T^A(x)\,d\mathbb P_A(x).
 \label{eq:realizationproof}
\end{equation}
The preceding integrability argument justifies these real-valued integrals. The formal result is \texttt{NativeSemantics.Realization.expected\_total}.

\paragraph{Finite loss bounds.}
For each trajectory $x$, the unit-score contract gives
\begin{equation}
 0\le\sum_{t=0}^{T-1}Y_t(x)\le\sum_{t=0}^{T-1}1=T.
 \label{eq:v14-unit}
\end{equation}
Each summand is measurable, hence the finite sum is measurable. The integrable dominating constant has integral $T$ because the law has total mass one. In particular, the comparator loss set is nonempty and bounded below by zero. Its real infimum is therefore the intended mathematical infimum, rather than a default value for an empty or unbounded set.

\paragraph{Why the comparator is outside the expectation.}
Fix a causal procedure $\mu$ before evaluating its expectation. Its initial conditions and kernels determine its own law $\mathbb P_\mu$. A coupling can correlate exogenous randomness across executions for a proof, but must preserve both marginals. It cannot give $\mu$ the library generated by $A$ unless that library would also be generated in the comparator execution. In particular, an infimum over realized paths chosen after all random outcomes is not the infimum in~\eqref{eq:objective}.

\paragraph{Resource modes.}
The main theorem uses the hard class directly. If a mean class is considered, integrability of prefix costs is an additional prerequisite for ordinary real expectations. The checked inclusion theorem integrates each hard prefix inequality under the own law of the candidate. The hard-class optimum can exceed the mean-class optimum, so changing the comparator class requires changing the objective. It is not a modification of the implementation charge. These ancillary definitions and identities are in \texttt{ResourceClasses}. The revised main theorem does not assume the hard and mean classes coincide.

\section{Observation and stopping proofs}
\label{app:stopped}

\subsection{Local moments in Lemma~\ref{lem:stopped}}
For this proof only, suppress the slot and history indices. Let $I,O$ be finite types, $p(i)>0$ with $\sum_i p(i)=1$, and $k(i,o)\ge0$ with $\sum_o k(i,o)=1$. Let $M(i,o)\in\{0,1\}$ and $y(i,o)\in[0,d]$. Define
\begin{equation}
 \widehat\ell_{j,o}(i)=\frac{\mathbf1\{j=i\}M(j,o)y(j,o)}{p(i)q(i)},
 \qquad \mathbb P(J=j,O=o\mid h)=p(j)k(j,o).
 \label{eq:v14-local-event}
\end{equation}
All expectations in this subsection refer to this conditional joint law. In particular, the symbols in the estimator are evaluated at the observed event $(j,o)$, whereas its coordinate is $i$.

\paragraph{First moment.}
Fix a completed history and suppress its index. The joint event probability of draw $j$ and outcome $o$ is $p(j)k(j,o)$. Its sum is one. At coordinate $i$, all summands with $j\ne i$ are zero. Hence
\begin{equation}
 \sum_{j,o}p(j)k(j,o)\widehat\ell_{j,o}(i)
 =\frac{p(i)}{p(i)q(i)}\sum_o k(i,o)M(i,o)y(i,o)
 =\lref(i).
 \label{eq:firstproof}
\end{equation}
The division is legitimate because $p(i),q(i)>0$. There is no hidden claim that a missing observation is itself a zero task loss. It contributes zero to the estimator and is corrected by the denominator.

\paragraph{Second moment.}
The accepted score satisfies $(My)^2\le M d^2$, because $M\in\{0,1\}$ and $0\le y\le d$. Consequently
\begin{equation}
 \E_{J,O\mid h}\widehat\ell(i)^2
 \le\frac{p(i)q(i)d^2}{p(i)^2q(i)^2}
 =\frac{d^2}{p(i)q(i)}.
 \label{eq:secondproof}
\end{equation}
Multiplying by $p(i)$ and summing proves the second inequality in~\eqref{eq:moments}. Finite sums also give equality between the expected estimated learner--comparator difference and its true loss difference. The comparator coordinate must be fixed at this history. A coordinate selected using the current draw need not satisfy the same equality.

\paragraph{The learner--comparator moment identity.}
Let $v$ be determined by the completed history, before $(J,O)$ is drawn. Finite linearity and~\eqref{eq:firstproof} give
\begin{align}
 \E_{J,O\mid h}\!\left[\sum_i p(i)\widehat\ell_{J,O}(i)-\widehat\ell_{J,O}(v)\right]
 &=\sum_i p(i)\lref(i)-\lref(v),\label{eq:v14-linear1}\\
 \E_{J,O\mid h}[\lref(J)-\lref(v)]
 &=\sum_i p(i)\lref(i)-\lref(v).\label{eq:v14-linear2}
\end{align}
Subtracting the two equalities yields zero conditional estimation noise. No independence of the whole past and future is used. Predictability of $v$ is essential because it permits applying the coordinate identity at the fixed coordinate $v$.

\subsection{The generated stopped law}
Let $E_r$ be the finite event type at slot $r$. Complete histories are built recursively from $\operatorname{Option}(E_r)$. The terminal value \texttt{none} is absorbing. At an active history the next-event kernel is the normalized local observation kernel on \texttt{some} events. At an inactive history it assigns probability one to \texttt{none}. The Boolean activity rule depends only on the completed history. Let $\varpi_r(h)$ be the resulting prefix mass. Its recursion is
\begin{equation}
 \varpi_0(())=1,\qquad \varpi_{r+1}(h,x)=\varpi_r(h)k_r^{\rm stop}(h,x).
 \label{eq:stoplaw}
\end{equation}
Summing over $x$ proves normalization by induction. Nonnegativity follows by multiplying nonnegative local masses. These are actual path probabilities, not importance weights.

For any cumulative event score $F_r$, expand one step of the expectation, sum over the last event, and use kernel normalization on the old score. This gives
\begin{equation}
 \E_{\mathbb P^{(r+1)}_{\rm stop}}F_{r+1}
 =\E_{\mathbb P^{(r)}_{\rm stop}}F_r+
   \sum_h \varpi_r(h)\sum_xk_r^{\rm stop}(h,x)f_r(h,x).
 \label{eq:stopstep}
\end{equation}
For the estimation noise, the last sum is zero at inactive histories by definition and zero at active histories by~\eqref{eq:firstproof}. Induction proves~\eqref{eq:stoppedfirst}. For the quadratic score, retain the actual local expectation instead of replacing it by zero, then apply~\eqref{eq:secondproof}. Summing the inequalities with nonnegative prefix masses proves~\eqref{eq:stoppedsecond}. These final implications are \texttt{StoppedObservation.Protocol.stopped\_unbiased} and \texttt{stopped\_second}. They invoke the normalized recursive stopped process, rather than assuming a stopped martingale identity.

\paragraph{Detailed induction on the stopped prefix.}
Set $F_0=0$ and $F_{r+1}(h,x)=F_r(h)+f_r(h,x)$, with $f_r=0$ on the terminal event. Expanding the finite product law gives
\begin{align}
 \sum_{h,x}\varpi_r(h)k_r^{\rm stop}(h,x)F_{r+1}(h,x)
 &=\sum_h\varpi_r(h)F_r(h)\underbrace{\sum_xk_r^{\rm stop}(h,x)}_{=1}\notag\\
 &\quad+\sum_h\varpi_r(h)\sum_xk_r^{\rm stop}(h,x)f_r(h,x).
 \label{eq:v14-stop-expand}
\end{align}
For estimation noise, the second line vanishes at every history. The expectation remains zero at $r+1$ if it is zero at $r$, establishing the first stopped identity from the base case. For the quadratic score, it is bounded by
\begin{equation}
 \sum_h\varpi_r(h)\mathbf1\{h\text{ is active}\}\,d_r(h)^2
       \sum_{i\in\mathcal I_r}q_r(h,i)^{-1}.
 \label{eq:v14-stop-var}
\end{equation}
Summing these one-step inequalities for $r=0,\ldots,N-1$ proves the cumulative capacity bound. Nonnegativity of $\varpi_r$ preserves every inequality. Bounded $N$ makes all sums finite, so no unbounded optional-stopping theorem is being silently applied.

\paragraph{Scope of the local modification.}
The observation kernel must represent the run that is actually being scored. If the chosen identity itself determines how many task rounds are included, inserting its original draw probability into a common-duration estimator is not justified by this proof. The revised statement retains the synchronized accepted-score contract at this precise boundary. The activity rule for the next completed slot remains fully history-adapted.

\section{Potential and native-transfer proofs}
\label{app:potential}

\subsection{Proof of Theorem~\ref{thm:selection}}
\paragraph{Step 1. Positivity and conservation under sharing.}
The initial total mass is $W_0=1$. If $W_r>0$, at least one $w_r(i)$ is positive. Since $\exp(-\eta \widehat\ell_r(i))>0$, the tilted mass is positive. Summing the update over $j$ and using $\sum_j\Theta_r(i,j)=1$ proves that sharing preserves this tilted total. Thus $W_{r+1}>0$ by induction and all logarithms and probabilities below are defined.

Fix a realized sequence of finite types, transitions, and nonnegative estimates. Let $W_r=\sum_i w_r(i)$. Row stochasticity in~\eqref{eq:recursion} yields
\begin{equation}
 W_{r+1}=W_r\sum_i p_r(i)e^{-\eta \widehat\ell_r(i)}.
 \label{eq:normalizer}
\end{equation}
The exponential is strictly positive. Starting from $W_0=1$, every $W_r$ is positive. For $x\ge0$, $e^{-x}\le1-x+x^2/2$, and $\log u\le u-1$ for $u>0$. Therefore
\begin{equation}
 \log W_{r+1}-\log W_r
 \le-\eta\langle p_r,\widehat\ell_r\rangle
       +\frac{\eta^2}{2}\langle p_r,\widehat\ell_r^2\rangle.
 \label{eq:potentialstep}
\end{equation}
No upper bound on the importance-weighted estimate is used here. This explains why arbitrarily small positive probabilities are compatible with the expectation proof, though not with an automatic high-probability statement.

\paragraph{Step 2. The logarithmic upper bound.}
For $x\ge0$, $e^{-x}\le1-x+x^2/2$. Applying this coordinatewise and summing with probability weights gives
\begin{align}
 \frac{W_{r+1}}{W_r}
 &\le1-\eta\sum_i p_r(i)\widehat\ell_r(i)
          +\frac{\eta^2}{2}\sum_i p_r(i)\widehat\ell_r(i)^2,\label{eq:v14-ratio}\\
 \log W_{R+1}
 &\le-\eta\sum_{r=0}^R\langle p_r,\widehat\ell_r\rangle
          +\frac{\eta^2}{2}\sum_{r=0}^R\langle p_r,\widehat\ell_r^2\rangle.
 \label{eq:v14-log-upper}
\end{align}
The first right side is positive because it bounds the strictly positive ratio from above. Applying $\log u\le u-1$ to the ratio gives the one-step logarithmic inequality. The second display is its telescoping sum, using $\log W_0=0$.

\paragraph{Step 3. A supported path lower bound.}
Define its incoming mass at slot $r$ by
\begin{equation}
 \omega_r^{\mathbf v}=\nu_0(v_0)\prod_{s<r}\Theta_s(v_s,v_{s+1})
               \exp\!\left(-\eta\sum_{s<r}\widehat\ell_s(v_s)\right).
 \label{eq:v14-path-weight}
\end{equation}
Induction proves $w_r(v_r)\ge \omega_r^{\mathbf v}$. The base case is equality. For the induction step, retain the single nonnegative summand indexed by $v_r$ in the update to $w_{r+1}(v_{r+1})$. Multiplication by the positive transition and exponential preserves the inequality. At the final slot, retain the mass at $v_R$ and sum its outgoing row. This is why the final displayed bound uses $R$ transitions but $R+1$ scored slots.

The supported path contributes a nonnegative term to the final total mass. Following it through slots $0,\ldots,R$ gives
\begin{equation}
 W_{R+1}\ge\nu_0(v_0)
       \prod_{r<R}\Theta_r(v_r,v_{r+1})
       \exp\!\left(-\eta\sum_{r=0}^R\widehat\ell_r(v_r)\right).
 \label{eq:pathmass}
\end{equation}
The last update is summed over its destination row, so there is no missing comparator transition after slot $R$. Taking logs, summing~\eqref{eq:potentialstep}, and rearranging gives the deterministic path inequality
\begin{equation}
 \sum_{r=0}^R\bigl(\langle p_r,\widehat\ell_r\rangle-\widehat\ell_r(v_r)\bigr)
 \le\frac{C(\mathbf v)}\eta+\frac\eta2\sum_{r=0}^R\langle p_r,\widehat\ell_r^2\rangle.
 \label{eq:pathpotential}
\end{equation}

\paragraph{Step 4. Combine the two sides and take expectation.}
The lower logarithm is $-C(\mathbf v)-\eta\sum_{r=0}^R\widehat\ell_r(v_r)$. Comparing it with~\eqref{eq:v14-log-upper}, moving the loss terms to one side, and dividing by $\eta>0$ proves~\eqref{eq:pathpotential}. This argument is pointwise for every realized collection of finite types and stochastic transitions.

The proof is \texttt{Growing.growing\_path\_regret}. Integrate on the stipulated comparator coupling. The integrability assumptions permit finite summation and integration to commute. The moment identities identify the left side with actual reference selection differences. The code and quadratic budgets prove~\eqref{eq:selection}, as in \texttt{DependentSelection.expected\_growing\_selection}.

\paragraph{Compatibility with the observation theorem.}
To use the local calculation in the expected path theorem, the probabilities $p_r$ in the event kernels must be those produced by the weight recurrence, and the score difference must be the same random variable. For history-dependent type families, these identifications and integrability are the explicit \texttt{Experiment} contract. The finite stopped-observation result proves moments for its stated event types. It does not construct every possible native observation model. Thus a supplied moment identity is a visible interface premise, not a claimed consequence of unrelated logs.

\subsection{Proof of Theorem~\ref{thm:native}}
For one fixed comparator, insert every endpoint in its stated order.
\begin{equation}
 \begin{aligned}
 \Jna-\Jnm={}&(\Jna-\Jra)+(\Jra-\Jsel)+(\Jsel-\Jav)+(\Jav-\Jal)\\
 &+(\Jal-\Jcl)+(\Jcl-\Jpa)+(\Jpa-\Jrm)+(\Jrm-\Jnm).
 \end{aligned}
 \label{eq:telescope}
\end{equation}
The selection arrow $\Jsel-\Jav$ is bounded by Theorem~\ref{thm:selection}. Substitute all seven directed certificates and $V_{\rm est}\le bV$. This gives~\eqref{eq:native} for a fixed comparator. Proposition~\ref{prop:wellposed} identifies the realized native endpoint expectations with their own-law expected losses. The formal result permits comparator-specific coupling measures and type families.

\paragraph{Budget substitution without hidden sign assumptions.}
Substitution into~\eqref{eq:telescope} first gives
\begin{align}
 J_T(A)-J_T(\mu)
 &\le E_A+(F+\mathcal P_b+D_A)+C/\eta+\eta V_{\rm est}/2+G\notag\\
 &\quad+bS+\hresp(S+1)+D_C+E_{\rm nf}+E_C.\label{eq:v14-before-budget}
\end{align}
Since $\eta>0$ and $V_{\rm est}\le bV$, replace only the quadratic term by $\eta bV/2$. Substitute $\mathcal P_b=\hresp(\lceil T/b\rceil+N_b+1)$ and group the seven scalar charges into $\Delta$. This produces exactly $U_T$ in~\eqref{eq:native}. Each adjacent difference is present once. No absolute values or independently maximized versions of individual endpoints are inserted.

Write the uniform upper bound as $U_T$. For every $\mu$, the fixed-comparator result states $J_T(A)-U_T\le J_T(\mu)$. Thus $J_T(A)-U_T$ is a lower bound on the nonempty set of comparator expected losses. The defining greatest-lower-bound property implies $J_T(A)-U_T\le\inf_\mu J_T(\mu)$, proving the theorem. There is no exchange of expectation and infimum.

\paragraph{Proof of Corollary~\ref{cor:native}.}
Divide the definition of $U_T$ by its positive horizon. Its normalized value is exactly the sum in~\eqref{eq:conditions}. Finite addition preserves limits, so $U_T/T\to0$. The main bound implies
\begin{equation}
 0\le\max\{0,R_T/T\}\le\max\{0,U_T/T\}.
 \label{eq:squeeze}
\end{equation}
Continuity of the maximum and squeezing prove convergence. This is checked both as a numerical lemma and as a theorem on families of native executions with interface witnesses.

\section{Local-feedback stability and resource charging}
\label{app:feedback}
\paragraph{Finite paths and Bellman values.}
For a kernel sequence $K$, define $V^K_{t,0}(s)=0$ and
\begin{equation}
 V^K_{t,n+1}(s)=c_t(s)+\sum_jK_t(s,j)V^K_{t+1,n}(j).
 \label{eq:bellman}
\end{equation}
For an explicit next-state trace, its probability is the product of the traversed kernel probabilities. Inductively summing over the last or first next state proves normalization. Expanding the trace loss and using this normalization proves that the trace expectation equals~\eqref{eq:bellman}. Averaging with the common initial distribution gives $J_T(K)$. These equalities are in \texttt{FeedbackTrajectory}, so the Bellman quantity is not merely a proxy with an unproved relation to the objective.

\paragraph{Range-sensitive TV inequality.}
If $f(j)\in[a,a+h]$ and $p,q$ are probability distributions, their signed mass difference sums to zero. Subtract the midpoint $a+h/2$. Then
\begin{equation}
 \left|\sum_j(p(j)-q(j))f(j)\right|
 =\left|\sum_j(p(j)-q(j))(f(j)-a-h/2)\right|
 \le\tfrac h2\sum_j|p(j)-q(j)|=h\TV(p,q).
 \label{eq:rangeTV}
\end{equation}
For zero TV the distributions agree on the finite set, so the expectation difference is zero without imposing a range condition.

\subsection{Proof of Theorem~\ref{thm:stability}}
\paragraph{Step 1. Exact one-step difference.}
For a remaining horizon $n+1$, add and subtract the same reference continuation under $K_t$.
\begin{align}
 V^K_{t,n+1}(s)-V^L_{t,n+1}(s)
 &=\sum_jK_t(s,j)\{V^K_{t+1,n}(j)-V^L_{t+1,n}(j)\}\notag\\
 &\quad+\sum_j\{K_t(s,j)-L_t(s,j)\}V^L_{t+1,n}(j).
 \label{eq:v14-bellman-exact}
\end{align}
The common immediate score cancels. Taking absolute values uses the triangle inequality and $K_t(s,j)\ge0$. The second sum is bounded by the range-sensitive TV inequality. If the local TV is zero, the distributions agree and this term is zero without a range condition.

Subtract the two Bellman equations and add and subtract the $K$-average of the reference continuation. The immediate scores cancel because they are the same function of the complete pre-update state. Writing $\zeta_t(s)=h_t(s)\varepsilon_t(s)$ gives
\begin{equation}
 |V^K_{t,n+1}(s)-V^L_{t,n+1}(s)|
 \le\sum_jK_t(s,j)|V^K_{t+1,n}(j)-V^L_{t+1,n}(j)|+\zeta_t(s).
 \label{eq:bellmandifference}
\end{equation}
The second term follows from~\eqref{eq:rangeTV} and the local contract. Induction on $n$, starting at zero continuation, bounds the right side by the $K$-Bellman value of the accumulated charge $\zeta$. Averaging at time zero and using the trace interpretation proves~\eqref{eq:localcharge}.

\paragraph{Step 2. Propagate local charges along the native law.}
Define $\mathcal D_{t,0}(s)=0$ and
\begin{equation}
 \mathcal D_{t,n+1}(s)=\zeta_t(s)+\sum_jK_t(s,j)\mathcal D_{t+1,n}(j).
 \label{eq:v14-charge-bellman}
\end{equation}
At $n=0$, both value differences are zero. If $|V^K_{t+1,n}-V^L_{t+1,n}|\le \mathcal D_{t+1,n}$, nonnegativity of $K_t$ and~\eqref{eq:bellmandifference} imply the same bound at $(t,n+1)$. Therefore $|V^K_{0,T}-V^L_{0,T}|\le \mathcal D_{0,T}$. Averaging over the common initial distribution $p^{\rm init}$ yields
\begin{align}
 |J_T(K)-J_T(L)|
 &\le\sum_sp^{\rm init}(s)|V^K_{0,T}(s)-V^L_{0,T}(s)|\notag\\
 &\le\sum_sp^{\rm init}(s)\mathcal D_{0,T}(s)
 =\E_{\mathbb P_K}\sum_{t<T}\zeta_t(S_t).
 \label{eq:v14-charge-path}
\end{align}
The final equality is the finite trajectory/Bellman identity, applied to the nonnegative score $\zeta$ instead of the task score $c$. It fixes the law as $\mathbb P_K$, not a worst-case union of possible update times.

\paragraph{Step 3. The hard-resource inequality.}
For the hard-resource version, the pointwise hypotheses yield
\begin{equation}
 0\le \zeta_t(s)\le\bar H\bar\varepsilon \chi_t(s)
 \le\frac{\bar H\bar\varepsilon}{c_{\min}}r_t(s).
 \label{eq:chargecost}
\end{equation}
Monotonicity of the $K$-Bellman recursion transfers this inequality to path expectations. For every positive-mass trace the cumulative resource is at most $B_T$. Zero-mass traces contribute zero, irrespective of their values. Normalization therefore gives $\E_{\mathbb P_K}\sum_t r_t(S_t)\le B_T$. Substitute in~\eqref{eq:localcharge} to prove~\eqref{eq:resourcegap}. No worst-case union over possibly active times is introduced.

For the local-only alternative, induction in~\eqref{eq:bellman} gives $0\le V^L_{t,n}\le n$. At boundary $t$, the remaining continuation has at most $T-t-1$ scores. Hence
\begin{equation}
 |J_T(K)-J_T(L)|\le\sum_{t<T}(T-t-1)\bar\varepsilon_T
 \le T^2\bar\varepsilon_T.
 \label{eq:localonly}
\end{equation}
This argument does not assume mixing or forgetting. Its precision condition is correspondingly stronger.

\paragraph{Proof of Theorem~\ref{thm:repair}.}
For each fixed comparator, insert the two reference expectations.
\begin{equation}
 J_T(K_A)-J_T(K_\mu)
 =[J_T(K_A)-J_T(L_A)]+[J_T(L_A)-J_T(L_\mu)]
   +[J_T(L_\mu)-J_T(K_\mu)].
 \label{eq:repairproof}
\end{equation}
Bound the first and last terms by~\eqref{eq:resourcegap}, and invoke the proved reference transfer for the middle term. Uniformity in $\mu$ permits the same infimum argument used for Theorem~\ref{thm:native}. In Lean, \texttt{NativeRepairEndpoint.counterfactual} calls \texttt{ModelMain.main\_fixed}, together with actual kernel-path bounds. It does not accept the desired reference regret conclusion as a new axiom. Its resource-side structure contains local TV and influence bounds, local cost domination, and a hard path budget.

For no regret, normalize~\eqref{eq:repair}. The sum of $U_T/T$, a fixed constant times $B_A/T$, and a fixed constant times $B_C/T$ tends to zero. Apply~\eqref{eq:squeeze}. For~\eqref{eq:localonly}, division by $T$ gives $T\bar\varepsilon_T$, whose assumed convergence proves the alternative.

\section{Local generation and the combined native theorem}
\label{app:localgeneration}
\paragraph{Finite causal trial laws.}
Fix episode $e$ and suppress $e$ on its trial quantities. Let $\mathcal H_{e,k}^{\rm gen}$ be the finite type of completed trial outcomes in an episode. A normalized nonnegative kernel $g_k(h,i)$ selects the next outcome. Write $\mathbb P_e^{(k)}$ for the completed-trial law, with mass $\varpi_k^{\rm gen}(h)$. Its mass recursion satisfies $\varpi_0^{\rm gen}(())=1$ and $\varpi_{k+1}^{\rm gen}(h,i)=\varpi_k^{\rm gen}(h)g_k(h,i)$. The outcome can encode a candidate, validation, and retention. A success indicator $b_k(h,i)\in\{0,1\}$ marks an outcome whose retained card supports the stipulated causal available path. It may depend on the comparator fixed for the proof and on completed history, but is not an oracle supplied to the generator.

Let $\xi_0=1$ and $\xi_{k+1}(h,i)=\xi_k(h)(1-b_k(h,i))$. On still-unsuccessful histories assume
\begin{equation}
 \sum_i g_k(h,i)(1-b_k(h,i))\le1-\rho,\qquad 0<\rho\le1.
 \label{eq:hazard}
\end{equation}
Expand one step of the actual law.
\begin{align}
 \mathbb P_e(\xi_{k+1}=1)
 &=\sum_h \varpi_k^{\rm gen}(h)\xi_k(h)\sum_i g_k(h,i)(1-b_k(h,i))\\
 &\le(1-\rho)\sum_h\varpi_k^{\rm gen}(h)\xi_k(h)
 =(1-\rho)\mathbb P_e(\xi_k=1).
 \label{eq:failurestep}
\end{align}
Successful-history terms vanish. All remaining path masses are nonnegative. Induction from probability one gives $\mathbb P_e(\xi_k=1)\le(1-\rho)^k$, without trial independence. The checked result is \texttt{LocalGenerationCertificate.failure\_bound}.

\paragraph{Local quality and the semantic endpoint.}
For completed trial history $h$, let $g^{\rm pre}_t(h)$ and $g^{\rm post}_t(h)$ be conditional expected task differences between a specified causal available path and its aligned ideal path. Their exact finite-law expectation must equal the corresponding part of $\Jav-\Jal$. This is a semantic identification, not a bound on the generation opportunity gap. An application must derive it from the score laws and a disjoint episode decomposition.

Assume $g^{\rm pre}_t(h)\le1$ on $\tau$ delay tasks and $g^{\rm post}_t(h)\le\delta+\xi_k(h)$ on $n$ later tasks. Then
\begin{align}
 \E_{\mathbb P_e^{(k)}}\left[\sum_{t<\tau}g^{\rm pre}_t+\sum_{t<n}g^{\rm post}_t\right]
 &\le\E_{\mathbb P_e^{(k)}}[\tau+n\delta+n\xi_k]\\
 &\le\tau+n[\delta+(1-\rho)^k].
 \label{eq:episodegap}
\end{align}
The first inequality sums local bounds with nonnegative masses. Normalization evaluates constants. The second uses the proved failure bound. This is \texttt{Episode.gap\_le}, not a field assuming a small total gap.

Episodes need not be independent, but their laws must describe their actual contributions. Random entry histories require uniformly valid conditional contracts and the correct marginal or conditional representation. Resetting history in the proof while keeping its effects in the deployment is invalid. Sum~\eqref{eq:episodegap} to obtain~\eqref{eq:generatedbudget}. Nonnegative envelopes and $\sum_en_e\le T$ give $A_T+T(\delta_T+f_T)$ by finite summation and monotonicity.

\paragraph{Uniform envelopes without separating an infimum.}
The comparison is first proved for a single fixed $\mu$, with its own episode laws. For each episode, $n_e\ge0$ and the envelope bounds imply
\begin{equation}
 \tau_e+n_e[\delta_e+(1-\rho_e)^{k_e}]
 \le\tau_e+n_e(\delta_T+f_T).
 \label{eq:v14-generation-envelope}
\end{equation}
Summing, using $\sum_e\tau_e\le A_T$, $\sum_en_e\le T$, and $\delta_T+f_T\ge0$ gives
\begin{equation}
 g_T^{\rm gen}(\mu)\le A_T+(\delta_T+f_T)\sum_en_e\le A_T+T(\delta_T+f_T).
 \label{eq:v14-generation-total}
\end{equation}
Uniformity means these same numeric envelopes work for every comparator, although its episode laws may differ. There is no step that independently optimizes one envelope for one comparator and another envelope for a different comparator.

\paragraph{The combined endpoint.}
The formal structure \texttt{Remainder} retains acquisition, alignment, response, representation, and capacity conditions. It has no generation or deployment bound. The constructor \texttt{Remainder.certify} derives generation from episodes, identifies the inner deployment endpoints, and assigns zero to their budgets. The theorem \texttt{native\_local} invokes the actual reference theorem with this constructed witness and the two local kernel-resource contracts. Thus the outer native kernel comparison includes any selector discrepancy, not just changes to displayed feedback.

The validated \texttt{CompletedNativeV19.Instance.bound} carries the kernels, reference experiment, episodes, integrable score endpoints, and completed-task identities. Its generation budget is defined by delay, quality, and failure envelopes, while its inner reference deployment arrows are identities. Its conclusion is the original own-history regret. No bound on that regret or on either derived total gap is an input. Other reference interfaces remain explicit. Subtracting the generation budget from the reference bound leaves exactly $U_T^{\rm rest}$, proving~\eqref{eq:fullylocal} without duplicate deployment charges.

\paragraph{Exact budget identity.}
Let $U_T$ be the reference upper bound with its generation budget $G=A_T+T(\delta_T+f_T)$ and internal deployment budgets zero. By definition $U_T^{\rm rest}=U_T-G$. Consequently
\begin{align}
 R_T&\le U_T+\kappa B_A+\kappa B_C\notag\\
 &=U_T^{\rm rest}+A_T+T(\delta_T+f_T)+\kappa(B_A+B_C).
 \label{eq:v14-native-budget}
\end{align}
The first line is an application of the proved transfer, with the generation certificate just constructed. The second is an exact scalar identity. No adjacent endpoint difference may be charged twice. The same causal mechanism can affect different adjacent differences, which must both be retained. An outer full-state charge already covers propagation of its own perturbation. Appendix~\ref{app:v19-connections} gives an explicit accounting example.

\paragraph{The family no-regret theorem.}
Each horizon supplies a complete instance of the stated model and contracts. Its finite state domains, trial-outcome domains, episode index set, comparator coupling laws, and reference identity domains may vary with the horizon. Uniformity of a budget is over comparators at that horizon, not over all horizons. The formal family uses a common ambient measurable space and comparator index type. These do not require identical kernels or identical finite domains across horizons.

Divide~\eqref{eq:fullylocal} by $T>0$. With fixed $\rho\in(0,1]$ and $\kappa$, its normalized upper bound is
\begin{equation}
 \frac{U_T^{\rm rest}}T+\frac{A_T}T+\delta_T+(1-\rho)^{k_T}
       +\kappa\frac{B_A(T)+B_C(T)}T.
 \label{eq:v18-family-normalized}
\end{equation}
Because $0\le1-\rho<1$ and $k_T\to\infty$, the geometric term tends to zero. Each remaining term tends to zero under the hypotheses of Theorem~\ref{thm:local}. This argument uses the combined resource condition directly, without adding separate resource limits. The positive part is continuous, so~\eqref{eq:squeeze} proves the claimed upper no-regret limit.

The publication endpoint is \texttt{CompletedNativeV19.variable\_family}. For each family index it invokes the validated completed-task bound, then combines the geometric and normalized limits with denominator $T$, not $T+1$. It does not take the desired regret bound as a premise. The earlier \texttt{LocalNativeFamily.no\_regret} remains a fixed-domain family result with separately normalized resource assumptions. The earlier correspondence endpoint remains a state-score family result. The new wrapper supplies the completed-task, variable-domain and combined-resource correspondence without changing~\eqref{eq:fullylocal}.

\paragraph{Strength and cost of the assumptions.}
The available path must be causal, not chosen retrospectively from favorable trial outcomes. Retention and comparator-relative quality must persist on the episode being charged. If increasingly precise good cards become rarer, their smaller hazards must be reflected in $f_T$. A fixed positive hazard cannot be presumed for many mutually exclusive targets. Attempts also consume actual generation and validation resources. Increasing their number is legitimate only when delay and hard budgets permit it. These restrict the admissible native model class, not the original architecture. They are not proved properties of every deployed reflector.

\section{The finite paid lower and upper benchmark}
\label{app:benchmark}
The sparse-family conditions force $q\to0$, not unlearnability at fixed $q$. This benchmark does not lower-bound every native certificate. Published all-$q$ switching constructions~\citep{dekel2014switch} address a broader regime.
\subsection{Information lower bound for the full causal class}
\paragraph{Finite information construction.}
In a phase with $K$ identities, choose a hidden target uniformly. Its independent binary base losses have mean $1/2-\epsilon$, while other identities have mean $1/2$. Let $\mathbb P_0$ be the all-$1/2$ observation law and $\mathbb P_j$ the target-$j$ law for the same causal randomized learner. The erasure symbol is distinct from a zero loss. All laws are constructed from finite kernels on visible histories.

The Bernoulli divergence and erasure calculation give, for $0\le\epsilon\le1/4$,
\begin{equation}
 \KL(\operatorname{Ber}(1/2)\Vert\operatorname{Ber}(1/2-\epsilon))
 =-\tfrac12\log(1-4\epsilon^2)\le4\epsilon^2.
 \label{eq:bernoulli}
\end{equation}
Independent erasure multiplies this divergence by $q$. On each completed visible history the learner uses the same conditional action kernel under both environments. Its factor cancels in the supported finite chain rule. If $N_j$ counts draws of $j$, induction on history length yields the divergence bound $4q\epsilon^2\E_{\mathbb P_0}N_j$. Finite testing inequalities then give
\begin{equation}
 \E_{\mathbb P_j}N_j\le\E_{\mathbb P_0}N_j+n\sqrt{4q\epsilon^2\E_{\mathbb P_0}N_j}.
 \label{eq:countcompare}
\end{equation}
\paragraph{The divergence calculation in detail.}
For $u=4\epsilon^2\in[0,1/4]$, the elementary inequality $-\log(1-u)\le u/(1-u)$ implies
\begin{equation}
 -\tfrac12\log(1-4\epsilon^2)
 \le\frac{2\epsilon^2}{1-4\epsilon^2}
 \le\tfrac83\epsilon^2\le4\epsilon^2.
 \label{eq:v14-bernoulli-bound}
\end{equation}
Let $E_q(p)$ be the distribution on $\{\bot,0,1\}$ assigning mass $1-q$ to erasure and mass $qp(b)$ to bit $b$. The erasure contribution to relative entropy is zero and the factors $q$ cancel in each non-erased likelihood ratio. Thus
\begin{equation}
 \KL(E_q(p)\Vert E_q(p'))
 =\sum_{b\in\{0,1\}}qp(b)\log\frac{qp(b)}{qp'(b)}
 =q\KL(p\Vert p').
 \label{eq:v14-erasure-kl}
\end{equation}
At $q=1$ the zero-mass erasure term is omitted. Positive Bernoulli probabilities on both sides ensure the same observation support.

At a fixed visible history $h$, write $p_t^{\rm low}(i\mid h)$ for the learner action kernel. This same function is used under $\mathbb P_0$ and $\mathbb P_j$, even though the distributions of $h$ differ. On an action with positive mass, its factor cancels in the likelihood ratio. The next-event divergence is consequently
\begin{equation}
 p_t^{\rm low}(j\mid h)\,q\,
 \KL(\operatorname{Ber}(1/2)\Vert\operatorname{Ber}(1/2-\epsilon)).
 \label{eq:v14-one-event-kl}
\end{equation}
For all other actions the observation laws agree. Induction with the supported finite chain rule adds these conditional divergences under the null prefix law. Since $N_j=\sum_{t<n}\mathbf1\{I_t=j\}$,
\begin{align}
 \KL(\mathbb P_0\Vert \mathbb P_j)
 &\le4q\epsilon^2\sum_{t<n}\E_{\mathbb P_0^{(t)}}p_t^{\rm low}(j\mid H_t)\notag\\
 &=4q\epsilon^2\E_{\mathbb P_0}N_j.
 \label{eq:v14-chain-count}
\end{align}
Finally $N_j\in[0,n]$, so its expectation difference is at most $n\TV(\mathbb P_0,\mathbb P_j)$. The included finite testing inequality bounds this TV by $\sqrt{\KL(\mathbb P_0\Vert \mathbb P_j)}$, a conservative constant form. Combining with~\eqref{eq:v14-chain-count} proves~\eqref{eq:countcompare}. The proof handles zero-probability action histories on their support. It does not require the learner to assign positive probability to every action.

Write $x_j=\E_{\mathbb P_0}N_j$, $y_j=\E_{\mathbb P_j}N_j$. Since exactly one identity is selected per primitive round, $\sum_jx_j=n$. Cauchy--Schwarz bounds the sum of square roots in~\eqref{eq:countcompare}. Set
\begin{equation}
 \epsilon=\min\{1/4,\sqrt{K/(64qn)}\}.
 \label{eq:gapchoice}
\end{equation}
Then $64q\epsilon^2n\le K$, and the calculation gives $K^{-1}\sum_jy_j\le3n/4$. Expected regret against the hidden target is at least $\epsilon n/4$, which is at least $\min\{n,\sqrt{Kn/q}\}/32$. The scalar rate, the counts, and the underlying path distributions are all part of the proof chain, rather than a supplied information lower-bound certificate.

\paragraph{The count calculation and constants.}
Because $\sum_jx_j=n$, Cauchy--Schwarz gives $\sum_j\sqrt{x_j}\le\sqrt{Kn}$. Averaging~\eqref{eq:countcompare} therefore gives
\begin{equation}
 \frac1K\sum_jy_j\le\frac nK+
     n\sqrt{4q\epsilon^2n/K}\le\frac n2+\frac n4=\frac{3n}4.
 \label{eq:v14-count-average}
\end{equation}
The second inequality uses $K\ge2$ and $64q\epsilon^2n\le K$. Against hidden target $j$, expected base regret is $\epsilon(n-y_j)$, because each non-target draw loses $\epsilon$ in conditional mean. Its average is at least $\epsilon n/4$. Substitution of~\eqref{eq:gapchoice} gives
\begin{equation}
 \epsilon n/4=\min\{n/16,\sqrt{Kn/q}/32\}
       \ge\tfrac1{32}\min\{n,\sqrt{Kn/q}\}.
 \label{eq:v14-lower-constant}
\end{equation}
The finite KL/testing lemmas used before this calculation belong to the included proof closure. Classical references describe the general information method~\citep{lattimore2020,bubeck2012}, but no numerical lower bound from those references is inserted as an axiom.

\paragraph{Random tables and fixed instances.}
The loss table can be sampled before play. A finite factorization expresses the joint expectation as an expectation over complete oblivious tables followed by the learner and logging expectation conditional on each table. An average at least $L$ implies at least one fixed table has conditional expected regret at least $L$. Independent phase targets allow the argument to be conditioned on every preceding visible phase transcript. Thus it applies to the full causal phase-controller class, not just to independently restarted learners.

For the paid schedule, each eligible set has size between $B/2$ and $B$. Adding phases, applying the half-scale task definition~\eqref{eq:task}, and charging comparator boundary switches produces the left side of~\eqref{eq:benchmark}. The explicit constant $1/128$ is conservative. The proof uses the actual growing identity types in both the learner and the comparator, as recorded in \texttt{PaidInformationLower.paid\_minimax\_lower}.

\subsection{The primitive upper algorithm and its tuning}
\paragraph{Public construction.}
Within phase $j$, the public learner uses uniform initialization and a held-run cap and learning rate determined by $K_j,n,q$. Primitive execution records the identity emitted on every round. The proof identifies this primitive algorithm with its exponential-weights state recursion and counts the switches in the emitted identities. A partial final run is treated explicitly. No equality between a macro-policy and an unrelated primitive process is postulated.

For roundwise independent masks $M_t$, the run estimator for a fixed eligible identity is $\mathbf1\{J_r=i\}p_r(i)^{-1}\sum_{t\in\mathcal T_r}M_t\ell^{\rm base}_t(i)/q$. Expanding its square separates the diagonal terms from the cross-time terms. Independence gives a diagonal bound proportional to $u_r/q$ and a cross-time bound proportional to $u_r^2$. Applying exponential weights and charging actual held-run switches gives the phase bound. Phase composition, the actual boundary costs, and uniform library bounds give~\eqref{eq:generalupper}. These steps terminate at \texttt{PrimitiveJointBounds.uniform\_minimax\_upper}, in the same primitive minimax model as the lower bound.

\paragraph{Runwise variance, including cross-time terms.}
Here $\mathcal T_r$ is the consecutive task-index set in run $r$, $u_r=|\mathcal T_r|$, and $u$ denotes one such length. In the following local calculation $s=1,\ldots,u$ numbers positions inside the run, not internal operations. The vector $\mathbf M=(M_1,\ldots,M_u)$ contains its independent observation masks. The law $\mathbb P^{\rm phase}_{A,\ell,q}$ below is the induced phase law of the specified upper learner, with the loss table fixed. The symbols $L^{\rm task}_{\rm phase}$ and $L^{\rm base}_{\rm phase}(i)$ denote its cumulative task loss and the cumulative base loss of fixed identity $i$ in that phase.
Fix the oblivious table and the history at the start of a held run of length $u$. Conditional on $J=i$, let $X_i=\sum_{s=1}^u M_s\ell^{\rm base}_s(i)/q$. Independent Bernoulli masks give
\begin{align}
 \E_{\mathbf M\mid J=i,h}[X_i^2]
 &=\frac1q\sum_s\ell^{\rm base}_s(i)^2
       +2\sum_{s<t}\ell^{\rm base}_s(i)\ell^{\rm base}_t(i)\notag\\
 &\le u/q+u(u-1)\le u/q+u^2.
 \label{eq:v14-run-square}
\end{align}
For $\widehat\ell(i)=\mathbf1\{J=i\}X_i/p(i)$, it follows that
\begin{equation}
 \E_{J,M\mid h}\sum_i p(i)\widehat\ell(i)^2\le K(u/q+u^2).
 \label{eq:v14-run-capacity}
\end{equation}
For run lengths bounded by $b$ and totaling $n$, $\sum u^2\le bn$ and $\sum u=n$. Thus the phase capacity is $V_{\rm phase}=Kn(b+1/q)$. In particular, the whole squared run loss is not multiplied by $1/q$. Such a replacement would conflate one observation per run with independent per-round erasure.

\paragraph{Exact public tuning.}
Write $a=K\log K$ and set
\begin{equation}
 b=\min\{n,\max\{1,\lceil(n/a)^{1/3}\rceil\}\},\qquad
 V_{\rm phase}=Kn(b+1/q),\qquad \eta=\sqrt{2\log K/V_{\rm phase}}.
 \label{eq:v14-tuning}
\end{equation}
These are the \texttt{TunedPhase} parameters. They depend on public $n,K,q$, not on the optimal arm or future losses. Since $n\ge1$, $K\ge2$, and $q>0$, both $b$ and $\eta$ are positive. Holding an identity creates no within-run switch. Counting boundaries and the possible incomplete last run bounds internal switching by $n/b+1$. The checked primitive-to-run correspondence then yields
\begin{align}
 \E_{\mathbb P^{\rm phase}_{A,\ell,q}}[L^{\rm task}_{\rm phase}]-\tfrac12L^{\rm base}_{\rm phase}(i)
 &\le\tfrac12\{\log K/\eta+\eta V_{\rm phase}/2+n/b+1\}\notag\\
 &=\tfrac12\{\sqrt{2\log K\,V_{\rm phase}}+n/b+1\}.
 \label{eq:v14-phase-tuned}
\end{align}
The equality follows from $\eta^2V_{\rm phase}=2\log K$. Dividing by $2\eta>0$ gives $\log K/\eta=\eta V_{\rm phase}/2$, and both terms sum to $\eta V_{\rm phase}=\sqrt{2\log K\,V_{\rm phase}}$.

\paragraph{Integer caps and short phases.}
The formal tuning proof treats the cap cases separately. If $a\ge n$, the capped run size is $1$ and the phase task loss bound $n$ is already at most $a^{1/3}n^{2/3}$. If $a<n$, put $x=(n/a)^{1/3}>1$. Here $a\ge1$ and $n\ge1$ imply $x\le n$, so $x\le b\le x+1\le2x$. Consequently
\begin{equation}
 \sqrt{2anb}\le2a^{1/3}n^{2/3},\qquad
 n/b\le a^{1/3}n^{2/3}.
 \label{eq:v14-cap-bounds}
\end{equation}
Also $\sqrt{2\log K\,V_{\rm phase}}\le\sqrt{2anb}+\sqrt{2an/q}$. Combining these estimates, or using the trivial bound in the short-phase case, gives the conservative phase bound $4\{a^{1/3}n^{2/3}+\sqrt{an/q}+1\}$. The checked implementation includes its final partial run and actual emitted identities.

\paragraph{Phase composition and a single controller class.}
One additional boundary switch costs at most $1/2$ per phase. The upper controller resets its weights at the public phase boundaries, while the lower bound allows every causal controller to retain past history. Both controllers are nevertheless members of the same class $\mathfrak A$. Since $K_j\le B$ and $x\log x$ is increasing for $x\ge1$, summing the phase bounds and using $T=mn$ gives
\begin{equation}
 m(B\log B)^{1/3}n^{2/3}=(mB\log B)^{1/3}T^{2/3},\qquad
 m\sqrt{B\log B\,n/q}=\sqrt{mB\log B\,T/q}.
 \label{eq:v14-phase-scaling}
\end{equation}
The additive $4m+m/2$ is bounded by $5m$. Taking the supremum over fixed tables and then the infimum over controllers preserves this upper bound because the displayed public controller is admissible.

The preceding primitive argument gives the explicit bound
\begin{equation}
 \mathcal R^*_{\rm bin}\le4\left((mB\log B)^{1/3}T^{2/3}
 +\sqrt{mB\log B\,T/q}\right)+5m.
 \label{eq:generalupper}
\end{equation}
\paragraph{Proof of the sparse simplification.}
Let $x=a^{1/3}T^{2/3}$ and $y=\sqrt{aT/q}$, with $a,T,q>0$. Then $x^3=aT^2$ and $y^2=aT/q$. Multiplying $q^3T\le a$ by the nonnegative factor $a^2T^3$ shows $x^6\le y^6$. Both $x$ and $y$ are nonnegative, so monotonicity of the sixth power implies $x\le y$. Substitute into~\eqref{eq:generalupper}. Independently, cumulative task loss is at most $T$ and the comparator task loss is nonnegative. The minimax value is therefore at most $T$. These two upper bounds prove the minimum in~\eqref{eq:benchmark}.

The registration assertions are not inferred from the loss bound. They are separate arithmetic theorems about~\eqref{eq:schedule}. Every phase creates at least one identity under $B\ge4m$, the increments telescope to the total budget, and each intermediate registration prefix is bounded by the phase endpoint. Their conjunction with the minimax statements is checked in \texttt{PrimitiveJointBounds.paid\_common\_model}.

\paragraph{Proof of Corollary~\ref{cor:benchmark}.}
For positive $T,q$ and nonnegative $a$,
\begin{equation}
 \frac{\sqrt{aT/q}}T=\sqrt{\frac a{qT}}.
 \label{eq:sqrtnormalization}
\end{equation}
This follows by squaring both nonnegative sides, or by the square-root scaling identity. Take $a=mB\log B$. Continuity of the square root and~\eqref{eq:benchmarkrate} show that the normalized upper bound tends to zero. Apply~\eqref{eq:squeeze}. The proof in \texttt{SparseFeedbackBenchmark.sparse\_no\_regret} includes the normalization identity rather than treating asymptotic notation as an informal replacement for it.

\section{How to instantiate the native transfer}
\label{app:application}
This section describes obligations for general applications. Appendix~\ref{app:concrete} separately discharges them for a specified verified-template reflector. Neither section reports measurements of a deployed language model. Consider a frozen-weight coding agent whose completed record contains the task, generated program, test report, memory, versioned rule cards, and remaining validation budget. A fixed scorer maps the structured test report to a value in $[0,1]$. A rule card induces the execution policy $\pi_i$. The native controller $\mu_A$ is retained, rather than replaced by Algorithm~\ref{alg:reference}.

\paragraph{State and identity.}
The state must retain all components needed to predict future task scores. A card identifier names an immutable program and interpreter version, not its current probability of selection. If a finite-state local theorem is used, a finite alphabet and bounded storage must be specified, including the environment state needed for the Markov representation. Bounded prompt length alone does not make an unbounded external environment finite-state. An abstraction that discards state requires its own justified comparison, not an assertion that the finite theorem applies unchanged.

\paragraph{Reference observation interface.}
An evaluator must identify a synchronized score unit, the draw probability $p_r(i)$, and the acceptance probability $q_r(i)$. Randomly accepting complete test reports independently of their score is one way to meet the accepted-score identity. An evaluator that retains only successful tests is not. A rule card generated after an outcome is available for the next draw, not for the draw that generated that outcome. Endogenous termination of future slots is allowed, whereas changing the current score window after seeing a favorable outcome requires a separate analysis.

\paragraph{Local feedback interface.}
Specify a native raw-feedback kernel $K_t$ and a processed-feedback reference kernel $L_t$ on the same complete state space. The required quantity is their total variation, including the joint distribution of generated cards, memory, resources, and runtime outcomes. Preserving the mean of the displayed score does not imply this condition. Theorem~\ref{thm:stability} requires a uniform continuation-value range at perturbed states. Such a range needs a mixing, reset, or bounded-influence argument on the complete state. Deleting a card after a fixed number of rounds does not prove that argument. If no such property is available, only the explicitly weaker local-only bound is supplied here.

\paragraph{Resources, generation, and selection are different obligations.}
Count failed generation and validation attempts whenever their resource is consumed. A strictly positive minimum resource cost controls the number of active updates on each native path. Theorem~\ref{thm:local} derives the generation certificate from conditional coverage, per-task quality, and a causal available path. The number of registered cards alone proves none of these conditions. Finally, a native selection bridge must compare complete native and reference executions. An offline reference that receives logs from the native run is not automatically distributed as the online reference used in the potential argument. The learner law must stay fixed across comparator couplings.

\paragraph{Proved claims and required application evidence.}
The moment and potential bounds follow from the actual reference recursion and its stated local observation contract. The stability charge follows from complete-state TV, continuation range, and pathwise costs. The remaining directed certificates are assumptions to establish for the chosen model, preferably from local operational lemmas rather than from measured total regret. Finite observational data alone does not prove a uniform counterfactual certificate over all histories. A statistical validation theorem would need a specified sampling design, coverage assumptions, and uncertainty bounds. No such empirical certificate is claimed in this paper.

\paragraph{No duplicate charging.}
Choose the reference endpoints before evaluating any error. A local-feedback charge can replace the corresponding deployment certificate, or can compare a new outer pair of executions as in Theorem~\ref{thm:repair}. It cannot be added to an unchanged certificate for the same endpoint difference. Different adjacent differences may share a causal source. Resource feasibility restricts the comparator domain. Resource-induced displacement contributes to $F$ only if it displaces a task opportunity not already included in another endpoint. The paid benchmark registers births under its hard budget and charges actual switches in its task loss. Neither cost is an additional generic penalty in the native objective.

\section{Declaration-level verification map}
\label{app:map}
Definitions~\ref{def:history}--\ref{def:metapolicy} and the thirteen theoretical results use a single numbering sequence. Definitions fix semantic objects and are not themselves proof claims. Table~\ref{tab:map} records the checked endpoints. Intermediate declarations are reached through their Lean imports. Corollary~\ref{cor:instance} constructs a specified native model and derives its interfaces. Proposition~\ref{prop:imperfect-growth} adds repeated native growth and derives the persistent cost of false acceptance. Other applications must still establish their own contracts. The release includes a machine-readable mapping with source and build hashes.
\begin{table}[t]
\centering\footnotesize
\caption{Numbered results and their checked Lean endpoints. Prefix \texttt{DRGPR.} is omitted.}\label{tab:map}
\begin{tabular}{>{\raggedright\arraybackslash}p{0.19\textwidth}>{\raggedright\arraybackslash}p{0.75\textwidth}}
\toprule Result & Formal endpoint\\\midrule
Proposition~\ref{prop:wellposed} & \texttt{NativeSemantics.Execution.expected\_bounds}, \texttt{loss\_integrable}, and \texttt{Realization.expected\_total}\\
Lemma~\ref{lem:stopped} & \texttt{FiniteObservation.Round.estimate\_mean}, \texttt{weighted\_second}, and \texttt{StoppedObservation.Protocol.stopped\_unbiased}, \texttt{stopped\_second}\\
Theorem~\ref{thm:selection} & \texttt{DependentSelection.expected\_growing\_selection}\\
Theorem~\ref{thm:native} & \texttt{NativeFamily.own\_coupling\_counterfactual}\\
Corollary~\ref{cor:native} & \texttt{NativeNoRegret.budget\_tends\_zero} and \texttt{NativeFamily.own\_coupling\_no\_regret}\\
Theorem~\ref{thm:stability} & \texttt{StateFeedbackStability.actual\_loss}, \texttt{adaptive\_resource\_loss}, and module \texttt{FeedbackLocalOnly}, declaration \texttt{uniform\_local\_bound}\\
Theorem~\ref{thm:repair} & \texttt{NativeRepairEndpoint.counterfactual}, \texttt{no\_regret}, and \texttt{FeedbackLocalOnly.local\_only\_rate}\\
Theorem~\ref{thm:local} & \texttt{LocalGenerationCertificate.Episode.gap\_le}, \texttt{uniform\_budget}, \texttt{CompletedNativeV19.Instance.bound}, and \texttt{CompletedNativeV19.variable\_family}\\
Corollary~\ref{cor:instance} & Module \texttt{ConcreteReflection}, declarations \texttt{execution\_identity}, \texttt{conditional\_main}, \texttt{instantiated\_bound}, \texttt{family\_bound}, \texttt{square\_rate}, and \texttt{no\_regret}\\
Corollary~\ref{cor:joint-budget} & Module \texttt{JointGenerationBudget}, declarations \texttt{native\_joint}, \texttt{hard\_prefix}, \texttt{variable\_family}, and \texttt{polynomial\_allocation}\\
Theorem~\ref{thm:benchmark} & \texttt{SparseFeedbackBenchmark.paid\_sparse\_capped} and \texttt{PrimitiveJointBounds.paid\_common\_model}\\
Corollary~\ref{cor:benchmark} & \texttt{SparseFeedbackBenchmark.sparse\_no\_regret}\\
Proposition~\ref{prop:imperfect-growth} & Module \texttt{ImperfectGrowth}, declarations \texttt{pointer\_error}, \texttt{actual\_execution}, \texttt{actual\_regret}, \texttt{full\_transfer}, \texttt{actual\_no\_regret}, and \texttt{sublinear\_resources}\\
\bottomrule
\end{tabular}
\end{table}

\subsection{Meaning of end-to-end verification}
The import root includes the correspondence, clock, validated-instance, observation-law and concrete-reflection modules. The publication wrapper scores completed tasks with the proved padding identity and requires integrable intermediate endpoints. Appendix~\ref{app:v19-connections} maps its supporting derivations. Exact declaration counts, source hashes and build provenance are recorded in the release manifest. These support the thirteen numbered paper results. Declaration count is not a measure of scientific novelty.

There are three distinct levels. At the declaration level, every numbered result maps to checked Lean declarations under their displayed hypotheses. At the conditional composition level, the combined endpoint invokes proved selection, generation, and native kernel-transfer results before taking the counterfactual infimum, and the family theorem invokes this endpoint before proving convergence. This is the mathematical chain certified by the artifact. At the concrete deployment level, one would additionally need a particular reflector, its scoring and logging implementation, and proofs that they instantiate every input contract. This third claim is not made.

In particular, the \texttt{Experiment} structure contains integrability, supported-path, moment, and capacity obligations. The local observation theorem proves moment identities for its specified finite interface. An instantiation must connect that interface to the same recurrence and probability law. Likewise, \texttt{Remainder} retains acquisition, alignment, history-response, representation, and capacity conditions. The new local theorem derives generation and deployment charges, not these remaining conditions. Compilation of an implication does not convert its assumptions into facts about a real model.

The companion contains exact source hashes, per-declaration axiom logs, and a dependency-closed source set. Only \texttt{propext}, \texttt{Classical.choice}, and \texttt{Quot.sound} are allowed. There is no \texttt{sorryAx} or custom regret axiom. English statements and their Lean types were compared manually. The Lean kernel and the recorded dependency environment are trusted, rather than verified by this paper. Citations document prior work, not a claim that all cited papers have been formalized.

\paragraph{Earlier broader claims are not verified.}
The previous all-$q$ multiscale switching-rate claim is not a theorem of this version. Neither is a universal equivalence between raw-real native reflection and bit-feedback reflection, nor a universal validity claim for current-draw-dependent run stopping. The local restrictions in the revised statements are substantive and are recorded in the change audit. They are not represented as successful proofs of the broader earlier statements.

\paragraph{Position relative to related models.}
Tracking growing expert pools~\citep{mourtada2017}, sleeping bandits~\citep{nguyen2024}, and costly generative action sets~\citep{xu2026gas} already address important forms of changing availability. We do not claim that growth, generation, or counterfactual response alone is new. The focus here is the formal transfer between complete executions and the local conditions under which its error can be controlled. These references provide context, not unproved imported premises of the numbered results.

\section{A fully specified native reflection instance}
\label{app:concrete}
This appendix proves Corollary~\ref{cor:instance} without changing the general theorems. It provides a mathematical existence witness, not a measurement of a deployed language model. The instance is deliberately validation-dominant. A complete validator identifies a useful program, so the native controller can use it without learning an external selection rule. Generation can nevertheless fail, the registered library genuinely grows, deployment has a nonzero discrepancy, and all operation prefixes respect the same hard budget. The reference Bandit is used only in the proof. The construction does not establish joint tightness of all terms in the general bound.

\paragraph{Scope of the instance.}
The guarantee concerns the specified frozen template generator, complete validator, retention rule, native schedule, and localized deployment check. The no-regret conclusion is along a horizon-parameterized family, rather than a single anytime controller. Applying this design pattern to a language-model agent requires establishing the corresponding generation, validation, retention, resource, and deployment conditions for that agent. The example establishes joint feasibility of the native-transfer interfaces and a concrete benefit of reflection within this model.

\subsection{Tasks, programs, and the frozen reflection mechanism}
Fix integers $k,N\ge1$ and $T=k+N$. Task $t$ presents $x_t\in\{0,1\}$ and requests the program output $x_t$. The context sequence can be fixed arbitrarily before execution. A structured result records the answer, program identity, validation status, and remaining resources. The common task scorer is zero for the correct answer and one otherwise. It is applied identically to the learner and every comparator.

There are three immutable program identities. Their executable content is
\begin{equation}
 P_0(x)=x+1,\qquad P_1(x)=x+2,\qquad P_2(x)=x.
 \label{eq:instance-programs}
\end{equation}
Initially only $P_0,P_1$ are registered. A complete validator tests both inputs, so it accepts $P_2$ and rejects the other two for this specification. This is possible because the template family and task domain are finite. We do not assume that finite tests establish correctness for arbitrary programs.

The fixed reflection kernel supports two prompts. A generic repair prompt returns $P_2$ or $P_1$ with equal probability. A guided template prompt returns $P_2$ deterministically. These are two specified input modes of the same unchanged kernel, available to both learner and comparator. The native learner uses the generic prompt. The comparator below uses the guided prompt. No learned weight, success probability, or hidden optimal-card label is supplied to the native controller.

After each of the first $k$ completed tasks, the native learner makes one generic generation and validation attempt. Every successful candidate is registered immediately unless its immutable identity is already present. The controller retains the success flag and uses $P_0$ throughout this prescribed preparation phase. Thus early registration does not force immediate execution. After the $k$th attempt, it executes $P_2$ if present and $P_0$ otherwise. This batching choice incurs an explicit $k$-task acquisition cost. It is a behavior of this particular controller, not a restriction on the rapid-registration interface.

Ordinary execution and execution with an additional check are legal runtime operations for every procedure. On the first subsequent task only, the native schedule invokes the check, which drops a successful output with probability $1/4$, replacing it by the fallback answer. Its private bit is not retained in memory or the structured result. It does not modify registered programs, future tasks, or the future control rule. Later tasks execute the retained program normally. The clean reference performs the same charged check but leaves the output unchanged, so both resource counters agree. The native controller is consequently neither the reference Bandit nor identical to its own clean deployment.

\subsection{Own-history comparators and prefix resources}
The comparator class $\mathcal M_0$ contains two procedures. Procedure $\mu_0$ always uses $P_0$. Procedure $\mu_1$ makes one guided generation, validates and registers its output before the first task, and then uses $P_2$. Pre-task internal operations are legal for both procedures and the learner. The native learner simply follows its prescribed later schedule. The environment and scoring rule are unchanged. Consequently
\begin{equation}
 J_T(\mu_0)=T,\qquad J_T(\mu_1)=0,
 \qquad \inf_{\mu\in\mathcal M_0}J_T(\mu)=0.
 \label{eq:instance-comparators}
\end{equation}
These are separate counterfactual executions, not comparator actions evaluated against the native memory. Nonnegative scores also imply that adding other legal comparators cannot lower this infimum below zero. The two-element class is sufficient for the formal witness.

Each attempt is charged two units at initiation, one for generation and one for its validation. Registration of the new identity consumes one further unit, and the deployment check consumes one unit. Initial registrations are part of the common initial condition. If $r$ attempts have been initiated and $u,v\in\{0,1\}$ indicate whether registration and the check have been charged, every native prefix satisfies
\begin{equation}
 C_r=2\min\{r,k\}+u+v\le 2k+2.
 \label{eq:instance-hard}
\end{equation}
The guided comparator costs three units, at most $2k+2$. The fallback comparator costs zero. Upfront charging also covers prefixes inside an unfinished attempt. Failed attempts are not refunded. Resource charges restrict admissibility and are not added again to the task scorer. The separate local deployment resource is one check, so its activity budget is $B_A=1$, while $B_C=0$.

\subsection{The native probability law and actual task losses}
Let $H_k=(\zeta_1,\ldots,\zeta_k)\in\{0,1\}^k$ be the generic trial outcomes, with success encoded by one. For a prefix $h_j$, the conditional next-outcome probability is $1/2$ for either value. The resulting finite law is the recursively normalized product law, not an assumed expected-loss formula. Define
\begin{equation}
 \xi(h)=\mathbf1\{\zeta_1=\cdots=\zeta_k=0\},
 \qquad \mathbb P(\xi(H_k)=1)=2^{-k}.
 \label{eq:instance-miss}
\end{equation}
The deployment bit $D$ has probabilities $\mathbb P(D=1)=1/4$ and $\mathbb P(D=0)=3/4$, independently of the trial history. Pre-sampling these bits is only a representation of the law. Decisions access each trial result after its generation and the deployment bit only at its check.

For each realized $(h,d)$, the completed-task loss is
\begin{equation}
 Y_t(h,d)=
 \begin{cases}
 1,&0\le t<k,\\
 \xi(h)+\mathbf1\{t=k,\ d=1\}(1-\xi(h)),&k\le t<T.
 \end{cases}
 \label{eq:instance-task}
\end{equation}
All scores lie in $[0,1]$. Summing this actual process before taking expectations gives
\begin{equation}
 \sum_{t<T}Y_t(h,d)=k+N\xi(h)+d(1-\xi(h)),\qquad
 J_T(A)=\mathbb E_{H_k}\left[k+N\xi(H_k)+\tfrac14(1-\xi(H_k))\right].
 \label{eq:instance-execution}
\end{equation}
The Lean declarations \texttt{task\_sum}, \texttt{conditional\_execution}, and \texttt{execution\_identity} establish this correspondence. Thus the expected loss is derived from task execution rather than introduced as an unconstrained endpoint.

\subsection{Local generation and full-state deployment conditions}
Each unsuccessful trial history has conditional success probability $\rho=1/2$. A successful card is correct on every subsequent task and is retained. Therefore $\delta=0$. The available post-preparation path uses $P_2$ when generated and a fallback otherwise. Its difference from the ideal comparator path is $N\xi(H_k)$. Applying the existing local-generation theorem gives
\begin{equation}
 G=\mathbb E_{H_k}[N\xi(H_k)]\le N2^{-k}.
 \label{eq:instance-generation}
\end{equation}
Here the episode delay is zero because the entire preparation loss is already charged as $F=k$. Charging another $k$ in $G$ would duplicate the same loss. The exact failure probability follows as well by induction on the independent binary trials.

At the deployment check, the future-relevant state consists of the retained success flag, the output-failure bit, and the deterministic task and resource counters. The registered library is determined by the success flag. Raw historical records can also be retained, but neither kernel uses the past output bit in a future decision. The native and clean kernels preserve the same memory and counters. Their complete stochastic state distributions have
\begin{equation}
 \TV(K,L)=\tfrac14(1-\xi(h)).
 \label{eq:instance-tv}
\end{equation}
After that task they induce identical future control and losses. Conditional on the completed prefix $h$, the retained memory is fixed and the future loss is common to both output branches. Their continuation-value span is at most one. The local discrepancy is at most $1/4$ and the check costs one unit. Averaging the resulting conditional difference, or directly using~\eqref{eq:instance-execution}, gives the deployment certificate
\begin{equation}
 E_A\le 1\cdot\tfrac14\cdot B_A=\tfrac14,
 \qquad E_C=0.
 \label{eq:instance-deployment}
\end{equation}
The formalization computes TV on the joint memory-output state, proves that later task scores ignore the slip bit, and establishes the exact endpoint difference. The span-one statement is conditional on the fixed memory, not a global span assertion across successful and unsuccessful memories. Those memories can have very different future values. This instance therefore supplies the deployment edge of Theorem~\ref{thm:native} directly. It does not assume the stronger global range contract of Theorem~\ref{thm:stability} with $\bar H=1$ on the unconditioned mixed-memory state space.

\subsection{The reference Bandit and the native selection bridge}
Condition on a completed generation history $h$, before drawing any reference action. Consider a two-arm reference pool. Arm zero is $P_0$. Arm one is $P_2$ if available, and otherwise $P_1$. These are distinct immutable identities in either case. The unused registered fallback need not enter this comparison pool. Registration is causal because the reference experiment begins after the $k$ attempts. The native registered library grows on every successful trial trajectory, even though the subsequent comparison pool has two members.

The reference algorithm initializes both probabilities at $1/2$ and performs ordinary exponential weighting with learning rate $\eta>0$ and identity transition. Only the selected score enters the update. Its score is
\begin{equation}
 \ell_h(0)=1,\qquad \ell_h(1)=\xi(h),\qquad q_r(i)=1.
 \label{eq:instance-reference-loss}
\end{equation}
The observation kernel is deterministic conditional on the selected identity. It therefore satisfies the accepted-score identity exactly. Positivity of exponential weights gives full support at every finite time. The comparator path is arm one, with code $C=\log2$ and no switches. The standard conditional second-moment calculation gives $V_{\rm est}\le2N$. The constructed Lean \texttt{Program} recursively generates its draw probabilities and observations. Its capacity and code conditions are proved, not supplied as assumptions.

Let $J_{\rm sel}(h)$ be its expected post-preparation loss and $J_{\rm av}(h)=N\xi(h)$. Both arms have loss at least $\xi(h)$, so
\begin{equation}
 N\xi(h)\le J_{\rm sel}(h),\qquad
 J_{\rm sel}(h)-N\xi(h)\le\frac{\log2}{\eta}+\eta N.
 \label{eq:instance-selection}
\end{equation}
The first inequality is the crucial native bridge. The clean native controller has total loss $k+N\xi(h)$, hence its acquisition edge is at most $k$. This is a one-sided performance comparison proved from validation, not an assertion that native sampling probabilities equal the Bandit probabilities. On a successful history, the native clean controller always uses the correct card. The initial reference draw uses the fallback with probability $1/2$, and the deployed native controller uses it with probability $1/4$ on its first post-preparation task. The controllers are genuinely distinct.

\subsection{A causal coupling with the correct native marginal}
Write $w_k(h)$ for the recursively generated trial probability, $s(d)$ for the deployment-bit probability and $Q_{\eta,h}(g)$ for the law of a complete $N$-slot reference trajectory. The latter law comes from the constructed Bandit recursion, not from native action logs. On the finite product space define
\begin{equation}
 \Gamma(h,d,g)=w_k(h)s(d)Q_{\eta,h}(g),
 \qquad \sum_{h,d,g}\Gamma(h,d,g)=1.
 \label{eq:instance-joint}
\end{equation}
Each factor is nonnegative and normalized at its proper stage. The reference uses $h$ only after all $k$ trials have completed. It does not use the subsequent deployment bit. Thus this conditioning does not reveal future generation outcomes to an earlier draw.

The native output functional in~\eqref{eq:instance-task} does not depend on $g$. Marginalizing the freshly generated reference path gives
\begin{equation}
 \sum_{h,d,g}\Gamma(h,d,g)\sum_{t<T}Y_t(h,d)
 =\sum_{h,d}w_k(h)s(d)\sum_{t<T}Y_t(h,d)=J_T(A).
 \label{eq:instance-marginal}
\end{equation}
Both comparators have deterministic scores on their own executions, so they can be coupled to this space without changing either native law. The learner marginal is the same for both comparator choices. The Lean declarations \texttt{joint\_law\_normalized} and \texttt{native\_marginal} prove these identities using normalization of the actual reference recursion. This supplies the compatibility missing from a comparison of isolated action marginals.

\subsection{All certificates in one experiment}
For the best comparator $\mu_1$, set all ideal comparator-side endpoints to zero. This is its actual loss, represented exactly by $P_2$, whether or not that program is available to the learner. The available-to-ideal difference is precisely the generation edge. The fallback comparator has nonnegative loss, so the same final upper bound applies to it. No learner marginal depends on this comparator choice.

Use one task per reference slot, no response contamination and no path switches. In the notation of Theorem~\ref{thm:native}, the proved common budgets are
\begin{equation}
 \begin{gathered}
 b=1,\quad S=\hresp=N_b=0,\quad C=\log2,\quad V=2N,\\
 F=k,\quad G=N2^{-k},\quad E_A=\tfrac14,\quad
 E_C=D_A=D_C=E_{\rm nf}=0.
 \end{gathered}
 \label{eq:instance-ledger}
\end{equation}
Generation cost constrains~\eqref{eq:instance-hard}. The preparation task loss appears only in $F$. Failed post-preparation opportunities appear only in $G$. The deployment check appears only in $E_A$. Exact validation and the absence of an endogenous environment state make the remaining edges zero by construction, not by a generic assumption about reflective agents.

Apply the unchanged native-transfer theorem conditionally on $h$, using the actual reference recursion. Then average over the normalized generation law and use the local-generation theorem. Equation~\eqref{eq:instance-comparators} identifies the own-history comparator infimum. This proves
\begin{equation}
 R_T(A,\mathcal M_0)\le k+N2^{-k}+\tfrac14+\frac{\log2}{\eta}+\eta N.
 \label{eq:instance-general-bound}
\end{equation}
In Lean, \texttt{conditional\_main} invokes \texttt{ModelMain.finite\_controller\_transfer}. The declaration \texttt{instantiated\_bound} averages that theorem and invokes \texttt{generation\_bound}. The proof therefore uses the existing selection and transfer chain rather than merely proving a numerically similar inequality in isolation.

\subsection{The horizon-indexed family and its scope}
For $k=n+1$, put $N=k^2$, $T=k^2+k$ and $\eta=1/k$. Equation~\eqref{eq:instance-general-bound} becomes the first inequality in Corollary~\ref{cor:instance}. Induction gives $k\le2^k$, hence $k^2 2^{-k}\le k$. Since $k^2\le T$, this is an $O(\sqrt T)$ bound along the specified horizons. More explicitly,
\begin{equation}
 \frac{R_T(A,\mathcal M_0)}T
 \le \frac{2+\log2}{k}+2^{-k}+\frac1{4k^2}
 \longrightarrow0.
 \label{eq:instance-rate}
\end{equation}
The positive-part limit is checked in Lean. Without reflection, $\mu_0$ has regret exactly $T$, so the construction is not a comparison against the learner itself. It is an existence result for a restricted verified-template model, not an optimal-rate theorem, an arbitrary-language-model guarantee, or a single horizon-free controller. The constant coverage and complete validator are strong, explicit structural features. Removing them requires additional analysis. The model nevertheless demonstrates that genuine rule generation, retained native control, nonzero deployment error and hard resource feasibility can coexist with the unchanged theorem and a nonvacuous sublinear conclusion.

\section{Joint generation quality and hard resources}
\label{app:joint-budget}
This appendix proves Corollary~\ref{cor:joint-budget}. The existing transfer theorem is unchanged. The new step couples the required policy quality to conditional coverage, trial quotas, and actual hard resources. A fixed design determines these parameters and a causal trial schedule. Selecting a design for a horizon does not replace the native controller by a Bandit controller. The guarantee applies when that native procedure satisfies the stated interfaces for the selected design.

\subsection{Contracts and the meaning of the parameters}
Fix a comparator before execution. For episode $e$, quality tolerance $\delta_e$ bounds the conditional expected available-minus-aligned task gap after success. A success means that a comparator-suitable rule has been generated, validated, retained, and made causally usable. It is an analysis predicate, not an optimal-policy label supplied to the native procedure. The lower bound $\rho_e(\delta_e)\in(0,1]$ must hold after every still-unsuccessful trial history. Improving the requested quality may decrease this bound. A separate statement about text quality alone would not imply the task-gap contract.

The quota $k_e\in\mathbb N$ bounds attempted generation and validation operations. Successful episodes may stop early, with inactive trials padded for analysis. Such padding consumes no additional resource and cannot undo a success. The worst-case per-attempt envelope $c_e(\delta_e)$ includes failures. Costs outside this envelope, such as registration, are included in $B_T^{\rm other}$. The plan is fixed before play, though its trial kernels and legal decisions may depend on completed history. Random redesign of the quotas requires a separate predictable-envelope argument.

Episode slots refer to one learner schedule shared by all comparator analyses. Success predicates may depend on the comparator, but the same physical attempt is charged once. Resource costs are not summed over counterfactual comparators. The common plan supplies uniform envelopes for their separate generation-gap proofs.

For a legal learner prefix $h$, let $u_e(h)$ be its initiated attempt count and $C^{\rm other}(h)$ its remaining charged costs. Assume each attempt reserves its cost before unfinished internal work can violate a budget, all costs are nonnegative, $u_e(h)\le k_e$, and $C^{\rm other}(h)\le B_T^{\rm other}$. Then
\begin{equation}
 C_A(h)\le C^{\rm other}(h)+\sum_eu_e(h)c_e(\delta_e)
 \le B_T^{\rm other}+\sum_ek_ec_e(\delta_e)\le B_T.
 \label{eq:joint-prefix}
\end{equation}
This proves every-prefix hard feasibility, including prefixes inside a failed or unfinished attempt. It does not replace the comparator admissibility definition. Each comparator still executes on its own history and satisfies its own original hard constraint. In particular, a comparator cannot borrow a card or unused resource from the learner. These resource costs constrain execution rather than adding a second loss penalty.

\subsection{Conditional failure, task gaps, and native regret}
Let $F_{e,j}$ be the event that the first $j$ trials in episode $e$ have not succeeded, and let $\mathcal F_{e,j}$ record those completed trials. Under the actual episode law $\mathbb P_e$, conditional coverage implies
\begin{equation}
 \mathbb P_e(F_{e,j+1})
 =\mathbb E_{\mathbb P_e}\!\left[
 \mathbf1_{F_{e,j}}\mathbb P_e(F_{e,j+1}\mid\mathcal F_{e,j})\right]
 \le [1-\rho_e(\delta_e)]\mathbb P_e(F_{e,j}).
 \label{eq:joint-hazard}
\end{equation}
The first equality uses that $F_{e,j+1}\subseteq F_{e,j}$ and the tower rule. On a successful history the indicator is zero. On an unsuccessful history the conditional failure probability is at most $1-\rho_e(\delta_e)$. Starting from probability one and applying this argument $k_e$ times gives
\begin{equation}
 \mathbb P_e(F_{e,k_e})\le[1-\rho_e(\delta_e)]^{k_e}
 \le\exp\{-k_e\rho_e(\delta_e)\}.
 \label{eq:joint-failure}
\end{equation}
For the second inequality, $1-x\le e^{-x}$ for $x\in[0,1]$, both sides are nonnegative, and raising to the nonnegative integer power preserves order. No independence assumption was introduced.

On every completed generation history, the conditional expected per-task gap is bounded by one during the $\tau_e$ delay tasks and by $\delta_e+\mathbf1_{F_{e,k_e}}$ for the $n_e$ subsequent tasks. Integrating under the same episode law gives
\begin{equation}
 \mathbb E_{\mathbb P_e}\left[
   \sum_{t<\tau_e}g^{\rm pre}_{e,t}
   +\sum_{t<n_e}g^{\rm post}_{e,t}\right]
 \le\tau_e+n_e\left[\delta_e+e^{-k_e\rho_e(\delta_e)}\right].
 \label{eq:joint-episode}
\end{equation}
The random variables $g^{\rm pre}_{e,t}$ and $g^{\rm post}_{e,t}$ are conditional task-gap functions on completed trial histories, as in Appendix~\ref{app:localgeneration}. Their sums must represent the actual available-to-aligned endpoint difference. Summing these identities and bounds establishes the generation certificate. Uniformity over the original comparator class permits the counterfactual infimum in Theorem~\ref{thm:native}. Its remaining certificates, reference selection theorem, and own-law realizations are unchanged. Substitution proves~\eqref{eq:joint-native}.

The quantity $\overline U_T$ contains all remaining reference and deployment terms. In particular, direct deployment certificates and those proved by Theorem~\ref{thm:stability} are both admissible. Replacing a direct certificate with a local proof does not create an additional copy of the same endpoint difference. Likewise, a preparation delay already charged in $F$ has zero additional delay in~\eqref{eq:joint-episode}.

In Theorem~\ref{thm:stability}, local shift $\varepsilon_t$ is weighted by downstream sensitivity $h_t$, along the actual law $\mathbb P_K$. Resource use limits the accumulated charge, not the union of all possible reflection times. Bounded card lifetime alone does not bound $h_t$, since descendants and environment changes can retain the effect.

For the asymptotic claim, the uniform envelopes in the corollary give
\begin{equation}
 \frac{R_T}{T}\le
 \frac{\overline U_T+\sum_e\tau_e}{T}
 +\delta_T+e^{-\lambda_T}.
 \label{eq:joint-normalized}
\end{equation}
The right side tends to zero. Continuity of the positive-part map and the nonnegative lower bound on $\max\{0,R_T/T\}$ yield the claimed limit. This argument derives a regret bound from native execution witnesses before taking a limit. It does not assume that regret itself is sublinear. A vanishing coverage probability is allowed, provided cumulative exposure $k_e\rho_e(\delta_e)$ still diverges uniformly. Increasing $k_e$ alone does not establish this property.

\subsection{Explicit quotas and a polynomial quality-cost family}
For a desired log-failure level $\Lambda>0$, a coverage lower bound $\rho_\star>0$, and per-attempt cost envelope $c_\star\ge0$, define
\begin{equation}
 k=\left\lceil\frac{\Lambda}{\rho_\star}\right\rceil,
 \qquad k\rho_\star\ge \Lambda,
 \qquad kc_\star\le\left(\frac{\Lambda}{\rho_\star}+1\right)c_\star.
 \label{eq:joint-quota}
\end{equation}
The first bound follows from $\lceil x\rceil\ge x$. The second follows from $\lceil x\rceil<x+1$ for $x\ge0$, weakening the strict inequality when $c_\star=0$. Equations~\eqref{eq:joint-failure} and~\eqref{eq:joint-quota} bound the failure probability by $e^{-\Lambda}$. A design can therefore check hard feasibility using these explicit quotas, including the integer-rounding cost.

One illustrative class has constants $a_{\rm g}\in(0,1]$, $c_0>0$, integer exponents $d_{\rm g},d_{\rm v}\ge0$ with $\nu=d_{\rm g}+d_{\rm v}>0$, and, uniformly over episodes and comparators,
\begin{equation}
 \rho_e(\delta)\ge a_{\rm g}\delta^{d_{\rm g}},
 \qquad c_e(\delta)\le c_0\delta^{-d_{\rm v}},
 \qquad 0<\delta\le1.
 \label{eq:joint-polynomial}
\end{equation}
The exponent $d_{\rm g}$ describes how quickly suitable candidates become rare as quality improves. The exponent $d_{\rm v}$ describes increasing generation or validation cost. These are additional properties of this illustrative class, not new assumptions imposed on every model in the main theorem. Integer exponents suffice for this formally checked example.

Choose $\rho_\star=a_{\rm g}\delta^{d_{\rm g}}$ and the quota in~\eqref{eq:joint-quota}. Since $\delta^{d_{\rm g}}\le1$, the cost of all attempts in any episode satisfies
\begin{equation}
 kc_e(\delta)
 \le\left(\frac{\Lambda}{a_{\rm g}\delta^{d_{\rm g}}}+1\right)
       \frac{c_0}{\delta^{d_{\rm v}}}
 =\frac{c_0(\Lambda/a_{\rm g}+\delta^{d_{\rm g}})}{\delta^\nu}
 \le\frac{c_0(\Lambda/a_{\rm g}+1)}{\delta^\nu}.
 \label{eq:joint-polynomial-cost}
\end{equation}
Let $E_T\ge1$ be the deterministic number of episode slots, including padding, and let $b_T^{\rm gen}>0$ be the allocation for these attempts after reserving other costs. This is a resource allocation, distinct from the run cap $b$. Define
\begin{equation}
 z_T=\frac{c_0E_T(\Lambda/a_{\rm g}+1)}{b_T^{\rm gen}},
 \qquad 0<z_T\le1,
 \qquad\delta_T=z_T^{1/\nu}.
 \label{eq:joint-root}
\end{equation}
The positive real root satisfies $0<\delta_T\le1$ and $\delta_T^\nu=z_T$. Substituting into~\eqref{eq:joint-polynomial-cost} and summing the $E_T$ slots bounds total attempt cost by $b_T^{\rm gen}$. Hence $b_T^{\rm gen}+B_T^{\rm other}\le B_T$ ensures the original hard constraint. This is a sufficient, explicit design, not an infimum over incompatible parameter choices.

For $T\ge2$, take $\Lambda=2\log T$. Then $e^{-\Lambda}=T^{-2}$, and the aggregate failure contribution is at most $T^{-1}$ when $\sum_en_e\le T$. Thus~\eqref{eq:joint-normalized} specializes to
\begin{equation}
 \frac{R_T}{T}\le
 \frac{\overline U_T+\sum_e\tau_e}{T}
 +\left[\frac{c_0E_T(2\log T/a_{\rm g}+1)}{b_T^{\rm gen}}\right]^{1/\nu}
 +T^{-2}.
 \label{eq:joint-polynomial-rate}
\end{equation}
If the bracketed ratio tends to zero, the quality term tends to zero by continuity of a positive power at zero. Remaining terms must still vanish. For fixed $a_{\rm g},c_0$, the familiar sufficient scaling $E_T\log T=o(b_T^{\rm gen})$ makes that ratio vanish. When additional delay and deployment effects are bounded by a fixed constant times $b_T^{\rm gen}$, one also chooses $b_T^{\rm gen}=o(T)$ and verifies the unaffected reference terms. Resource availability itself need not be sublinear, since an agent need not spend its entire allowance.

The values of coverage and cost envelopes determine this design. An algorithm adapting to unknown envelopes is a further learning problem, not a consequence of taking an infimum over designs. Any such optimization must retain the dependence of delay, observation, switching, and deployment terms on the same design.

\subsection{Compatibility with the native verified-template instance}
Corollary~\ref{cor:instance} and Corollary~\ref{cor:joint-budget} use the same transfer theorem with different choices of the local generation parameters. In the verified-template instance, there is one episode with $k_e=k$, $n_e=N$, $\rho_e(0)=1/2$, and $\delta_e=0$. A trial costs two units, and registration plus the deployment check reserve two further units. The hard bound is exactly $2k+2$. Preparation is already charged as $F=k$, so $\tau_e=0$ in the generation ledger. Its direct deployment charge remains $1/4$.

The generic exponential failure envelope becomes $e^{-k/2}$. The exact finite-model probability is $2^{-k}$ and is sharper. Keeping that exact bound preserves Corollary~\ref{cor:instance} verbatim. Its native controller and counterfactual comparators are unchanged. In particular, the new result does not demand the stronger global continuation-range contract discussed in Appendix~\ref{app:concrete}.

The Lean declaration \texttt{concrete\_compatibility} checks both conclusions on the original model. It retains the actual prefix-cost function and derives the exponential-envelope regret bound from the already proved native regret. Thus compatibility includes the concrete execution and hard resources, rather than only a comparison of abstract rate expressions.

The polynomial specialization concerns $\delta>0$ and illustrates a genuine precision-cost tradeoff. The existing exact-repair point has $\delta=0$ and uses the more general corollary directly, rather than substituting zero into a negative-power cost formula. This distinction is the reason the two results are compatible. One establishes a nonempty concrete native model, while the other explains how a family of legal designs can allocate resources as quality requirements change.

\paragraph{Formal proof chain.}
Module \texttt{JointGenerationBudget} reuses the normalized finite trial law and proved failure recurrence. It constructs the missing generation certificate from episode kernels and exact endpoint identities, then calls \texttt{NativeFamily.own\_coupling\_counterfactual}. Direct or locally established deployment certificates remain explicit inputs. Its variable-domain family theorem invokes this native endpoint before proving the limit. Separate declarations verify every-prefix costs, integer quotas, polynomial costs, logarithmic failure targets, and positive-root choices. The scalar power manipulations are supporting steps rather than substitutes for the native semantic connection.

\paragraph{Interpretation of local generation reliability.}
Local generation reliability gives $\mathbb P_e(\xi_e=1)\le(1-\rho_e)^{k_e}$ by successive conditional failure bounds. Integrating task gaps and summing episodes proves~\eqref{eq:generatedbudget}. More trials reduce failure, better retained policies reduce quality error, and faster acquisition reduces delay. Theorem~\ref{thm:local} combines these effects with both deployment bounds and the remaining contracts. The Lean family endpoint applies this completed-task bound before taking the no-regret limit. Appendix~\ref{app:localgeneration} supplies the proof.

All trials consume resources, including failures. Delay is charged here or in $F$, never both. Local generation reliability must hold uniformly over the comparator class. Increasingly precise, disjoint targets need not retain constant coverage. The remaining contracts must hold jointly.

\section{Notation, types, and correspondence to the formal model}
\label{app:notation}
The notation follows the usual distinction between losses $\ell$, loss estimates $\widehat\ell$, unnormalized weights $w$, and sampling probabilities $p$ in online learning~\citep{lattimore2020,mourtada2017}. A bold $\mathbf v$ is a path, never its loss. The functions $J_T$ and $J$ with semantic superscripts denote expected totals, whereas $L$ denotes a realized total. The separately defined $J_r$ is a sampled reference identity, not a loss. Subscripts and superscripts distinguish the native model, reference slots, local stability model, and binary benchmark. Dummy variables in explicitly local proofs are not additional global model parameters.

\subsection{Histories, probability laws, and conditioning}
An operation history $h\in\mathcal H_s^{\rm op}$ records $s$ completed internal operations. A task history $H_t\in\mathcal H_t^{\rm task}$ contains the initial record and $t$ completed task records. These are not the same prefix space. A task-record kernel packages the internal computation between task boundaries, assuming it terminates and induces a measurable kernel. The theory does not prove termination of arbitrary language-model programs. The symbol $\mathsf E$ in Definition~\ref{def:history} is an admissibility relation, whereas $\E$ is expectation.

The runtime is the mechanism that interprets a selected card, invokes tools or primitive actions, and updates the recorded state. A card kernel $\pi_i(da\mid x,m)$ assigns a probability measure to actions $a$, conditional on context $x$ and memory $m$. Here $da$ is integration notation, not a product involving an undefined parameter $d$. The same convention applies to the operation kernel $\mu(du\mid h)$. For a measurable function $f$, execution averages $f(a)$ as $\int f(a)\pi_i(da\mid x,m)$.

For a fixed comparator, $\Gamma_\mu$ is a joint proof law on a space $\Omega_\mu$. A realization is a measurable map from that space to a complete trajectory. The subscript $\#$ is the measure-pushforward operator, not a cardinality symbol. For a measurable trajectory event $D$, $((r_A)_\#\Gamma_\mu)(D)=\Gamma_\mu(r_A^{-1}(D))$, where $r_A^{-1}(D)=\{\omega\in\Omega_\mu\mid r_A(\omega)\in D\}$. Thus the probability of $D$ is obtained by measuring the source outcomes mapped into it. The required equalities $(r_A)_\#\Gamma_\mu=\mathbb P_A$ and $(r_\mu)_\#\Gamma_\mu=\mathbb P_\mu$ preserve the complete native trajectory distributions, not merely one-step action marginals. The learner law $\mathbb P_A$ cannot change with the comparator used to prove the bound.

The notation $\E_{J_r,O_r\mid h_r}$ means a finite conditional expectation under masses $p_r(i)k_r(i,o)$ after fixing the completed pre-draw history. Equivalently, in a compatible reference process it is conditional expectation given $\mathcal F_r^-$. The sigma-field is generated by the completed prefix and contains no current draw or future observation. All slot-dependent objects may be functions of that prefix. The notation suppresses these arguments but does not assert determinism. By contrast, $\E_{\Gamma_\mu}$ integrates the entire coupled experiment, including its random prefixes. The law $\mathbb P_{\rm stop}^{(r)}$ is a prefix marginal of the recursively generated stopped law. It is not conditioning on the terminal stopping time.

The endpoint $\lref_r(i)$ is fixed before the current draw. Operationally it is the expected task-loss total obtained from the specified reference execution at that prefix and identity. The score $y_r(i,o)$ is an observable proxy. Equation~\eqref{eq:accepted} is a calibration assumption relating that proxy to the independently specified target. Defining the target to be an accepted-score average after the fact would not establish correspondence to task loss. Neither a positive $q_r(i)$ nor a positive sampling probability $p_r(i)$ supplies that correspondence.

\begin{longtable}{@{}p{0.27\linewidth}p{0.68\linewidth}@{}}
\caption{Semantic and algorithmic notation. Domains and probability laws are part of the definitions.}\label{tab:notation-core}\\
\toprule Symbol & Meaning and scope\\\midrule
\endfirsthead
\toprule Symbol & Meaning and scope, continued\\\midrule
\endhead
\bottomrule\endfoot
$T,t,s,r,R,N$ & Task horizon and task index, internal-operation index, reference-slot index, last padded slot, and slot cap $N=R+1$. Defined in Sections~\ref{sec:model} and~\ref{sec:stopping}.\\
$\mathsf U,\mathsf O,\mathsf U(h),\mathsf E$ & Operation and result spaces, legal next operations, and admissible extension relation. These describe operational legality, not resource penalties.\\
$\mathsf X,\mathsf M,\mathsf A$ & Context, memory, and primitive-action spaces. Lowercase $x,m,a$ are their elements. The benchmark phase count $m$ is local to Section~\ref{sec:benchmark}.\\
$\mathcal H_s^{\rm op},\mathcal H_t^{\rm task}$ & Operation-history and task-history spaces. Random task prefix $H_t=(Z_0,\ldots,Z_t)$, with $Z_t\in\mathcal Z_t$. Lowercase $h$ is a realized history.\\
$\pi_i,\mu,A,\mu_A,\mathcal M$ & Execution-card action kernel, comparator meta-policy, complete native learner, its operation-selection kernel, and nonempty comparator class. $i$ is immutable executable identity, not a primitive action.\\
$\sigma_t,Y_t,L_T,J_T,R_T$ & Common task scorer, scalar task loss, realized cumulative loss, expected cumulative loss, and own-history regret from~\eqref{eq:objective}. Unqualified $R_T$ abbreviates $R_T(A,\mathcal M)$.\\
$\mathbb P_A,\mathbb P_\mu,\Gamma_\mu$ & Own-trajectory laws and comparator-specific coupling with the prescribed native marginals. $\Omega_\mu,\omega$ denote its sample space and outcome.\\
$r_A,r_\mu,(r_A)_\#\Gamma_\mu$ & Measurable trajectory realizations and the induced trajectory law. The subscript $\#$ denotes measure pushforward. The letter $r$ alone indexes a reference slot.\\
$\cost_t^\mu,B_t$ & Cumulative resource use and deterministic hard budget at task boundary $t$. Non-task operationwise restrictions must be imposed separately.\\
$\mathcal F_r^-,h_r,\mathcal I_r$ & Pre-draw sigma-field, realized completed prefix, and nonempty finite available identity set. The set and local kernels may be history-dependent.\\
$\mathbf1\{r\text{ active}\}$ & Indicator that the reference process has not terminated and the completed history permits slot $r$. It is $\mathcal F_r^-$-measurable, not a card-availability or feedback-acceptance indicator.\\
$\nu_r,w_r,W_r$ & Probability prior, nonnegative unnormalized weight vector, and total weight $W_r=\sum_iw_r(i)>0$. $W_r$ is not a task record $Z_t$.\\
$p_r,J_r,\Theta_r,\alpha$ & Sampling probability vector, sampled reference identity, row-stochastic sharing transition, and share rate in $(0,1)$. The general transition need not use the Fixed-Share specialization.\\
$k_r,O_r,M_r,q_r$ & Conditional outcome kernel, random outcome, binary acceptance mask, and acceptance probability conditional on the selected identity. $q_r$ is not the probability of selecting that identity.\\
$\lambda_r,y_r,d_r,\lref_r$ & Reference task-loss total, observed proxy score, its predictable upper bound, and reference expected loss per identity. Equation~\eqref{eq:accepted} links the two scores.\\
$\widehat\ell_r,e_r$ & Importance-weighted estimated loss and zero-conditional-mean learner--comparator estimation noise. Squaring a vector means coordinatewise squaring.\\
$\mathbf v^\mu,v_r,C(\mathbf v),C$ & Predictable available comparator path, its identity coordinate, realized negative log path mass, and deterministic uniform code bound. Dependence on $\mu$ is suppressed locally.\\
$V_{\rm est},V$ & Raw cumulative second-moment budget in~\eqref{eq:capacity} and its scaled envelope with $V_{\rm est}\le bV$. Neither is a centered variance.\\
$\mathbb P_{\rm stop},\varpi_r,F_r,f_r$ & Stopped law, completed-prefix probability mass, cumulative event score, and its one-step increment in Appendix~\ref{app:stopped}. $F_r$ there is not the scalar foreground budget $F$.\\
$\omega_r^{\mathbf v}$ & Unnormalized contribution of a single supported path to weight $w_r(v_r)$, as defined in~\eqref{eq:v14-path-weight}.\\
\end{longtable}

\subsection{Endpoint types and the single accounting chain}
Fix $T$ and $\mu$. Each intermediate endpoint is an explicitly chosen integrable score random variable on the coupling space. Its expectation is the displayed scalar $J$. The superscripts describe its role, not an algebraic operation that could manufacture the endpoint automatically. An application must supply the random variables and their semantic identifications before proving their inequalities. In particular, $\Jav$ is an expected score, whereas $\mathbf v$ is a random predictable path.

\begin{longtable}{@{}p{0.27\linewidth}p{0.68\linewidth}@{}}
\caption{Expected endpoints, directed charges, and exact Lean field names. All expectations use $\Gamma_\mu$.}\label{tab:notation-endpoints}\\
\toprule Endpoint or charge & Definition, outgoing comparison, and correspondence\\\midrule
\endfirsthead
\toprule Endpoint or charge & Definition, outgoing comparison, and correspondence, continued\\\midrule
\endhead
\bottomrule\endfoot
$\Jna$ & Expected native learner loss $J_T(A)$. Expectation of \texttt{realA}. Its difference from $\Jra$ is bounded by $E_A$.\\
$\Jra$ & Expected reference learner endpoint, expectation of \texttt{refA}. Its difference from $\Jsel$ is bounded by $F+\mathcal P_b+D_A$.\\
$\Jsel$ & $\sum_{r=0}^R\E_{\Gamma_\mu}\lref_r(J_r)$, identified with \texttt{Experiment.selectedTotal}. Its comparison to $\Jav$ is the proved selection bound.\\
$\Jav$ & $\sum_{r=0}^R\E_{\Gamma_\mu}\lref_r(v_r)$, identified with \texttt{Experiment.availableTotal}. Its difference from $\Jal$ is bounded by $G$.\\
$\Jal$ & Aligned represented comparator score, expectation of \texttt{aligned}. Its difference from $\Jcl$ is bounded by $bS$.\\
$\Jcl$ & Score after removing alignment, expectation of \texttt{clean}. Its difference from $\Jpa$ is bounded by $\hresp(S+1)+D_C$.\\
$\Jpa$ & Reference execution-path score after response comparison, expectation of \texttt{refPath}. Its difference from $\Jrm$ is bounded by $E_{\rm nf}$.\\
$\Jrm$ & Reference comparator meta-policy score, expectation of \texttt{refM}. Its difference from $\Jnm$ is bounded by $E_C$.\\
$\Jnm$ & Expected native comparator loss $J_T(\mu)$, expectation of \texttt{realM}. No outgoing arrow is needed.\\
$E_A,E_C,E_{\rm nf}$ & Learner deployment, comparator deployment, and representation budgets, corresponding to \texttt{Budget.E\_cpA}, \texttt{E\_cpC}, \texttt{E\_nf}.\\
$F,G,D_A,D_C$ & Foreground-displacement, generation, learner-calibration, and comparator-calibration budgets. The Lean \texttt{Budget} fields have these same names.\\
$b,\hresp,N_b,S,V$ & Nominal run cap, response-bound scale, extra-boundary envelope, path-switch envelope, and scaled capacity. Lean fields are \texttt{b}, \texttt{H}, \texttt{N}, \texttt{S}, \texttt{V}. In particular, \texttt{Budget.N} is $N_b$, not the slot cap $N$.\\
$\mathcal P_b,\Delta,U_T$ & Prefix allowance, sum of seven scalar charges, and final upper bound, matching \texttt{Budget.prefix}, \texttt{delta}, and \texttt{bound}.\\
$g_T^{\rm gen}(\mu),G$ & Exact signed endpoint difference $\Jav-\Jal$ and a uniform upper budget. Their relationship is $g_T^{\rm gen}(\mu)\le G$, not equality by definition.\\
\end{longtable}

The seven directed certificates in~\eqref{eq:certificates} are not seven independently optimized regret objectives. Together with the proved selection arrow they form the eight adjacent differences in~\eqref{eq:telescope}. The scalar budgets may be signed when a tighter directed comparison permits it. The local generation theorem constructs a nonnegative envelope. No sign assumption is used to telescope the endpoints. Nonnegative multipliers are required when substituting upper budgets, such as $\eta/2$ for the capacity inequality.

\subsection{Local-model and benchmark notation}
\emph{Local generation reliability} is the collective term for the success and task-quality conditions in Section~\ref{sec:localgeneration}. Success includes validation, retention, and causal usability. The parameters $\rho_e$ and $\delta_e$ quantify these properties, while episode accounting and hard feasibility govern their composition. This term introduces no additional numeric parameter.
\begin{longtable}{@{}p{0.27\linewidth}p{0.68\linewidth}@{}}
\caption{Notation specific to stability, generation, and the paid binary benchmark.}\label{tab:notation-local}\\
\toprule Symbol & Meaning and scope\\\midrule
\endfirsthead
\toprule Symbol & Meaning and scope, continued\\\midrule
\endhead
\bottomrule\endfoot
$S_t,s,K_t,L_t,p^{\rm init}$ & Complete finite state, its realization, native and reference state-transition kernels, and common initial distribution in Section~\ref{sec:local}. The capital switch bound $S$ is not a state.\\
$c_t,V^L_{t,n},h_t,\bar H$ & Common immediate state score, $n$-step continuation value from state-time $t$, local continuation-range envelope, and its uniform bound. $\bar H$ is distinct from the response scale $\hresp$.\\
$\TV,\KL,\varepsilon_t,\bar\varepsilon$ & Total variation, relative entropy, local full-state TV envelope, and uniform activity-scaled TV factor. Relative entropy uses natural logarithms and the common support in Appendix~\ref{app:benchmark}.\\
$\chi_t,r_t,c_{\min},B_A,B_C$ & Nonnegative update activity, actual-path resource charge, strictly positive cost per unit activity, and learner/comparator resource envelopes. Here $r_t(s)$ is a resource function, not a slot number.\\
$\zeta_t,\mathcal D_{t,n},\kappa$ & Local product $h_t\varepsilon_t$, its accumulated $K$-Bellman charge, and $\bar H\bar\varepsilon/c_{\min}$. The proof integrates along the native $K$ law.\\
$e,k,k_e,\mathcal H_{e,k}^{\rm gen}$ & Generation episode, trial index, number of trials, and episode-specific completed-trial space. Trials and task rounds need not coincide.\\
$\mathbb P_e,\varpi_k^{\rm gen},g_k,b_k,\xi_k$ & Episode law, prefix mass, generation kernel, success indicator in $\{0,1\}$, and all-failed indicator. Episode index $e$ is suppressed inside the one-episode proof.\\
$\rho_e,\delta_e,\tau_e,n_e$ & Conditional retained-card success lower bound, post-success quality tolerance, number of delay tasks, and number of later tasks in that episode.\\
$g_t^{\rm pre},g_t^{\rm post},\xi_e$ & Conditional task-gap functions and failure indicator after $k_e$ trials. The first two are not the trial kernel $g_k$. Their episode sums identify the actual generation endpoint.\\
$A_T,\delta_T,f_T,U_T^{\rm rest}$ & Uniform delay, quality, and failure envelopes, and the reference bound with $G=E_A=E_C=0$. $A_T$ is a numeric delay allowance, not the learner $A$.\\
$\rho_e(\delta),c_e(\delta)$ & Quality-dependent conditional coverage and prepaid per-attempt resource upper bound in Corollary~\ref{cor:joint-budget}.\\
$B_T^{\rm other},C_A(h)$ & Reserved other costs and total learner cost at legal prefix $h$.\\
$C^{\rm other}(h),u_e(h)$ & Other charged costs and episode attempt count at legal prefix $h$.\\
$\overline U_T,\lambda_T$ & Full transfer bound with only $G$ removed, and a uniform lower bound on exposure $k_e\rho_e(\delta_e)$.\\
$F_{e,j},\mathcal F_{e,j}$ & No-success event after $j$ trials and completed-trial information in episode $e$. The event $F_{e,j}$ is distinct from the foreground-loss budget $F$.\\
$\Lambda,\rho_\star,c_\star$ & Log-failure target, coverage lower bound, and trial-cost envelope for the explicit quota in~\eqref{eq:joint-quota}.\\
$a_{\rm g},c_0,d_{\rm g},d_{\rm v},\nu$ & Polynomial coverage coefficient, cost coefficient, nonnegative integer exponents, and their positive sum in~\eqref{eq:joint-polynomial}.\\
$E_T,b_T^{\rm gen},z_T$ & Episode-slot count, attempt-resource allocation, and dimensionless ratio defining the positive quality root. None is the run cap $b$ or a deployment-error certificate.\\
$m,n,B,j,K_j,j(t)$ & Public phase count, phase length, total integer registration budget, phase index, phase library size, and phase containing task $t$. The benchmark uses $T=mn$.\\
$\mathcal D,\ell_t^{\rm base},I_t,q$ & Legal loss-table domain, fixed binary base loss, emitted identity, and primitive observation probability. The score masks are independent Bernoulli variables with parameter $q$.\\
$\mathcal V,\mathbf v,I_t^{\mathbf v}$ & Deterministic eligible phase paths, one such path, and its emitted identity. Unlike the native comparator, this benchmark comparator is a phase path on an oblivious table.\\
$\mathfrak A,\mathcal R^*_{\rm bin}$ & All causal randomized phase controllers and their minimax task regret in~\eqref{eq:minimax}.\\
$\mathbb P_0,\mathbb P_j,\epsilon,N_j$ & Null and hidden-target observation laws, Bernoulli mean gap, and the number of draws of identity $j$ in the finite lower-bound proof.\\
$p_t^{\rm low},x_j,y_j,E_q$ & Conditional learner kernel in that proof, null and target expected draw counts, and the independent-erasure map on distributions.\\
$\mathcal T_r,u_r,X_i,V_{\rm phase}$ & Task-index set of a held run, its length, its corrected loss under identity $i$, and phase-level second-moment envelope. Phase calculations suppress their phase index.\\
\end{longtable}

\subsection{What the correspondence check does and does not establish}
The paper notation is deliberately not forced to reproduce every internal Lean variable. In \texttt{DependentTransfer.Experiment}, fields \texttt{P}, \texttt{z}, \texttt{v}, and \texttt{V} denote $\Theta$, $\widehat\ell$, $\mathbf v$, and $V_{\rm est}$. Its measure parameter, locally named \texttt{\textmu} in mathematical presentations of the code, is the coupling law rather than a comparator meta-policy. The field \texttt{R} yields $R+1$ scored slots. In \texttt{FiniteObservation.Round}, \texttt{kernel}, \texttt{accepted}, \texttt{score}, \texttt{loss}, and \texttt{q} correspond to $k$, $M$, $y$, $\lref$, and $q$. The target loss is an input field and the accepted-score identity is a premise. An operational scorer $\lambda$ supplies a target instance, not a new theorem that every log is calibrated.

The native record convention follows \texttt{CausalTrajectory.History}, whose prefix includes coordinate zero. Task score $t$ is evaluated after record $t+1$, as in \texttt{NativeSemantics.Execution.score}. The resource class in \texttt{ResourceClasses} has $T+1$ task-boundary prefix coordinates. It does not silently quantify over an unrelated number of internal operations. Local generation indicators are ordinary $0/1$ values when multiplied, corresponding to Boolean tests with explicit coercion in the code.

The companion includes proved correspondence results without adding a numbered paper theorem. The family endpoint explicitly permits horizon-dependent finite domains and uses the combined-resource limit in Theorem~\ref{thm:local}. The source-and-log audit checks the included declarations and their dependencies. The interpretation of the English model and its operational identifications is manually reviewed, not mechanically proved by renaming notation. Integrability and exact score-law identifications remain part of the stated contracts.

The operational control kernel $\mu_A$ is not the literal type of \texttt{NativeSemantics.Execution.controller}. That Lean field is the induced task-record kernel after internal operations have been packaged together with their results and runtime effects. The interface assumes this packaging is well-defined. Renaming the operational kernel does not establish a new compiler-correctness theorem.

\subsection{Terminology conventions}
The \emph{reference algorithm} denotes Algorithm~\ref{alg:reference}, previously called a shadow algorithm. A \emph{reference experiment} additionally specifies its probability law, scores, and comparator path. A \emph{proxy score} is the observed scalar $y_r$, not a second algorithm. A \emph{run} is a scored execution interval, a \emph{slot} is its reference decision index, and a \emph{generation episode} groups attempts and associated task gaps. These units need not have equal lengths.

The complete-execution objective $R_T$ is called counterfactual regret to emphasize own-history policy-regret semantics. The signed generation opportunity gap $g_T^{\rm gen}(\mu)$ is one intermediate difference, and $G$ is its upper bound. Neither is a newly introduced independent regret objective. A certificate is an explicit bound used by the proof, while Lean verification checks a conditional implication. An assumed certificate is not automatically an empirically validated fact. The paid binary benchmark is a restricted theoretical model, not a dataset or experiment.

Algorithm~\ref{alg:reference} exposes initialization, active and padded slots, and the final weight update of~\eqref{eq:recursion}. The proof uses $N=R+1$ scored slots and the outgoing update at $r=R$ to form $W_{R+1}$. That update creates no extra task or comparator switch. Supplied values of $q_r$ and a lawful run protocol are interface inputs, not quantities automatically inferred from native logs. The pseudocode restates the analyzed recursion and does not claim a verified executable implementation of the native controller or generation scheduler.

\subsection{Reading horizon-dependent conditions correctly}
In a family of problems all uniform envelopes may depend on $T$. Uniformity means independent of the comparator at that fixed horizon, not constant as the horizon grows. To read Corollary~\ref{cor:native}, write these quantities as $b_T,h_{\rm resp,T},N_{b,T},S_T,C_T,V_T,\eta_T,\Delta_T$. For $C_T,V_T>0$, choosing $\eta_T=\sqrt{2C_T/(b_TV_T)}$ makes the selection term $\sqrt{2C_Tb_TV_T}$. Zero-capacity cases should use the original inequality, without dividing by zero.

For example, conditional on the required certificates, the scaling choices $h_{\rm resp,T}=O(1)$, $b_T=\lceil T^{1/3}\rceil$, $S_T=O(1)$, $N_{b,T}=o(T)$, $V_T=O(T)$, $C_T=O(\log(T+1))$, and $\Delta_T=o(T)$ make each normalized term vanish. The selection term is $O(T^{2/3}\sqrt{\log(T+1)})$. The nominal prefix term is $O(T^{2/3})$ and extra boundaries contribute $o(T)$. This illustrates the existing corollary, not an assertion that every growing agent supplies these envelopes. In particular, if births force $N_{b,T}$ to be linear, or the native/reference mismatch makes $\Delta_T$ linear, the example does not apply.

\begin{table}[ht]
\centering\small
\caption{Roles of the results in the conditional native-transfer analysis.}
\label{tab:roles}
\begin{tabularx}{\linewidth}{@{}p{0.23\linewidth}X@{}}
\toprule Component & Role and scope of the contribution\\\midrule
Growing selection & Classical exponential weights and sharing, checked on history-dependent finite types.\\
Native transfer & Directed own-law composition, with application-dependent interface certificates.\\
Local stability & Range-sensitive TV telescoping, specialized to actual-path adaptive resource charging.\\
Local generation & Conditional coverage and per-task quality bound the generation opportunity gap.\\
Paid benchmark & Explicit common-model information lower and primitive upper, near matching only under~\eqref{eq:sparse}.\\
Lean artifact & Reproducible verification of the conditional mathematical chain and its explicit interfaces.\\\bottomrule
\end{tabularx}
\end{table}

\subsection{Symbols local to the concrete reflection instance}
Appendix~\ref{app:concrete} uses $k$ for the number of generic repair attempts, $N$ for subsequent tasks and reference slots, and $T=k+N$ for the full task horizon. This scalar $k$ is not an observation kernel $k_r(i,o)$. The programs $P_0,P_1,P_2$ are immutable execution identities, not probability laws. The trial bits $\zeta_j$ form history $H_k$, and $\xi(h)$ is the all-failed indicator, consistent with the episode failure indicators used in the main generation result. The private bit $D$ concerns the single deployment check. The procedures $\mu_0,\mu_1$ form the own-history class $\mathcal M_0$. The finite masses $w_k,s,Q_{\eta,h}$ define the coupling $\Gamma$ in~\eqref{eq:instance-joint}. The family sets $k=n+1$, $N=k^2$, and $\eta=1/k$. No new global regret definition is introduced.

\section{Clock alignment and validated mathematical interfaces}
\label{app:v19-connections}
This appendix makes the interface boundary explicit. An interface is a premise about a specified model, not an unproved conclusion about its regret. The additional proofs below are checked in \texttt{CompletedClock}, \texttt{CompletedNativeV19}, and \texttt{ObservationBridgeV19}. They do not introduce an external controller.

\subsection{Completed tasks and state-score positions}
The native score $Y_t=\sigma_t(H_{t+1})$ is produced after task $t$. Take the finite sufficient state $S_{t+1}$ to contain the completed history, and write $g_t(S_{t+1})=Y_t$. In this appendix $g_t$ is the completed-task scorer, not a generation gap. Define the padded state scorer by
\begin{equation}
 \widetilde c_0(S_0)=0,\qquad
 \widetilde c_{t+1}(S_{t+1})=g_t(S_{t+1}).
 \label{eq:v19-padding}
\end{equation}
There are $T$ tasks and $T+1$ scored state positions. A terminal dummy transition after $S_T$ has zero resource cost and no effect on a scored task. The two processes share a pre-task initial state law, not the law of their first completed score.

For a kernel family $K$, let $W^K_{t,n}(s)$ be the expected score of the next $n$ completed tasks. Its recursion and its relation to the state-score value are
\begin{align}
 W^K_{t,0}(s)&=0,\qquad
 W^K_{t,n+1}(s)=\sum_jK_t(s,j)\bigl[g_t(j)+W^K_{t+1,n}(j)\bigr],
 \label{eq:v19-postvalue}\\
 V^{K,\widetilde c}_{t+1,n+1}(s)&=g_t(s)+W^K_{t+1,n}(s).
 \label{eq:v19-shift}
\end{align}
For $n=0$, the second identity says that one state position contributes its current score. For the induction step, expand one state transition and apply the same identity at each destination. Nonnegative normalized kernel weights preserve the equality. Expanding the finite trajectory law also proves that $W$ equals the expectation of the sum of completed-task scores. Consequently
\begin{equation}
 J_T^{\rm task}(K)
 =\E_{\mathbb P_K}\sum_{t<T}g_t(S_{t+1})
 =\E_{\mathbb P_K}\sum_{s=0}^{T}\widetilde c_s(S_s)
 =J_{T+1}^{\rm state}(K,\widetilde c).
 \label{eq:v19-clock}
\end{equation}
The last dummy transition integrates to one. Thus neither the first nor the last task score is removed. The Lean declarations are \texttt{post\_trace\_value}, \texttt{shifted\_value}, and \texttt{completed\_clock}.

Apply the state-score stability theorem at length $T+1$ using $\widetilde c$. Choose the terminal update identically on both sides, with zero local charge and resource. The actual task budget, generation task count, reference bound $U_T$, and regret normalization remain indexed by $T$. The resulting completed-task inequality is
\begin{equation}
 |J_T^{\rm task}(K)-J_T^{\rm task}(L)|
 \le \bar H\bar\varepsilon B_T/c_{\min}.
 \label{eq:v19-postgap}
\end{equation}
This is \texttt{CompletedClock.completed\_local\_gap}. With no influence improvement the safe padded local-only bound is $(T+1)^2\bar\varepsilon_T$. Its normalized sufficient condition is still $T\bar\varepsilon_T\to0$ as $T\to\infty$.

\subsection{A validated instance and the exact composition}
The publication-level \texttt{CompletedNativeV19.Instance} requires unit-range completed-task scores, positive task and run counts, nonnegative resources and activities, zero terminal resources, and integrability of every intermediate expected-score endpoint. The local state contracts use $T+1$ positions. The reference budget uses $T$ actual tasks. No equality between the two clocks is silently imposed.

The remaining reference, episode, and own-law identities are still explicit. For each fixed comparator, the generation theorem gives its budget $G=A_T+T(\delta_T+f_T)$. Instantiate the reference certificates with this budget and zero inner deployment arrows. Convert the two reference identities using~\eqref{eq:v19-clock}, invoke the local transfer at the padded state horizon, and convert the two native endpoints back to completed tasks. The reference task bound is unchanged. Thus
\begin{equation}
 R_T\le U_T^{\rm rest}+A_T+T(\delta_T+f_T)
       +\kappa(B_A+B_C),\qquad \kappa=\bar H\bar\varepsilon/c_{\min}.
 \label{eq:v19-composed}
\end{equation}
The checked endpoint is \texttt{CompletedNativeV19.Instance.bound}. Its variable-domain family theorem invokes this bound before taking limits, using the actual task denominator $T$. It replaces the unpadded family endpoint as the publication correspondence for Theorem~\ref{thm:local}. Horizon-dependent model families need not be consistent prefixes of one anytime implementation. No such stronger implementation claim is made.

\subsection{Connecting local observations to the same experiment law}
Let $h$ range over finite pre-draw histories, with probabilities $a(h)$. At each $h$, a finite observation round has the paper's accepted-score contract, draw probability $p_h$, and outcome kernel $k_h$. A law witness maps a complete sample $\omega$ to its history, draw, and outcome. For every finite test function $f$ it requires
\begin{equation}
 \E_{\Gamma_\mu} f(h,J,O)
 =\sum_h a(h)\sum_i p_h(i)\sum_o k_h(i,o)f(h,i,o).
 \label{eq:v19-disintegration}
\end{equation}
This specifies the joint law, not merely an identity marginal. Histories can have different finite identity types. Inactive slots can use a singleton absorbing event. For use with the selection recurrence, $p_h$ must be its actual pre-draw probability and the estimator must be its actual update. The comparator coordinate is a function of $h$, not of the current outcome.

Apply the already proved local first-moment identity separately at each $h$, multiply by $a(h)$, and sum. Apply the local second-moment inequality in the same way, using $a(h)\ge0$. This gives
\begin{align}
 \E_{\Gamma_\mu}\bigl[\langle p,\widehat\ell\rangle-\widehat\ell(v)\bigr]
 &=\E_{\Gamma_\mu}[\ell^{\rm ref}(J)-\ell^{\rm ref}(v)],
 \label{eq:v19-lawmoment}\\
 \E_{\Gamma_\mu}\langle p,\widehat\ell^2\rangle
 &\le\sum_h a(h)d_h^2\sum_iq_h(i)^{-1}.
 \label{eq:v19-lawsecond}
\end{align}
\texttt{LawWitness.moment}, \texttt{LawWitness.second}, and \texttt{identified\_moment} verify these implications and the pointwise pullback identification. The law witness and recurrence identification must be instantiated for a concrete model. There is no claimed automatic constructor from arbitrary native logs to a growing reference execution. The general reference experiment continues to expose these law and moment interfaces. This is the precise boundary of the conditional verification claim.

\subsection{Accounting is determined by endpoints}
For three endpoints, $a-c=(a-b)+(b-c)$, checked by \texttt{adjacent\_not\_causal}. For example, native, reference, and ideal expected losses $T,T/2,0$ require both adjacent charges $T/2$. Both can be consequences of generation failures without being duplicate charges. Only charging the same endpoint difference twice is prohibited. Conversely, an outer full-state charge already includes propagation of its own perturbation, so those same consequences must not be added again to that same arrow. A physical resource may constrain feasibility and affect a task-score difference, but feasibility alone does not add another loss penalty.

\subsection{Joint feasibility and the scope of sublinearity}
From $\operatorname{TV}(K_t,L_t)\le\bar\varepsilon\chi_t$ and $c_{\min}\chi_t\le r_t$, finite summation gives
\begin{equation}
 \sum_{t<T}\operatorname{TV}(K_t(S_t,\cdot),L_t(S_t,\cdot))
 \le \bar\varepsilon B_T/c_{\min}.
 \label{eq:v19-tvbudget}
\end{equation}
This is \texttt{accumulated\_tv}. Also, zero resource implies zero activity and zero local TV, as proved by \texttt{no\_resource\_no\_error}. Thus charging only rare reflection calls cannot cover persistent differences in ordinary selection. Constant-scale discrepancies on a positive fraction of rounds are incompatible with a sublinear resource envelope at fixed $\bar\varepsilon,c_{\min}$.

Coverage attempts, delays and hard budgets must be feasible in the same instance. Corollary~\ref{cor:instance} supplies a concrete native-transfer witness using direct deployment certificates. The stronger full-state range contracts of the local-resource variant still require their own justification. Corollary~\ref{cor:joint-budget} derives feasible quotas under quality-dependent coverage and cost contracts. Neither result turns separately desirable exponents into a universal model guarantee. Trial resource use must follow from the same legal design.

For the separate sparse benchmark, divide $q^3T\le mB\log B$ by the positive $qT$ to obtain
\begin{equation}
 q^2\le\frac{mB\log B}{qT}.
 \label{eq:v19-sparse}
\end{equation}
The finite inequality is \texttt{sparse\_normalized}. It shows why the sparse-family sufficient rate condition requires vanishing observation probabilities. It is not an impossibility result for fixed positive observation probability. The public exogenous registration schedule also does not establish necessity of every endogenous native certificate.

\section{Related work and research connections}
\label{app:related}
\paragraph{Availability and generation.}
Sleeping bandits address availability~\citep{saha2020,nguyen2024}, ballooning models address arriving arms~\citep{ghalme2020}, and growing-expert tracking aggregates expanding pools~\citep{mourtada2017}. Generative action sets use a context-conditioned oracle, permanent reuse, quadratic mismatch and a doubly-optimistic creation-selection controller~\citep{xu2026gas}. Our Theorem~\ref{thm:local} connects causally usable policies to complete native executions through delay, quality and conditional failure. It permits dependent trials. Strongly adaptive regret offers an interval-based perspective~\citep{daniely2015strongly}, while our episodes partition the available-to-aligned endpoint difference.

\paragraph{History, feedback, and switching.}
Classical bandit theory connects specified controllers and observation laws to regret~\citep{lattimore2020,bubeck2012}. Policy regret~\citep{arora2012}, memory-dependent losses~\citep{anava2015,cesabianchi2013}, and Markov games~\citep{nguyentang2025} ground history-responsive comparisons. Our transfer retains the native operation-selection kernel. Theorem~\ref{thm:stability} bounds full-state discrepancies along actual resource paths. Theorem~\ref{thm:local} combines this charge with local generation bounds through ordered score endpoints.

Switching-cost bandits~\citep{dekel2014switch,rouyer2021}, feedback-sensitive bounds~\citep{cheng2023feedback}, and feedback graphs~\citep{alon2015} characterize the role of observation and action-change protocols. Our reference analysis specifies accepted-score calibration and proves the associated observation moments. The supporting model of Theorem~\ref{thm:benchmark} uses independent primitive erasures and paid public growth, giving near-matching bounds in the stated sparse regime with common learner, comparator, and cost conventions.

\paragraph{Mechanized learning guarantees.}
\citet{merten2018} verified multiplicative weights and composed distributed-system guarantees in Coq. Our Lean artifact connects native execution, growing selection, observation, generation, deployment and completed-task clocks. It verifies concrete native instances and a joint quality-budget design. Declaration mappings identify the contracts and endpoints of all thirteen numbered results.

Rule-card identities select action kernels, connecting exponential weighting to execution semantics. Full-state stability and conditional failure bounds connect local reflection to total loss. Table~\ref{tab:roles} summarizes these component roles.

\paragraph{Connection to native-agent evaluation.}
Empirical studies of native LLM online play~\citep{park2025} motivate analysis of the controller used by the agent. Theorems~\ref{thm:stability} and~\ref{thm:local} connect local generation reliability and full-state deployment stability to the complete native objective. Their joint control determines when the conditional bound is informative.

Off-policy evaluation~\citep{dudik2011} supports analysis of logged decisions. Our reference process additionally requires an identified history law, observation support and accepted-score calibration. Appendix~\ref{app:application} connects these operational requirements to the transfer interfaces.

\section{Repeated growth with imperfect validation}
\label{app:imperfect-growth}
This second native instance makes validation and subsequent selection genuinely fallible. New context-specific policies are generated in several phases. A false acceptance can overwrite a previously correct native choice and cause errors throughout the remaining phase. The controller follows a recency rule, never runs the reference Bandit, and has no deterministic repair prompt. The construction leaves Theorems~\ref{thm:native} and~\ref{thm:local} unchanged. It provides a sharper direct calculation and separately verifies compatibility with the existing transfer interfaces.

\subsection{A frozen generator and an imperfect executable validator}
Fix integers $E,k,m,N\ge1$ and $T=E(k+N)$. There are $E$ public task types, presented in consecutive phases. In phase $e$, each deployment task has input $x=1$ and requests output one. Candidate programs are immutable and type-tagged. A correct type-$e$ program returns $x$ on type $e$ and zero on other types. An incorrect candidate returns zero. The initial fallback also returns zero. Correct programs for different types induce different action kernels. Previously registered cards remain in memory, but the legal execution interface calls only current-type cards or the fallback.

The same frozen generation kernel, queried with the current type, independently produces a correct or incorrect candidate with probability $1/2$. No quality label is returned to the controller. A generated handle names immutable content and its version. Each attempt has a fresh handle, even when its behavior duplicates another candidate. Semantic growth across types is distinguished below from the raw number of registrations.

The validator tests a candidate on $m$ independent fair inputs in $\{0,1\}$ and accepts only when all outputs match their inputs. A correct current-type program always passes. An incorrect program passes only when every probe is zero. Thus its conditional false-acceptance probability is
\begin{equation}
 \beta_m=2^{-m}>0.
 \label{eq:imperfect-beta}
\end{equation}
This is an imperfect finite-test validator, not a complete program verifier. The sampling protocol is fixed. The native procedure receives candidate handles, test inputs and outputs, and the acceptance decision. It never receives the latent correct/incorrect label. All procedures use the same generation and validation operations. The environment in this instance is deterministic apart from the specified generation and validation randomness.

For analysis, a completed attempt has one of three outcomes, denoted $0,1,2$, with probabilities
\begin{equation}
 w_m(0)=\tfrac12,\qquad w_m(1)=\tfrac{\beta_m}{2},\qquad
 w_m(2)=\tfrac{1-\beta_m}{2}.
 \label{eq:imperfect-outcomes}
\end{equation}
Outcome zero is a correct accepted card, one an incorrect accepted card, and two a rejection. These probabilities follow by pushing the fair generation bit and the actual $m$ probe bits through the validator. Outcomes zero and one have the same acceptance signal. The controller can therefore use the acceptance bit without observing this three-way analysis label.

\subsection{Native execution, retention, and hard resources}
In each phase, the native learner executes the fallback for the first $k$ tasks and performs one generation attempt after each task. Every accepted candidate is immediately registered and retained. Each acceptance updates a current-type pointer to that newest handle. Rejections preserve the pointer. The following $N$ tasks execute the pointer, or the fallback if none was accepted. The pointer is not repaired during this suffix, even when a task reveals an error. This is an explicit native batching rule, not a claim about every reflective system.

Registration does not force immediate deployment. The batch choice pays for its $k$ preparation tasks. The interface still permits other native schedules, including immediate use after a completed registration and a new predictable draw. The construction does not impose a block-boundary restriction on the general model.

Generation costs one unit, each probe costs one unit, and each accepted registration costs one unit. Before initiating an attempt, its worst-case cost $m+2$ is reserved. Generation and probe costs of failed attempts remain charged. Any unused registration reserve is released only after the attempt completes. Every prefix, including an unfinished attempt, then satisfies
\begin{equation}
 C_A(h)\le\sum_{e<E}(m+2)u_e(h)
 \le Ek(m+2)=B_T,\qquad 0\le u_e(h)\le k.
 \label{eq:imperfect-hard}
\end{equation}
Here $u_e(h)$ is the number of initiated attempts in phase $e$. No registration is possible before initiation. The common prefix cap is $B_T$, available from the start. Stronger time-varying resource schedules would require their own feasibility checks. Resources constrain legal execution and are not added a second time to the task loss.

The comparator class $\mathcal M$ is any nonempty class of complete legal hard-feasible procedures with the same task scorer. For the separation from no reflection, require that it contains the following proactive procedure. At the start of each announced phase, it completes the same $k$ random generation and validation attempts, uses the same latest-accepted rule, and then executes all $k+N$ tasks. It pays the same worst-case budget and has neither a stronger generator nor a perfect validator. Its observations and memory are generated on its own execution, not copied from the learner.

\begin{proposition}[Repeated native growth with imperfect validation]
\label{prop:imperfect-growth}
The specified native procedure is hard-feasible under~\eqref{eq:imperfect-hard}. Let $q_{m,k}$ be the probability that its phase pointer is incorrect after $k$ attempts, including the fallback. Then
\begin{equation}
 q_{m,0}=1,\qquad
 q_{m,j+1}=\tfrac{\beta_m}{2}+\tfrac{1-\beta_m}{2}q_{m,j},\qquad
 2^{-k}\le q_{m,k}\le2^{-k}+\beta_m.
 \label{eq:imperfect-recursion}
\end{equation}
The complete own-law objective satisfies
\begin{equation}
 J_T(A)=E[k+Nq_{m,k}],\qquad
 R_T(A,\mathcal M)\le E[k+N(2^{-k}+2^{-m})].
 \label{eq:imperfect-bound}
\end{equation}
The expected number of task types for which a genuinely correct retained policy has been generated is $E(1-2^{-k})$. If $k,m\ge2$ and the proactive comparator belongs to $\mathcal M$, the no-reflection procedure has regret at least $T/2$. With $E=k=m\to\infty$ and $N=k^3$, both positive normalized native regret and normalized hard resources tend to zero.
\end{proposition}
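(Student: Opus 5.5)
Hard feasibility comes directly from the reservation rule. At any legal prefix $h$, each initiated attempt in phase $e$ has reserved $m+2$ units before any unfinished work. The realized cost of a completed attempt is at most this reserve: one generation unit, $m$ probe units, and at most one registration unit. Each phase initiates at most $k$ attempts, so $C_A(h)\le\sum_e(m+2)u_e(h)\le Ek(m+2)=B_T$, which is~\eqref{eq:imperfect-hard}.

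The pointer recursion follows from~\eqref{eq:imperfect-outcomes}. Since the generation bit and the probe bits are fresh for each attempt, the attempt outcome is independent of the completed prefix. It has the same law on every history. After one more attempt, the pointer is incorrect exactly in two cases. Either the outcome is $1$ (an incorrect card accepted, overwriting the pointer), with probability $\beta_m/2$. Or the outcome is $2$ (a rejection, pointer kept) and the pointer was already incorrect. This gives $q_{m,j+1}=\beta_m/2+\tfrac{1-\beta_m}{2}q_{m,j}$, with $q_{m,0}=1$ because the fallback is wrong. I expect the two-sided bound to follow by induction:
\begin{itemize}
\item Lower bound: $q_{m,j+1}\ge\tfrac{1-\beta_m}{2}q_{m,j}+\tfrac{\beta_m}{2}q_{m,j}=q_{m,j}/2$, using $q_{m,j}\le1$.
\item Upper bound: $q_{m,j+1}\le\beta_m/2+\tfrac12(2^{-j}+\beta_m)\le2^{-(j+1)}+\beta_m$.
\end{itemize}

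Loss in each phase is $1$ on the $k$ fallback tasks. On the $N$ suffix tasks it is $\mathbf1\{\text{pointer incorrect}\}$, because the pointer is never repaired and the output of a correct current-type card is correct. Summing the task losses and taking expectations under the recursively generated product law gives $J_T(A)=E[k+Nq_{m,k}]$. This mirrors the \texttt{task\_sum}/\texttt{execution\_identity} pattern of Appendix~\ref{app:concrete}. For the regret bound, comparator losses are nonnegative, so $\inf_\mu J_T(\mu)\ge0$ and $R_T\le J_T(A)$; the upper bound on $q_{m,k}$ then gives~\eqref{eq:imperfect-bound}.

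The remaining claims are checked directly:
\begin{itemize}
\item \emph{Correct retained policy.} Such a policy exists for a type unless all $k$ generated bits are incorrect. By linearity over the $E$ types, the expected number of types with one is $E(1-2^{-k})$.
\item \emph{Separation from no reflection.} No reflection has loss $T$. The proactive comparator runs the same attempts before the phase and then incurs $(k+N)q_{m,k}$ per phase. With $k,m\ge2$, $q_{m,k}\le1/4+1/4=1/2$, so its loss is at most $T/2$. Hence no reflection has regret at least $T-T/2=T/2$.
\item \emph{Asymptotics.} Take $E=k=m$ and $N=k^3$, so $T=k(k+k^3)\sim k^4$. The normalized regret is at most $\frac{k^2+2k^4\,2^{-k}}{k^2+k^4}\to0$. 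The normalized resources are $k^2(k+2)/T\to0$.
\end{itemize}

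The main obstacle I expect is justifying the independence underlying the recursion. The outcome law must be the same on every completed history, even though earlier acceptances change the library and the pointer. I would handle this with the one-step expansion of the normalized product law, as in~\eqref{eq:failurestep}, observing that the generator and validator kernels do not read the memory.
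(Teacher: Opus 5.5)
Your proposal is correct and follows the paper's proof essentially step for step. The shared steps are the $(m+2)$-unit reservation bound on every prefix, the one-step conditioning on the fresh attempt outcome that yields the pointer recursion, the per-phase loss of $k$ plus $N$ times the pointer-error indicator, the nonnegative comparator infimum, the proactive comparator with $J_T(\mu_{\rm pro})=Tq_{m,k}\le T/2$, and the $E=k=m$, $N=k^3$ family. The differences are cosmetic: you obtain $q_{m,k}\ge2^{-k}$ from the halving inequality $q_{m,j+1}\ge q_{m,j}/2$ rather than the paper's observation that the pointer must be wrong when all $k$ generated candidates are incorrect, and your upper-bound induction drops the factor $1-\beta_m$ instead of expanding it exactly.
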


\subsection{Proof from the actual execution law}
Let $H_j$ be the completed outcome history in one phase. Its law is recursively generated from~\eqref{eq:imperfect-outcomes}. Let $Z_j(H_j)\in\{0,1\}$ indicate whether the selected pointer is incorrect. Initially $Z_0=1$. An outcome-zero acceptance sets it to zero, an outcome-one acceptance sets it to one, and a rejection retains the old value. Importantly, the control decision in both acceptance cases is identical, namely point to the newest handle. Only analysis of the program output distinguishes the cases.

To connect this recursion to executable program content, give attempt $j$ the local handle $j$ and pair it with its phase tag. The registry stores accepted handles without deleting older entries. Induction on $j$ proves that every nonempty pointer names an accepted handle with index less than $j$. A second induction proves that the correctness of the program at that pointer is exactly $1-Z_j$. Rejection preserves both the old pointer and the immutable content at its index. Acceptance points to the new content, irrespective of its latent quality. Thus $Z_j$ is the actual program loss at deployment input one, not an independently assigned surrogate score. The same registry induction proves that a correct retained program exists exactly when at least one outcome-zero attempt has occurred.

Conditioning on the next outcome $O_{j+1}$ after the completed prefix gives
\begin{equation}
 \mathbb E_{O_{j+1}\mid H_j}[Z_{j+1}]
 =\tfrac{\beta_m}{2}+\tfrac{1-\beta_m}{2}Z_j.
 \label{eq:imperfect-step}
\end{equation}
Integrating this equality yields the recursion in~\eqref{eq:imperfect-recursion}. This computation derives $q_{m,j}$ from the native pointer update rather than assuming a small selection error. For the upper bound, suppose $q_{m,j}\le2^{-j}+\beta_m$. Since $0\le\beta_m\le1$,
\begin{align}
 q_{m,j+1}
 &\le\tfrac{\beta_m}{2}+\tfrac{1-\beta_m}{2}(2^{-j}+\beta_m)
 \label{eq:imperfect-induction-a}\\
 &=2^{-(j+1)}+\beta_m
   -\beta_m2^{-(j+1)}-\tfrac{\beta_m^2}{2}
 \le2^{-(j+1)}+\beta_m.
 \label{eq:imperfect-induction-b}
\end{align}
The initial inequality holds because $q_{m,0}=1\le1+\beta_m$. If no correct candidate was generated, the pointer must be incorrect. This event has probability $2^{-k}$, proving the lower bound in~\eqref{eq:imperfect-recursion}.

For every realized phase history, its actual sum of task scores is $k+NZ_k(H_k)$. Summing over the recursively generated law gives $k+Nq_{m,k}$. The full execution retains old libraries but each current-type kernel ignores their contents, so composing the phase laws gives the first equality in~\eqref{eq:imperfect-bound}. Equivalently, linearity of expectation applied to the actual phase rewards gives the same identity. Since every complete comparator has nonnegative common task loss, its own-law expected loss is nonnegative. Subtracting the nonnegative infimum proves the regret upper bound. This argument applies to the original legal comparator class, rather than granting the comparator an unavailable card or a zero-loss oracle.

Every generated correct card is accepted and retained. Consequently a type has a correct retained policy exactly when at least one of its $k$ generation bits is correct. The expected number of such types is $E(1-2^{-k})$. This counts semantically different type-specific policies, not duplicate handles within one type. The expected total number of accepted registrations, including duplicates and incorrect cards, is separately $Ek(1+\beta_m)/2$.

The proactive comparator uses the same random pointer distribution but performs preparation before its tasks. Its own-law objective is
\begin{equation}
 J_T(\mu_{\rm pro})=E(k+N)q_{m,k}=Tq_{m,k}.
 \label{eq:imperfect-proactive}
\end{equation}
For $k,m\ge2$, the upper bound on $q_{m,k}$ is at most $1/2$. The no-reflection procedure has loss $T$ and therefore regret at least $T-J_T(\mu_{\rm pro})\ge T/2$. This comparator is another noisy reflective procedure with its own generation history, not an omniscient policy. All resource prefixes of both procedures obey the same cap.

\subsection{Why the validation error is not an instantaneous charge}
Let $M_k$ indicate that no correct card was generated. Define a clean analysis execution that, after the same preparation history, uses a retained correct card if one exists. Then $M_k\le Z_k$ for every history, and
\begin{equation}
 \mathbb E_{H_k}[M_k]=2^{-k},\qquad
 0\le\mathbb E_{H_k}[Z_k-M_k]\le\beta_m.
 \label{eq:imperfect-gap}
\end{equation}
The clean execution is not the deployed controller and is not assumed to run a perfect validator. It is an intermediate comparison object. Its advantage over the native pointer is proved from the actual three-outcome law and charged explicitly.

For example, a correct acceptance followed by an incorrect acceptance has probability $\beta_m/4$. The correct card remains registered, but the native pointer now calls the incorrect one. In fact, for two attempts the excess error is exactly
\begin{equation}
 q_{m,2}-2^{-2}=\beta_m/4>0.
 \label{eq:imperfect-positive}
\end{equation}
Thus an available correct policy does not make the native selection bridge free. A wrong pointer remains active throughout $N$ suffix tasks, so the deployment edge is at most $N\beta_m$ per phase. Counting only $\beta_m$ would miss this continuation effect. The growing family below compensates by improving validation as the suffix grows. It does not assume that the global continuation-value range is one.

\subsection{Connection to the unchanged transfer theorem}
Condition on a completed phase history before constructing any reference draw. Use the same two-arm reference recursion as Appendix~\ref{app:concrete}, with losses one and $M_k$. The second arm represents an actually retained correct card when one exists, and an incorrect fallback otherwise. The clean native suffix has loss $NM_k$, which is no larger than the reference expected selection loss. The reference initialization, feedback law, code $\log2$, and second-moment budget $2N$ are the previously proved ones. The finite controller form of Theorem~\ref{thm:native} therefore has the following phase certificates
\begin{equation}
 F=k,\qquad G=N2^{-k},\qquad E_A\le N\beta_m,\qquad
 E_C=D_A=D_C=E_{\rm nf}=0,
 \label{eq:imperfect-ledgers}
\end{equation}
with response and switch charges zero inside this phase comparison. The all-correct ideal path has value zero, a valid lower comparison endpoint for any nonnegative own-law comparator value. It is not claimed to be a feasible zero-loss comparator. The direct calculation avoids paying the additional reference selection bound, whereas transfer gives the valid, generally looser inequality
\begin{equation}
 R_T(A,\mathcal M)\le
 E\left[k+N(2^{-k}+2^{-m})+\frac{\log2}{\eta}+\eta N\right],
 \qquad \eta>0.
 \label{eq:imperfect-transfer}
\end{equation}
Each phase uses an independently constructed reference analysis after its preparation has completed. These restarts do not reset the native library. We do not silently invoke a single two-arm fixed-library process across all growing phases.

More explicitly, if $w_{m,k}(h)$ is the phase history law and $Q_{\eta,h}(g)$ the actual reference recursion law, define $\Gamma(h,g)=w_{m,k}(h)Q_{\eta,h}(g)$. Summing over $g$ returns $w_{m,k}(h)$, and the native task score does not depend on $g$. Thus the reference extension preserves the complete native phase marginal. Its phasewise composition preserves the full native execution used above. Comparator objectives retain their respective own laws. This is a causal auxiliary construction, not offline reconstruction from unsupported action logs.

The three costs in~\eqref{eq:imperfect-ledgers} have different endpoints. Preparation losses occur before the suffix. Generation loss is the suffix failure event $M_k=1$. Additional deployment loss is the event $Z_k=1,M_k=0$. These last two events are disjoint. Failed validation and candidate registration consume resources in~\eqref{eq:imperfect-hard}, not an additional loss penalty. This prevents charging the same false acceptance as both generation failure and deployment discrepancy.

\subsection{An explicit family with growing libraries and sublinear resources}
Take $k\ge2$, $m=E=k$ and $N=k^3$. Then
\begin{equation}
 T=k^4+k^2,\qquad B_T=k^2(k+2),\qquad
 \frac{[R_T(A,\mathcal M)]_+}{T}\le k^{-2}+2^{1-k},\qquad
 \frac{B_T}{T}\le k^{-1}+2k^{-2}.
 \label{eq:imperfect-family}
\end{equation}
Both right sides tend to zero. The first follows by dividing~\eqref{eq:imperfect-bound} by $T\ge k^4$. The resource calculation uses the same denominator, not a different budget design. The expected number of distinct repaired task types is $k(1-2^{-k})$, which grows without bound. For the transfer route, choosing $\eta=1/k$ adds at most $(\log2)k^{-2}+k^{-1}$ to the normalized direct bound, also tending to zero.

The guarantee is a horizon-indexed family, not an anytime claim. The restrictions are explicit, including independent template sampling, one-sided validation noise, public type-local phases and no cross-type effect on future generation. The instance supplies repeated semantic growth and persistent imperfect-validation errors within these restrictions. It does not establish joint tightness of the general upper bound or automatic guarantees for arbitrary language-model validators.

\subsection{Formal correspondence}
Module \texttt{ImperfectGrowth} connects the native registry and program loss to the probability model through \texttt{actual\_execution}. Its \texttt{actual\_regret} and \texttt{actual\_no\_regret} prove the bound and limit, allowing horizon-dependent comparator classes. The \texttt{full\_transfer} proof invokes the unchanged transfer through the phase certificates. Appendix~\ref{app:map} and the declaration mapping cover the intermediate validation, retention and resource proofs. This verifies the specified mathematical program, not deployed language-model code. Reference learning is not necessary for the stronger direct calculation.

\end{document}